\documentclass{article}

\usepackage[preprint]{tmlr}

\usepackage{amsmath, amssymb}
\usepackage{amsthm}
\usepackage{graphicx}
\usepackage{booktabs}
\usepackage{float}
\usepackage{placeins}
\usepackage{xcolor}
\usepackage{tikz}
\usetikzlibrary{positioning,arrows.meta,calc}
\usepackage[colorlinks=true,linkcolor=blue!60!black,citecolor=blue!60!black,urlcolor=blue!60!black]{hyperref}
\usepackage{url}

\newtheorem{lemma}{Lemma}
\newtheorem{theorem}{Theorem}
\newtheorem{corollary}{Corollary}
\newtheorem{proposition}{Proposition}

\title{Measuring Learned Monotone Temporal Aggregation at Matched Admissibility}
\author{\name Yew Lee Tan\thanks{This work was conducted in a personal capacity; the views expressed are the author's own.} \email yewlee001@e.ntu.edu.sg}

\begin{document}
\maketitle

\begin{abstract}
Regulated risk scoring imposes directional constraints on how scores respond to risk drivers; this paper adopts their strict per-input form---the score monotone non-decreasing in every exposure input---as a normative commitment. The monotone core of the deployed pipeline already satisfies it: hand-crafted monotone temporal aggregates feeding sign-constrained gradient boosting are monotone end-to-end by composition. The constrained-versus-unconstrained comparisons standard in the monotone-network literature therefore price a guarantee this incumbent has for free. The right evaluation holds admissibility fixed on both sides of every comparison and measures what \emph{learning} the temporal aggregation is worth; we construct that evaluation and report its results. Its central finding is a subsumption boundary, measured by direct functional regression: a learned monotone recurrent channel reproduces the geometrically weighted separable family of hand-crafted statistics, one channel per member, to Spearman $\rho \ge 0.996$ ($1.000$ for every decay, on- and off-grid), approximates trailing-window and windowed-extremal statistics with measurable ceilings (the trailing-window tier priced at ${\sim}0.008$ AUC), and fails at consecutivity ($\rho = 0.924$) and time localization ($0.628$)---both structural---and at the exposure floor ($0.829$), a learnability boundary. Each failure is repaired by one explicit admissible basis with a one-line monotonicity proof, restoring rank correlation $1.000$ and returning the matching adversarial regime to its Bayes ceiling. Learned and engineered monotone aggregation are therefore substitutes on separable structure, and on mechanisms in or near that family the learned channel is never statistically behind at full sample size and specified capacity. Each of its advantages holds against a specific incumbent: a committed grid pays up to $0.019$ AUC in decay regions it leaves uncovered (no more than $0.004$ separates the learned channel from the strongest engineered consumer at any swept point); the highest-dimensional comparator degrades fastest as training data becomes scarce; and under stress beyond the training support, grid-fed tree-ensemble scores go flat while a strictly increasing head keeps ranking. No single incumbent is dominated on all three axes. The instrument that makes these measurements possible is a recurrent architecture whose state consists of classical risk statistics---an exponentially weighted moving average and a high-water mark with learned monotone transforms---monotone by construction in every input and per MC-dropout sample.
\end{abstract}

\section{Introduction}
\label{sec:intro}

Exposure time series arise in many economic and operational settings (e.g., utilization, capacity burden, or limit usage over time), where larger exposures are expected to correspond to greater risk. In regulated risk scoring settings, this relationship is enforced through directional constraints---in deployed practice, typically at the level of scorecard features rather than of every raw input. This paper studies the strict per-input form: increasing any exposure input $x_t$ should never decrease the resulting score. Supervisory guidance and validation practice require risk models to be conceptually sound and explainable \citep{fed2011_sr11_7,occ2011_12}, and a monotone exposure--score relationship is a standard way to meet that expectation; the per-input form is a normative commitment this paper adopts and defends (Section~\ref{sec:setup}), not a regulatory mandate. Once adopted, the constraint is normative rather than statistical---a model that violates it is inadmissible regardless of its predictive accuracy---so it must hold exactly, for every input, not merely on average over a validation set.

Deployed practice already meets this constraint on its monotone core. Credit pipelines collapse exposure histories into windowed aggregates---trailing means and maxima, breach counts, last values---of the kind packaged as bureau trended-data attributes \citep{experian_trended3d,vantagescore4_userguide}, and feed them to a scorecard model, increasingly a gradient-boosted ensemble with per-feature monotone constraints \citep{chen2016xgboost,ke2017lightgbm,lightgbm_monotone}. Each such aggregate is monotone in every exposure input, and the constrained ensemble is monotone in every feature, so the pipeline is monotone end-to-end by composition. The guarantee, by itself, is available to engineered features today.

This fact has a consequence for evaluation that the monotone-network literature has not absorbed. The standard experimental design compares a constrained model against an unconstrained one and reports the accuracy cost of the constraint. In a setting where the incumbent can be monotone end-to-end at no cost, that comparison measures the wrong quantity: it prices a guarantee the deployment gets for free either way. The comparison that matters holds admissibility fixed on both sides---every model in it provably monotone---and measures the one thing that actually differs between the learned and the engineered pipeline: whether the temporal aggregation is \emph{learned} or fixed ex ante by an analyst. Every window length, decay, and threshold in an engineered grid is hand-chosen; a learned monotone channel can adapt them to the data. We call this design \emph{matched admissibility}, and this paper builds and runs it.

Run that way, the evaluation returns a sharper answer than a scoreboard. Its central finding is a \emph{subsumption boundary}, established constructively by direct functional regression (experiment E11): a single learned monotone recurrent channel spans the geometrically weighted, time-separable family of hand-crafted statistics---EWMAs at every decay, on-grid and off-grid, cumulative sums, breach counts, cumulative excesses, last values---to rank equivalence. Learned monotone features are therefore substitutes for the individual separable statistics, not complements (one channel per member; E8 supplies the set-level counterpart on the core regimes). Trailing-window and windowed-extremal statistics outside the family are approximated with a measurable ceiling, and the spanning fails at three places: consecutivity (the longest run of threshold exceedances) and time localization (an early-window statistic), both structural, and the exposure floor (the running minimum), a learnability boundary. Each failure is repaired by one explicit admissible basis with a one-line monotonicity proof, demonstrated by rerunning the emulation and the adversarial-regime ladder with the matching channel added. The predictive experiments locate where the learned channel's adaptivity pays: decay regions the chosen grid leaves uncovered, scarce training data against the highest-dimensional comparator, and stress beyond the training support, where piecewise-constant tree ensembles over few engineered features go flat while a strictly increasing head keeps ranking.

The instrument that makes these measurements possible is an architecture, and we present it as an instrument rather than as the primary contribution: the contribution is the evaluation and the boundary; the architecture is claimed only in the role that makes them measurable. Guaranteeing monotonicity for a sequence-to-scalar mapping is nontrivial: recurrence introduces path dependence, local non-monotone effects can accumulate across time, and generic recurrent architectures may violate monotonicity even when trained on data generated from monotone mechanisms \citep{elman1990finding,hochreiter1997lstm}. Soft remedies---monotonicity penalties in the loss, or post-hoc adjustments---reduce violations empirically but provide no guarantee, and the residual violations are input-dependent and difficult to audit. Modern deployments additionally require uncertainty estimates, commonly via MC dropout or deep ensembles \citep{gal2016dropout,lakshminarayanan2017deep_ensembles}; if monotonicity is enforced only on average or only for a single fitted instance, sampled models may violate it, making uncertainty summaries semantically inconsistent with monotone scoring. A useful guarantee must survive stochastic inference: every sampled model must itself be monotone.

Following the theory-guided design principle of embedding known invariants in the hypothesis class \citep{daw2019pga_uq}, our instrument maintains an economically interpretable two-dimensional state: a recency-weighted exposure level, through an exponentially weighted moving average (EWMA), and a historical extreme, through a high-water mark (HWM). Both are classical risk statistics with long operational histories \citep{hyndman2002ets,goetzmann2003hwm}, and both are monotone in every input. The input transforms and the decay are learned within an admissible monotone class, and the final score is a monotone function of the state, so monotonicity holds by construction---for every input, under every admissible configuration, and per MC-dropout sample (Corollary~\ref{cor:mcdropout}). The state is not a pair of fixed, handcrafted features: no monotone model over static aggregates matches the recurrent state on recency-driven mechanisms (E2), and the learned transforms and decay recover the generating mechanism rather than merely consuming it (E6).

\paragraph{Claims.}
The empirical study is organized around eight explicit claims, each with one primary experiment (Section~\ref{sec:design}; the robustness claim's two halves, C4a and C4b, each have their own):
\begin{itemize}
  \item \textbf{C1 (by construction).} The proposed architecture is monotone by construction: zero violations under deterministic inference \emph{and} per stochastic forward pass under MC dropout, whereas unconstrained and penalty-trained baselines violate.
  \item \textbf{C2 (temporal state is necessary).} Monotonicity alone is not enough: a monotone head over static aggregates (order-invariant summaries plus the last value) cannot represent the recency-decay state and is predictively insufficient on recency-driven mechanisms.
  \item \textbf{C3 (little or no predictive cost).} Across the core label regimes, the constrained model matches unconstrained sequence baselines, with any regime-specific deficits quantified rather than averaged away.
  \item \textbf{C4a (robust to distribution shift)} and \textbf{C4b (robust to missingness).} The guarantee persists exactly under ordering-preserving covariate shift and under missing data with standard imputations, with predictive degradation quantified per model class rather than averaged away.
  \item \textbf{C5 (interpretable).} With learned transforms, the model recovers the generating decay $\alpha$ and the peak transform $g_2$; we characterize the extent to which the recency transform $g_1$ is identifiable.
  \item \textbf{C6 (design-insensitive guarantee).} Zero violations hold across every admissible configuration of $g_1$, $g_2$, and $\alpha$; accuracy varies by at most $0.0351$, concentrated in the recency channel's known optima, with no catastrophic configuration.
  \item \textbf{C7 (basis sufficiency).} The EWMA+HWM pair is near-sufficient on the core regimes; adversarial regimes built around each remaining admissible basis map the sufficiency boundary, which lies at non-separable and time-localized statistics rather than at any separable accumulation the learned recurrence can emulate.
  \item \textbf{C8 (learned vs.\ engineered aggregation).} Against monotone consumers of hand-crafted temporal feature grids---including monotone-constrained gradient boosting over practice-style and densified grids and over raw lags---the learned channel matches in-distribution at full sample size on the core regimes, and earns its advantage where a committed grid cannot adapt: under decay misspecification of that grid, under scarce training data against the highest-dimensional comparator, and under ordering-preserving stress beyond the training support, where grid-fed tree-ensemble scores saturate, while replacing a multi-column feature library with one auditable channel.
\end{itemize}
The claims are tested on a controlled synthetic suite with known monotone mechanisms, because misspecification distance, Bayes ceilings, and per-basis adversaries are observable only in that setting.

\paragraph{Contributions.}
\begin{itemize}
  \item \textbf{The matched-admissibility evaluation.} A claims-driven experimental design (C1--C8, experiments E1--E10 and the emulation study E11) in which every comparison against the engineered incumbent runs with both sides provably monotone, so that any measured difference is the value of learning the aggregation rather than the value of the constraint. The design includes known monotone mechanisms, adversarial per-basis regimes, severity-swept shifts, missingness grids, and misspecification sweeps whose distance from the engineered grid is exactly known.
  \item \textbf{The subsumption boundary} (E8, E11): a functional-emulation methodology that shows by direct regression which hand-crafted statistics a learned monotone channel spans (the separable family, member by member, to rank equivalence) and where it fails, converting an expressiveness worry into a concrete audit question with admissible repairs.
  \item \textbf{The instrument}: a theory-guided recurrent architecture with an interpretable classical-statistics state (EWMA and HWM with learned monotone transforms and decay), monotone by construction and per MC-dropout sample, together with a library of ten admissible bases with one-line proofs.
\end{itemize}

\paragraph{Paper organization.}
Section~\ref{sec:related-work} reviews related work. Section~\ref{sec:method} formalizes the problem and presents the architecture and its monotonicity guarantees. Section~\ref{sec:data} describes the synthetic data-generating processes: label regimes and data protocols. Section~\ref{sec:design} specifies the experimental design---models, test operators, and the mapping of each claim to one experiment. Section~\ref{sec:results} presents the results, and Section~\ref{sec:discussion} discusses limitations and future work. The paper's length is spent on auditability rather than breadth: each claim is small and carries its full evidence, and a reading guide opens Section~\ref{sec:results}.

\section{Related Work}
\label{sec:related-work}

\subsection{Monotone neural networks}
\label{sec:rw-monotone-nn}

Enforcement of monotonicity in neural networks has a long history, on two tracks: soft penalties---\emph{monotonicity hints}---that discourage violations during training \citep{sill1997hints}, and architectural constraints that exclude them. Early architectural work constrained feed-forward networks to non-negative weights with monotone activations, establishing that the constrained class remains a usable function approximator for classification problems with known input--output orderings \citep{archer1993monotonic}. Min--max networks compute a maximum over minima of linear functions with sign-constrained weights and are universal approximators of multivariate monotone functions \citep{sill1997monotonic}; later work extended this line to partially monotone settings, where only a subset of inputs carries the constraint \citep{daniels2010monotone}, and a smooth reformulation recently repaired the original construction's trainability \citep{igel2024smooth}. Lattice-based methods take a different route: monotonic calibrated interpolated look-up tables \citep{gupta2016lattice} and their deep extension, deep lattice networks \citep{you2017deep}, impose monotonicity through linear inequality constraints on interpolation parameters and have been deployed in production ranking systems. More recent approaches expand expressiveness while retaining the guarantee. Unconstrained monotonic neural networks parameterize a function through a strictly positive integrand and obtain monotonicity by integration \citep{wehenkel2019umnn}; certified monotonic networks train unconstrained networks and verify monotonicity post hoc via mixed-integer programming \citep{liu2020certified_monotonic}; counterexample-guided learning alternates training with SMT-based verification and repair \citep{sivaraman2020comet}. Combining saturated and non-saturated activations has been shown to remove the convexity bias that plagues naive non-negative-weight networks \citep{runje2023cmnn}, and the same line has since reached universal approximation with unbounded convex activations through alternating-saturation constructions \citep{sartor2025universal}; Lipschitz-constrained residual architectures obtain expressive monotone models through a different weight constraint \citep{kitouni2023expressive}, and scalable partially connected architectures extend constrained monotone learning to higher-dimensional inputs \citep{kim2024smnn}. On the theoretical side, the size and depth required for monotone networks with non-negative parameters to interpolate and approximate monotone data have been characterized, showing that depth restrictions carry real expressivity costs \citep{mikulincer2022size}.

All of these academic methods address \emph{static} inputs: a fixed-dimensional feature vector mapped through a constrained network. Industrial practice has gone one step further: weight-constrained monotonic recurrent networks for credit-risk indicators are patented by a credit-bureau assignee \citep{boardman2024mrnn}---the construction our baseline set evaluates as the Elman form of its generic monotone RNN (Section~\ref{sec:models}), and direct evidence that the governance setting demands the recurrent guarantee. What that line supplies is the constraint mechanism, not the rest: no interpretable classical-statistics state, no per-sample guarantee under stochastic inference, and no public evaluation against the engineered incumbent. Within the monotone-network literature, no architecture provides a recurrent state with interpretable temporal semantics for how the effect of an input at time $t$ persists, decays, or saturates over the remaining horizon. Our architecture fills this gap by restricting the recurrence itself to order-preserving updates, so that the guarantee composes across time as well as across layers. (A separate line of work uses ``monotone'' in the operator-theoretic sense: monotone operator equilibrium networks \citep{winston2020monotone} constrain fixed-point layers for well-posedness and stability, not for input--output order, and are orthogonal to the coordinatewise monotonicity required here.)

\subsection{Shape constraints and isotonic methods}
\label{sec:rw-shape}

A complementary statistical tradition estimates functions under order restrictions rather than constraining a network architecture: isotonic regression and its order-restricted inference framework \citep{barlow1972order}, its machine-learning use for mapping classifier scores to calibrated probabilities \citep{zadrozny2002isotonic}, and shape-constrained additive models with monotone or convex smooth components \citep{pya2015scam}. These methods differ from ours in where the constraint lives: shape-constrained \emph{estimation} restricts the fitted function after choosing a flexible family, whereas constrained \emph{architectures} restrict the hypothesis class so that every reachable parameter setting is admissible---what regulated deployment requires, since the guarantee must hold under retraining, fine-tuning, and stochastic inference, not only for one audited fit.

\subsection{Exponentially weighted and extreme statistics in sequential risk}
\label{sec:rw-ewma}

The two components of our recurrent state are classical sequential statistics with long operational histories. The EWMA originates in the exponential-smoothing forecasting literature \citep{brown1963smoothing,holt2004ewma,winters1960seasonal}, later unified in a state-space framework in which the smoothed level is an explicit latent state \citep{hyndman2002ets,hyndman2021fpp3}; the same recursion underlies the EWMA control chart \citep{roberts1959ewma} and cumulative-sum schemes \citep{page1954cusum} in process control, the RiskMetrics volatility methodology---which institutionalized a single decay parameter as a transparent, auditable summary of recency weighting \citep{riskmetrics1996}---and the exponentially discounted event history in the intensity of self-exciting point processes \citep{hawkes1971spectra}. Running extremes are equally established: the high-water mark is a contractual object in hedge fund management, with its economic consequences analyzed in closed form \citep{goetzmann2003hwm}. The closest learned-recurrence relative of our design is the hybrid exponential-smoothing recurrent model that won the M4 forecasting competition \citep{smyl2020hybrid}, which fits smoothing parameters jointly with a recurrent network; it pursues a forecasting objective and carries no monotonicity guarantee---precisely the two axes our setting fixes.

This lineage is load-bearing: the states $e_t$ and $m_t$ of Section~\ref{sec:architecture} \emph{are} these classical statistics, imported as the entire hidden state, so the architecture inherits their interpretability and monotonicity, and learning reduces to choosing $g_1$, $g_2$, $\alpha$, and a monotone combination $F$ rather than rediscovering recency weighting and peak memory from data.

\subsection{Theory- and physics-guided architectures}
\label{sec:rw-theory-guided}

Physics-informed and theory-guided machine learning embeds known scientific structure into learning systems, either as soft penalties on the loss or as hard constraints on the architecture. Physics-informed neural networks impose differential-equation residuals as training objectives \citep{raissi2019pinn}, and the broader programme of physics-informed machine learning surveys the spectrum from observational biases through loss penalties to architectural inductive biases \citep{karniadakis2021piml}. Most relevant to us is the architectural end of this spectrum: the physics-guided architecture of \citet{daw2019pga_uq} hard-codes a monotonic density--depth relationship into a lake temperature model and shows that, because the constraint is structural, it continues to hold under Monte Carlo dropout, whereas loss-based penalties do not survive parameter perturbation. Our work transfers this design principle from physical conservation-type laws to an \emph{order} constraint: monotonicity of an exposure score is enforced in the hypothesis class itself, and the same argument---structural constraints are invariant to dropout masks---yields our per-sample guarantee under stochastic inference (Corollary~\ref{cor:mcdropout}).

\subsection{Interpretable additive and tabular models}
\label{sec:rw-interpretable}

Generalized additive models decompose a prediction into per-feature shape functions \citep{hastie1986gam}, and their modern tree- and boosting-based instantiations achieve near state-of-the-art tabular accuracy while remaining inspectable \citep{lou2012intelligible,nori2019interpretml}. Neural additive models replace the spline or tree components with per-feature neural subnetworks \citep{agarwal2021nam}. In production tabular learning, gradient-boosted trees \citep{chen2016xgboost,ke2017lightgbm} support per-feature monotone constraints enforced during tree construction \citep{lightgbm_monotone}, which has made monotone boosting an increasingly standard component of credit modeling pipelines (alongside the binned logistic scorecards that still dominate much of regulated origination). Our learned transforms $g_1$ and $g_2$ play a role analogous to additive shape functions---each is a one-dimensional monotone function that can be plotted and audited, and experiment E6 tests whether they recover the generating shapes---but the analogy stops at the temporal dimension: additive and boosted models consume a static feature vector and are order-invariant in time, whereas our shape functions feed a recurrent state whose decay and peak semantics are part of the model's interpretation. One consequence of this pipeline structure shapes our evaluation: a constrained boosted ensemble over \emph{monotone temporal aggregates} is monotone end-to-end (Theorem~\ref{thm:score}), so the differentiator is the aggregation, which experiment E10 measures directly (Section~\ref{sec:models}).

\subsection{Missingness, distribution shift, and uncertainty in sequence models}
\label{sec:rw-robustness}

Deployed sequence models must remain trustworthy under incomplete data, shifted inputs, and stochastic inference, and each concern has its own literature. The formal treatment of missing data mechanisms (MCAR, MAR, MNAR) originates in the missing-data framework of \citet{rubin1976missing}; for sequences, GRU-D incorporates masking and time-decay terms directly into a recurrent cell to exploit informative missingness \citep{che2018grud}. Dataset shift has been taxonomized into covariate shift, label shift, and related forms \citep{quinonero2009shift}; our experiment E4 studies the specific subclass of ordering-preserving covariate shifts, under which a monotone architecture's guarantee is unaffected by construction. For uncertainty, MC dropout interprets dropout at test time as approximate Bayesian inference \citep{gal2016dropout}, and deep ensembles aggregate independently trained models \citep{lakshminarayanan2017deep_ensembles}; the quality of the resulting predictive probabilities is assessed with proper scores and calibration diagnostics \citep{brier1950verification,naeini2015bayesian_binning,guo2017calibration}. Interpretability in clinical sequence models has been pursued through attention, most prominently in RETAIN \citep{choi2016retain}, which explains predictions via learned per-visit and per-feature attention weights; such explanations are post hoc descriptions of an unconstrained model, whereas our interpretability is structural---the state variables themselves have fixed economic meanings that hold for every input. A common thread distinguishes our requirement from this literature: properties enforced in expectation or for a single fitted model do not automatically hold for each stochastic forward pass. Per-sample monotonicity under MC dropout, which our architecture provides, is strictly stronger than monotonicity of the averaged predictor.

\paragraph{Governance context.}
The evaluation design is motivated by credit-scoring governance, where monotone relationships between risk drivers and scores are an entrenched expectation \citep{thomas2017credit_scoring,crook2007consumer_credit}: a scorecard in which higher utilization lowers estimated risk would be rejected by validation teams regardless of its fit statistics, and the model risk management frameworks of SR 11-7 and OCC 2011-12 \citep{fed2011_sr11_7,occ2011_12} elevate that expectation to a requirement that must be demonstrated, documented, and preserved across model updates. Temporal information enters these pipelines as hand-crafted trended attributes \citep{experian_trended3d,vantagescore4_userguide}; notably, many trended attributes (utilization slopes, months since last delinquency, balance changes) are \emph{not} monotone in exposures and are therefore inadmissible under the constraint of Section~\ref{sec:setup}. Recent modeling work targets the same requirement with monotonic neural additive models for credit scoring \citep{chen2022mnam} and frames monotonicity as a first-class model-risk-management property \citep{chen2023monotonicity_mrm}, but operates on static feature vectors rather than exposure sequences. Learned temporal aggregation for credit itself is not new---recurrent and convolutional models over transaction and payment sequences outperform feature-engineered baselines in application scoring \citep{babaev2019etrnn} and mortgage-default prediction \citep{kvamme2018mortgage}---but that line is unconstrained: none of its models carries the monotonicity guarantee the governance setting requires, which is exactly the admissibility mismatch our evaluation removes. This is the setting in which admissibility must be matched before any comparison is informative.

\paragraph{Positioning.}
The literature provides monotone architectures for static inputs, patented industrial monotone recurrences without a public evaluation, shape-constrained estimators fitted post hoc, classical sequential statistics with monotone semantics but no learned scoring, and theory-guided architectures for physical invariants. What it does not provide is the evaluation this setting needs (Section~\ref{sec:intro}): to our knowledge, no prior work measures a monotone sequence model against the pipeline it would actually replace, at matched admissibility, with the misspecification distance between the learned and the engineered aggregation under experimental control. That evaluation---together with the recurrent instrument of Section~\ref{sec:architecture} that makes it runnable---is the contribution of this paper.

\section{Monotone Exposure Scoring: Problem and Architecture}
\label{sec:method}

\subsection{Problem setup and monotonicity constraint}
\label{sec:setup}

We observe a non-negative exposure time series $x_{1:T} = (x_1, \dots, x_T)$ with $x_t \ge 0$ and compute a scalar exposure score $s=f(x_{1:T})$ intended for use as an upstream feature in a downstream risk model.

Monotonicity is enforced as a structural constraint (motivated by regulated risk scoring): for the coordinatewise partial order
\begin{equation}
  x_{1:T} \preceq x'_{1:T} \iff x_t \le x'_t \; \forall t,
\end{equation}
we require
\begin{equation}
  x_{1:T} \preceq x'_{1:T} \Rightarrow f(x_{1:T}) \le f(x'_{1:T}).
\end{equation}

\paragraph{Governance admissibility.}
We treat the monotonicity requirement as a normative commitment rather than an empirical hypothesis: a component may enter the score only if it is provably monotone non-decreasing in every exposure input. We call such components \emph{admissible}. The per-input commitment is stricter than deployed practice, which commonly ships trended attributes that are monotone only at the scorecard-feature level, not in the underlying exposures (Section~\ref{sec:related-work}); the engineered comparators of this study are accordingly the incumbent library's admissible core under that commitment, not a census of what production systems ship: the excluded trended attributes carry predictive value this comparison forgoes, so matched admissibility measures learning against a deliberately restricted incumbent. We adopt the per-input form on recourse-coherence grounds: under an exact per-input guarantee, increasing any exposure can never improve the score, so adverse-action directions are unambiguous---a property feature-level monotonicity, holding only after aggregation, does not deliver. Statistics that are informative but not monotone (e.g., variance, trend, or time since last breach) may be used for diagnostics and oracle comparisons, but are excluded from the score itself. This separation lets the model remain auditable---each score component carries a one-line monotonicity proof---without discarding non-monotone information from the analysis.

\subsection{Theory-guided monotone recurrent architecture}
\label{sec:architecture}

The model uses a low-dimensional, economically interpretable recurrent state $h_t = [e_t, m_t]^\top$, where $e_t$ is a recency-weighted exposure (EWMA-like) state and $m_t$ is a high-water mark (HWM) state:
\begin{align}
  e_t &= \alpha e_{t-1} + g_1(x_t), \qquad 0<\alpha<1, \\
  m_t &= \max\big(m_{t-1},\; g_2(x_t)\big).
\end{align}
The final score is computed as $s = F(e_T,m_T)$. The two states have direct economic readings: $e_t$ captures the level, persistence, and recency of exposure through exponential decay, while $m_t$ records the worst exposure observed so far, so that severe events permanently affect the score. Figure~\ref{fig:architecture} summarizes the data flow; every arrow is a monotone non-decreasing map, so the score is monotone in each $x_t$ by composition (Theorem~\ref{thm:score}).

\begin{figure}[t]
\centering
\begin{tikzpicture}[
  font=\small,
  node distance=6mm and 9mm,
  box/.style={draw, rounded corners=2pt, minimum height=7.5mm, minimum width=17mm, align=center},
  mono/.style={-{Latex[length=2mm]}},
  lbl/.style={font=\scriptsize, midway}
]
\node[box] (x) {$x_t \ge 0$\\[-1pt]\scriptsize exposure at step $t$};
\node[box, above right=2mm and 11mm of x] (g1) {$g_1$\\[-1pt]\scriptsize monotone transform};
\node[box, below right=2mm and 11mm of x] (g2) {$g_2$\\[-1pt]\scriptsize monotone transform};
\node[box, right=of g1] (e) {$e_t = \alpha e_{t-1} + g_1(x_t)$\\[-1pt]\scriptsize EWMA state (recency)};
\node[box, right=of g2] (m) {$m_t = \max(m_{t-1}, g_2(x_t))$\\[-1pt]\scriptsize HWM state (peak)};
\node[box, right=10mm of $(e.east)!0.5!(m.east)$] (F) {$F$\\[-1pt]\scriptsize monotone head};
\node[right=7mm of F] (s) {$s$};
\draw[mono] (x) |- (g1);
\draw[mono] (x) |- (g2);
\draw[mono] (g1) -- (e);
\draw[mono] (g2) -- (m);
\draw[mono] (e.east) -- (F.west |- e.east) -- (F);
\draw[mono] (m.east) -- (F.west |- m.east) -- (F);
\draw[mono] (F) -- (s);
\draw[mono] (e.north) to[out=150,in=30, looseness=4] node[lbl, above] {$\alpha \in (0,1)$} (e.north west);
\draw[mono] (m.south) to[out=-150,in=-30, looseness=4] node[lbl, below] {$\max$} (m.south west);
\end{tikzpicture}
\caption{The monotone exposure RNN. Each exposure $x_t$ enters through learned monotone transforms $g_1, g_2$ (non-negative-weight MLPs with monotone activations, or fixed admissible forms); the two-dimensional state carries a recency-weighted level $e_t$ and a high-water mark $m_t$; a monotone head $F$ maps the final state to the score. Every arrow is a monotone non-decreasing map, so $s$ is monotone in every $x_t$ by construction---for every parameter setting and every MC-dropout sample.}
\label{fig:architecture}
\end{figure}

\paragraph{Admissible transforms.}
We assume $g_1,g_2:\mathbb{R}_{\ge 0}\to\mathbb{R}$ are monotone non-decreasing (and typically non-negative for interpretability).
Admissible parametric forms include the identity $g(x)=x$, the logarithm $g(x)=\log(1+x)$ (diminishing sensitivity), the hinge $g(x)=(x-\tau)^+$ with $\tau\ge 0$ (stress above a threshold), and the power $g(x)=x^p$ with $p\ge 1$ (tail emphasis). A learned alternative is a \emph{monotone MLP}: a feed-forward network whose weights are constrained non-negative through a parameterization such as $W=\mathrm{softplus}(\Theta)$ and whose activations are monotone non-decreasing (tanh in our implementation); biases are unconstrained. Because non-negativity is enforced by the parameterization, no clipping or projection step is needed during training.

\paragraph{Activation curvature and expressivity.}
Monotonicity requires only that the activations be non-decreasing, but their curvature determines which shapes the constrained network can express. With non-negative weights, an affine layer is a non-negative combination of its inputs plus a constant; if the activation is convex and non-decreasing (e.g., ReLU or softplus), then by induction every layer output is a convex, non-decreasing function of the network input, since non-negative combinations of convex functions are convex and composing a convex non-decreasing function with a convex function preserves convexity. The function class is therefore \emph{convex-only}, and a concave target transform such as $\log(1+x)$ is unrepresentable at any width or depth---the closest achievable fit is a near-linear convex approximation. Non-convex monotone activations such as tanh remove this restriction while leaving the monotonicity guarantee untouched, consistent with the convexity bias of non-negative-weight networks observed in the constrained-monotone literature \citep{runje2023cmnn}. (The convex-only limit is a property of the plain non-negative-weight stack ablated here; universality can be recovered within the constrained class by alternating-saturation constructions \citep{sartor2025universal}, a different architecture from the one this ablation prices.) This motivates the tanh default, and experiment E6 ablates the activation to test exactly this restriction.

\paragraph{Score head.}
The head $F:\mathbb{R}^2\to\mathbb{R}$ is monotone non-decreasing in each argument: either a linear map with non-negative coefficients or a monotone MLP as above. No non-monotone interaction between $e_T$ and $m_T$ is permitted.

\paragraph{Decay parameterization.}
The decay $\alpha$ may be fixed in $(0,1)$ or learned via $\alpha=\sigma(\alpha_{\mathrm{raw}})$, which keeps $\alpha\in(0,1)$ for any unconstrained $\alpha_{\mathrm{raw}}$.

\paragraph{Uncertainty-quantification variant.}
For uncertainty estimation we use a variant of the model with dropout applied inside the monotone MLP components, evaluated with MC dropout at inference time. Corollary~\ref{cor:mcdropout} below shows that every dropout-instantiated model remains monotone, so the per-sample guarantee is preserved.

\subsection{Monotonicity guarantee}
\label{sec:proof}

Define $f(x_{1:T}) := F(e_T(x_{1:T}), m_T(x_{1:T}))$.
We show that $x_{1:T}\preceq x'_{1:T}$ implies $f(x_{1:T})\le f(x'_{1:T})$.

\begin{lemma}[EWMA state is monotone]
\label{lem:ewma}
Assume $g_1$ is monotone non-decreasing and $0<\alpha<1$. Then for all $t$,
\begin{equation}
  x_{1:t} \preceq x'_{1:t} \Longrightarrow e_t(x_{1:t}) \le e_t(x'_{1:t}).
\end{equation}
\end{lemma}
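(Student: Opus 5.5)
The plan is induction on $t$, with the initial state $e_0$ treated as a fixed constant that does not depend on the inputs. In the paper's implementation we take $e_0=0$; all that matters is that both sequences start from the same $e_0$.

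\textbf{Base case.} For $t=1$ we have $e_1(x_{1:1})=\alpha e_0+g_1(x_1)$. If $x_1\le x_1'$, then $g_1(x_1)\le g_1(x_1')$ because $g_1$ is monotone non-decreasing. Hence $e_1(x_{1:1})\le e_1(x'_{1:1})$. Equivalently, one can start the induction at $t=0$, where the claim is the trivial equality $e_0=e_0$.

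\textbf{Inductive step.} Suppose the claim holds at $t-1$, and let $x_{1:t}\preceq x'_{1:t}$.
\begin{itemize}
  \item The coordinatewise order restricts to prefixes, so $x_{1:t-1}\preceq x'_{1:t-1}$. The inductive hypothesis then gives $e_{t-1}(x_{1:t-1})\le e_{t-1}(x'_{1:t-1})$.
  \item Since $\alpha>0$, multiplying by $\alpha$ preserves this inequality.
  \item From $x_t\le x_t'$ and monotonicity of $g_1$ we get $g_1(x_t)\le g_1(x_t')$.
  \item Adding the two inequalities yields
  \begin{equation*}
    e_t(x_{1:t})=\alpha e_{t-1}(x_{1:t-1})+g_1(x_t)\le \alpha e_{t-1}(x'_{1:t-1})+g_1(x'_t)=e_t(x'_{1:t}).
  \end{equation*}
\end{itemize}

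As a cross-check, unrolling the recursion gives the closed form
\begin{equation*}
  e_t=\alpha^t e_0+\sum_{s=1}^{t}\alpha^{t-s}g_1(x_s).
\end{equation*}
This is a sum of terms, each with a non-negative coefficient multiplying a non-decreasing function of a single coordinate. Monotonicity is then immediate, which is the same argument as the induction.

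There is no real obstacle. The only points needing care are, first, that the decay must be non-negative: the proof uses only $\alpha\ge 0$, and the strict bound $\alpha<1$ matters only for interpretation and stability, not for monotonicity. Second, $e_0$ must be input-independent. Third, the argument does not need $g_1\ge 0$, because only monotonicity of $g_1$ is used. This last point will also make the lemma reusable for Corollary~\ref{cor:mcdropout}: a dropout mask applied inside $g_1$ yields another monotone $g_1$, so the same proof applies per sample.
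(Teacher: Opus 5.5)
Your proof is correct and follows the paper's route: induction on $t$ with an input-independent initial state $e_0$, where the inductive step combines $\alpha e_{t-1}(x)\le\alpha e_{t-1}(x')$ (using $\alpha>0$) with $g_1(x_t)\le g_1(x'_t)$. Your closed-form cross-check and your observations that only $\alpha\ge 0$ and the monotonicity of $g_1$ (not its non-negativity) are used are accurate, but they are extras beyond what the paper's proof contains.
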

\begin{proof}
By induction on $t$. The base case holds since $e_0$ is fixed. For the inductive step, if $e_{t-1}(x)\le e_{t-1}(x')$ and $x_t\le x'_t$, then
$e_t(x)=\alpha e_{t-1}(x)+g_1(x_t)\le \alpha e_{t-1}(x')+g_1(x'_t)=e_t(x')$.
\end{proof}

\begin{lemma}[HWM state is monotone]
\label{lem:hwm}
Assume $g_2$ is monotone non-decreasing. Then for all $t$,
\begin{equation}
  x_{1:t} \preceq x'_{1:t} \Longrightarrow m_t(x_{1:t}) \le m_t(x'_{1:t}).
\end{equation}
\end{lemma}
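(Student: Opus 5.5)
The plan mirrors the proof of Lemma~\ref{lem:ewma}: induction on $t$, using that the recursion $m_t=\max(m_{t-1},g_2(x_t))$ is built from two order-preserving operations. The first is the transform $g_2$, which is non-decreasing by assumption. The second is the binary maximum, which is non-decreasing in each argument: if $a\le a'$ and $b\le b'$, then $\max(a,b)\le \max(a',b')$, since $a\le a'\le\max(a',b')$ and $b\le b'\le\max(a',b')$.

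For the base case, the initial state $m_0$ is a fixed constant independent of the input (typically $-\infty$ or $0$, or any initialization that does not depend on $x$). Hence $m_0(x)=m_0(x')$ trivially. Alternatively, one can start at $t=1$: there $m_1=\max(m_0,g_2(x_1))$, and monotonicity follows from $x_1\le x'_1$ together with the monotonicity of $g_2$ and $\max$.

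For the inductive step, fix $x_{1:t}\preceq x'_{1:t}$. The restriction $x_{1:t-1}\preceq x'_{1:t-1}$ also holds, so the induction hypothesis gives $m_{t-1}(x)\le m_{t-1}(x')$. From $x_t\le x'_t$ and the monotonicity of $g_2$ we get $g_2(x_t)\le g_2(x'_t)$. Applying the coordinatewise monotonicity of $\max$ to these two inequalities yields $m_t(x)\le m_t(x')$, which closes the induction.

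No step is a real obstacle. The only point needing care is that $m_0$ must not depend on the input, and that the partial order restricts to prefixes. Unlike the EWMA case, no condition on a coefficient such as $\alpha>0$ is needed, because $\max$ has no weight that could flip the sign.
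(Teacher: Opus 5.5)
Your proof is correct and follows the paper's own argument: induction on $t$ from an input-independent $m_0$, with the inductive step combining the monotonicity of $g_2$ and the coordinatewise monotonicity of $\max$. Your explicit check that $\max(a,b)\le\max(a',b')$ and your remark that the order restricts to prefixes simply spell out steps the paper leaves implicit.
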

\begin{proof}
By induction on $t$. The base case holds since $m_0$ is fixed. For the inductive step, if $m_{t-1}(x)\le m_{t-1}(x')$ and $x_t\le x'_t$, then using that $\max(\cdot,\cdot)$ is non-decreasing in each argument,
$m_t(x)=\max(m_{t-1}(x),g_2(x_t))\le \max(m_{t-1}(x'),g_2(x'_t))=m_t(x')$.
\end{proof}

\begin{theorem}[Full score is monotone]
\label{thm:score}
Assume $g_1$ and $g_2$ are monotone non-decreasing and $F$ is monotone non-decreasing in each argument. Then
\begin{equation}
  x_{1:T} \preceq x'_{1:T} \Longrightarrow f(x_{1:T}) \le f(x'_{1:T}).
\end{equation}
\end{theorem}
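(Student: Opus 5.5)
The plan is to derive Theorem~\ref{thm:score} by composing the two state lemmas with the coordinatewise monotonicity of the head $F$; no new induction is needed beyond what Lemmas~\ref{lem:ewma} and~\ref{lem:hwm} already supply. Fix $x_{1:T}\preceq x'_{1:T}$. We work with the initial states $e_0$ and $m_0$ fixed and shared by both inputs, as in the base cases of the lemmas. Applying Lemma~\ref{lem:ewma} at $t=T$ (its hypotheses, $g_1$ non-decreasing and $0<\alpha<1$, are exactly those of the theorem) gives $e_T(x_{1:T})\le e_T(x'_{1:T})$. Applying Lemma~\ref{lem:hwm} at $t=T$ gives $m_T(x_{1:T})\le m_T(x'_{1:T})$.

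Next I will pass through $F$ one argument at a time, going through an intermediate point. This handles the fact that $F$ is assumed monotone in each argument separately, not jointly. Write $e=e_T(x_{1:T})$, $e'=e_T(x'_{1:T})$, $m=m_T(x_{1:T})$, and $m'=m_T(x'_{1:T})$. Then
\begin{equation}
  f(x_{1:T}) = F(e,m) \le F(e',m) \le F(e',m') = f(x'_{1:T}).
\end{equation}
The first inequality uses monotonicity of $F$ in its first argument with the second argument held at $m$. The second inequality uses monotonicity in the second argument with the first held at $e'$. Chaining the two yields the claim.

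The only point requiring care is this two-step chaining, which turns per-argument monotonicity into monotonicity with respect to the product order on $\mathbb{R}^2$; once the intermediate point $(e',m)$ is inserted, it is immediate. I will also remark that the argument uses only the stated order properties of $g_1$, $g_2$, $\max$, the map $e\mapsto\alpha e$ with $\alpha>0$, and $F$. It never uses particular parameter values. The conclusion therefore holds for every admissible parameter setting, which is what Corollary~\ref{cor:mcdropout} will later exploit.
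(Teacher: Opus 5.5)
Your proof is correct and follows the paper's argument. The paper likewise invokes Lemmas~\ref{lem:ewma} and~\ref{lem:hwm} at $t=T$ to order the final state coordinatewise and then pushes that order through $F$; the only difference is that you spell out the intermediate point $(e',m)$, which turns per-argument monotonicity of $F$ into monotonicity in the product order, whereas the paper compresses this into ``applying $F$ preserves order.''
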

\begin{proof}
By Lemmas \ref{lem:ewma}--\ref{lem:hwm}, $(e_T(x),m_T(x))\preceq(e_T(x'),m_T(x'))$. Since $F$ is monotone in both inputs, applying $F$ preserves order.
\end{proof}

When $g_1$, $g_2$, or $F$ is a monotone MLP, the assumptions of Theorem~\ref{thm:score} are met because an affine map with elementwise non-negative weights preserves the coordinatewise order, elementwise monotone activations preserve it as well, and compositions of order-preserving maps are order-preserving.

\paragraph{Composition with engineered features.}
The same composition argument extends beyond this architecture: any monotone non-decreasing consumer of any fixed monotone featurization---the bases of Proposition~\ref{prop:bases}, windowed means and maxima, or a gradient-boosted tree ensemble with per-feature non-decreasing constraints \citep{lightgbm_monotone}---yields an end-to-end monotone score. The by-construction guarantee is therefore \emph{not} unique to the learned channel; it is equally available to the deployed pipeline of hand-crafted aggregates feeding a constrained scorecard model. What the architecture adds over that pipeline is (i) \emph{learned} aggregation: the decay $\alpha$, the transforms $g_1, g_2$, and their composition with $F$ are fitted jointly rather than fixed ex ante, and (ii) a closed-form readable score: a trained instance collapses to a monotone function of one EWMA and one HWM with plottable one-dimensional transforms, rather than an ensemble over a multi-column feature library. Experiment E10 quantifies what (i) is worth and where.

\begin{corollary}[Monotonicity per MC-dropout sample]
\label{cor:mcdropout}
Let $f_\omega$ denote the model instantiated with a dropout mask $\omega$ that multiplies a subset of weights by zero or by a positive rescaling constant. If the undropped model satisfies the assumptions of Theorem~\ref{thm:score}, then $f_\omega$ is monotone for every $\omega$. In particular, every MC-dropout sample is monotone, and any per-sample or aggregated uncertainty summary is computed over monotone scoring functions.
\end{corollary}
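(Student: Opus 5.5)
The plan is to reduce the corollary to Theorem~\ref{thm:score}: for each fixed mask $\omega$, the instantiated model $f_\omega$ has exactly the architecture of the undropped model, only with different weight values. It therefore suffices to check that $f_\omega$'s components $g_1^\omega$, $g_2^\omega$, $F^\omega$ are still monotone non-decreasing, and that the decay $\alpha$ is untouched. Dropout acts only inside the monotone MLP components, and $\alpha$ is not a dropped weight, so $\alpha^\omega=\alpha\in(0,1)$ and the hypothesis of Lemma~\ref{lem:ewma} is unchanged.

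The key step is to show that masking preserves non-negativity of every weight. In each monotone MLP, the weights are $W=\mathrm{softplus}(\Theta)\ge 0$ elementwise. The mask replaces each entry $W_{ij}$ by $c_{ij}W_{ij}$ with $c_{ij}\in\{0\}\cup\{c\}$ and $c>0$, so $W^\omega_{ij}\ge 0$. If dropout is instead applied to unit activations rather than to weights, I would rewrite it as a diagonal non-negative matrix $D_\omega$ multiplying the layer. Then $W D_\omega$ or $D_\omega W$ is again elementwise non-negative, which reduces this case to the weight case.

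Biases are unconstrained and unaffected, and the activations (tanh) are unchanged. So each layer $z\mapsto \phi(W^\omega z+b)$ is a non-negative affine map followed by a monotone elementwise map, and it preserves the coordinatewise order. By the composition remark following Theorem~\ref{thm:score}, $g_1^\omega$, $g_2^\omega$ and $F^\omega$ are monotone non-decreasing. In the linear-head case, $F^\omega$ has coefficients that are non-negative multiples of non-negative coefficients, so it is also monotone non-decreasing.

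Theorem~\ref{thm:score} applied to $f_\omega$ then gives monotonicity for every $\omega$, uniformly over the mask; no measure-theoretic argument is needed. For the statement about summaries, I would note that any per-sample quantity is, by definition, computed from a monotone $f_\omega$. For the MC mean $\frac1K\sum_k f_{\omega_k}$, a non-negative combination of non-decreasing functions is non-decreasing, and the same holds for pointwise quantiles, since order statistics are monotone in each argument. The main obstacle is only bookkeeping: making sure that the way dropout is implemented (on weights versus on units, and with what rescaling) really corresponds to non-negative scalings, so that no mask can introduce a negative effective weight.
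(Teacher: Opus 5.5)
Your proposal is correct and follows the paper's proof: the mask multiplies the softplus-nonnegative weights by non-negative scalars, so every instantiated $f_\omega$ still satisfies the hypotheses of Theorem~\ref{thm:score}, with the activations, the decay, and the recurrence unchanged. Your additional steps go slightly beyond what the paper states but are sound: rewriting unit-level dropout as a non-negative diagonal factor, and noting that the MC mean and pointwise quantiles are themselves monotone.
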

\begin{proof}
Dropout multiplies non-negative weights by non-negative scalars, so all weights of $f_\omega$ remain non-negative; the recurrence structure, activations, and the admissible transforms are unchanged. Hence $f_\omega$ satisfies the assumptions of Theorem~\ref{thm:score}.
\end{proof}

\paragraph{Stability by construction.}
The EWMA component is bounded for $\alpha\in(0,1)$ (for non-negative $g_1$ and $e_0=0$, the geometric sum gives $e_T \le \max_t g_1(x_t)/(1-\alpha)$), while the HWM component is bounded by the maximum transformed exposure. This mitigates numerical instability that can occur in naive monotone RNNs without explicit decay.

\subsection{Additional admissible bases}
\label{sec:bases}

Beyond the EWMA and HWM states, several other sequence statistics satisfy the admissibility requirement of Section~\ref{sec:setup} and serve as building blocks for the basis-sufficiency analysis of experiment E8.

\begin{proposition}[Admissible basis library]
\label{prop:bases}
Each of the following statistics is monotone non-decreasing in every exposure input $x_t$: the last exposure $L_T(x)=x_T$; the cumulative sum $C_T(x)=\sum_{t=1}^T x_t$; the breach count $B_T(x)=\sum_{t=1}^T \mathbf{1}[x_t>\tau]$ for fixed $\tau\ge 0$; the cumulative excess $E_T(x)=\sum_{t=1}^T (x_t-\tau)^+$; the longest run above a threshold $R_T(x)$; saturating sums $S_T(x)=\sum_{t=1}^T \phi(x_t)$ for any non-decreasing $\phi$; the low-water mark $\ell_T(x)=\min_t g(x_t)$ for non-decreasing $g$---the monotone extremal mirror of the HWM, recording the floor of the exposure path; and the early-window EWMA $e_{t_0}(x)=\sum_{t \le t_0} \alpha^{t_0-t} g(x_t)$ for fixed $t_0 < T$ and $\alpha \in (0,1)$, non-decreasing in each input it observes and constant in the rest.
\end{proposition}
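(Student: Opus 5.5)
The plan is to reduce every item to three closure facts about coordinatewise-monotone maps under the order $\preceq$, and then to check each statistic against one of them. The facts are: (a) if $\phi$ is non-decreasing, then any non-negative combination $\sum_t c_t\,\phi(x_t)$ with $c_t\ge 0$ is non-decreasing in every $x_t$; (b) $\max$ and $\min$ are non-decreasing in each argument, so $\min_t g(x_t)$ is monotone whenever $g$ is non-decreasing; (c) a function that depends on only some coordinates, and is monotone in those, is monotone in all of them, since it is constant in the others. Each fact is a one-line check. For (a), the termwise inequality $\phi(x_t)\le\phi(x'_t)$ is summed with non-negative weights. For (b), this is the same argument used in Lemma~\ref{lem:hwm}, with $\min$ in place of $\max$.

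With these facts the routine items follow directly.
\begin{itemize}
\item $L_T$ is the coordinate projection, so $x_T\le x'_T$ gives the claim at once.
\item $S_T$ is case (a) with $c_t=1$.
\item $C_T$, $B_T$ and $E_T$ are instances of $S_T$, taking $\phi(x)=x$, $\phi(x)=\mathbf{1}[x>\tau]$ and $\phi(x)=(x-\tau)^+$ respectively. Each of these $\phi$ is non-decreasing: the indicator of an upper set, and the hinge as the max of two non-decreasing functions.
\item $\ell_T$ is case (b).
\item $e_{t_0}$ is case (a) with weights $c_t=\alpha^{t_0-t}>0$ for $t\le t_0$ and $c_t=0$ for $t>t_0$. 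Equivalently, one can invoke Lemma~\ref{lem:ewma} at horizon $t_0$ (unrolled from $e_0=0$) together with (c).
\end{itemize}

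The only item that needs a genuine argument is the longest run $R_T(x)$. Here I would take the exceedance indicators $b_t=\mathbf{1}[x_t>\tau]$ and write
\[
R_T(x)=\max\Bigl\{\,k : \exists\, s,\ b_s=\dots=b_{s+k-1}=1\,\Bigr\}.
\]
The key observation is that $x\preceq x'$ implies $b_t(x)\le b_t(x')$ for every $t$. Hence every run of ones in $b(x)$ is contained in a run of ones in $b(x')$. The set over which the maximum is taken can only grow, so $R_T(x)\le R_T(x')$.

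An alternative route, which keeps the ``one-line proof'' flavor of the paper, is to write the run length recursively as $r_t=b_t\,(r_{t-1}+1)$ with $R_T=\max_t r_t$. Monotonicity of $r_t$ then follows by induction as in Lemmas~\ref{lem:ewma}--\ref{lem:hwm}, because $(b,r)\mapsto b(r+1)$ is non-decreasing on $\{0,1\}\times\mathbb{Z}_{\ge0}$. The outer maximum is then handled by (b).

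I expect the run-length case to be the only real obstacle. It is not a sum or an extremum of per-step transforms, so facts (a)--(c) do not apply directly. The subtlety is making sure that the reset structure cannot break order: raising an input can only turn a $0$ into a $1$, which merges runs and never splits them. I would present the set-inclusion argument as the main proof, since it avoids the product $b_t(r_{t-1}+1)$, and mention the recursive form as an aside.
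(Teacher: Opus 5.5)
Your proposal is correct and follows essentially the same route as the paper. The sum-type statistics ($C_T$, $B_T$, $E_T$, $S_T$, $e_{t_0}$) are handled as non-negatively weighted sums of non-decreasing per-step transforms, the low-water mark as the $\min$ analogue of Lemma~\ref{lem:hwm}, and the longest run by the same argument that raising inputs only enlarges the exceedance set and so cannot shorten any run. Grouping $C_T$, $B_T$, $E_T$ as instances of $S_T$ and the recursive aside $r_t=b_t(r_{t-1}+1)$ are presentational additions rather than a different proof.
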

\begin{proof}
See Appendix~\ref{app:bases}.
\end{proof}

Together with EWMA and HWM, this yields a library of \emph{ten} monotone-admissible bases. Any score formed by a monotone head over any subset of these bases inherits the guarantee of Theorem~\ref{thm:score} by the same composition argument. The sequence mean is deliberately absent from the library: at fixed length $T$ it equals $C_T/T$, an affine rescaling of the cumulative sum, so it is collinear with an existing basis and rank-equivalent under any monotone head; it participates in the evaluation only as a feature of the static aggregate baselines and the hand-crafted grids.

\section{Synthetic Data: Label Regimes and Protocols}
\label{sec:data}

All experiments use synthetic data to enable controlled, ground-truth monotone mechanisms and precise monotonicity diagnostics. The data design has three orthogonal axes: a \emph{label regime} (the ground-truth monotone risk mechanism), a \emph{data protocol} (a transformation of a base regime's dataset, e.g., missingness or distribution shift), and a \emph{test operator} (an evaluation-time perturbation probing a trained model; operators are specified with the experimental design in Section~\ref{sec:operators}).

\paragraph{Exposure process.}
We generate non-negative exposure sequences $x_{1:T}$ by modeling log-exposure $y_t:=\log(1+x_t)$ with (i) a latent two-state stress process and (ii) state-dependent AR(1) dynamics with rare spikes.
A latent state $z_t\in\{0,1\}$ (normal vs stress) evolves as a Markov chain with per-sequence transition probabilities $p_{01}$ and $p_{10}$, producing occasional but persistent stress periods.
Conditional on $z_t$, log-exposure follows
\begin{equation}
  y_t = \mu_{z_t} + \rho_{\mathrm{AR}}\, (y_{t-1}-\mu_{z_{t-1}}) + \varepsilon_t,
  \qquad \varepsilon_t\sim\mathcal{N}(0,\sigma_{z_t}^2),
\end{equation}
initialized at $y_1 \sim \mathcal{N}(\mu_{z_1}, \sigma_{z_1}^2)$, with $\mu_1>\mu_0$ and $\sigma_1>\sigma_0$ so that the stress state is both higher-level and more volatile (the persistence is subscripted because unadorned $\rho$ denotes Spearman correlation throughout).
We recover exposures via $x_t=\max(\exp(y_t)-1, 0)$ and add a spike process: at each $t$, an indicator $b_t \sim \mathrm{Bernoulli}(p_{\text{spike}})$ selects rare steps, and a log-normal shock $u_t$ is added on those steps, $x_t\leftarrow x_t + b_t u_t$.
This construction yields non-negativity, heavy tails, temporal dependence, regime switching, and rare extremes. Every numeric constant of the process and of each label regime is listed in Appendix~\ref{app:protocols} (Tables~\ref{tab:dgp-exposure} and~\ref{tab:dgp-regimes}), so the suite is reproducible from the paper alone.

\paragraph{Ground-truth monotone risk functionals.}
For each dataset we fix a true decay $\alpha^*$ ($0.9$ in every default regime; E2 and E10 sweep it over a ten-point grid, and E6's recovery sweep retrains at off-reference values) and monotone transforms $g_1^*,g_2^*$ (e.g., identity, $\log(1+x)$, or hinge $\max(0,x-\tau^*)$ with $\tau^*$ a fixed constant).
We define true monotone summaries
\begin{align}
  e_t^* &= \alpha^* e_{t-1}^* + g_1^*(x_t), \quad e_0^*=0, \\
  m_t^* &= \max(m_{t-1}^*,\, g_2^*(x_t)), \quad m_0^*=0.
\end{align}
Labels are generated as $Y\sim\mathrm{Bernoulli}(\sigma(\eta))$, with the logit $\eta$ specified per regime below and intercepts calibrated so class prevalence is approximately $25\%$ per regime (Appendix~\ref{app:protocols} tabulates the exact constants).

\subsection{Label regimes}
\label{sec:regimes}

The fifteen label regimes fall into two groups. The five \emph{core} regimes (A--E) are EWMA/HWM-centric: their logits are functions of the true summaries $e_T^*$ and $m_T^*$, so they test whether the inductive bias of Section~\ref{sec:architecture} recovers the correct mechanism when the mechanism lies in, or near, the model family. The ten \emph{basis-adversarial} regimes (F--O) are each engineered around one admissible basis from the ten-basis library of Section~\ref{sec:bases}, placing the label mechanism beyond the EWMA+HWM pair (F--M), entirely on the pair's HWM member (regime N), or entirely on the pair's extremal mirror, the low-water mark (regime O); they are used by experiment E8 to map the sufficiency boundary of the two-state summary. Table~\ref{tab:regimes} lists the mechanisms; all slope coefficients are fixed positive constants, and intercepts are calibrated to the target prevalence. (A further diagnostic-only regime P---a pure trailing-window label---is introduced by experiment E11 to price the window tier and is not a member of the fifteen-regime suite; Appendix~\ref{app:protocols} defines it.)

\begin{table}[t]
\centering
\caption{Label regimes. Top: core regimes (EWMA/HWM-centric). Bottom: basis-adversarial regimes, each engineered around one admissible basis from the library of Section~\ref{sec:bases} (used by E8). Here $E_T^{\mathrm{ex}} = \sum_t (x_t - \tau^*)^+$, $B_T = \sum_t \mathbf{1}[x_t > \tau^*]$ (the breach count of Proposition~\ref{prop:bases}), $r_T^*$ is the longest run of consecutive indices with $x_t > \tau^*$; the shared exceedance threshold $\tau^*$ of the basis regimes and regime D's hinge threshold are distinct fixed constants ($3.0$ and $10.17$; Table~\ref{tab:dgp-regimes}). The mixture indicator $z \sim \mathrm{Bernoulli}(0.5)$ (distinct from the latent stress state $z_t$ of Section~\ref{sec:data}) is drawn per sequence, $t_0 = \lfloor T/3 \rfloor$, $C$ is a fixed cap (a high population quantile of the EWMA distribution), and $k$ is a short recent-window length. Regime N is the steep variant of regime D: its threshold $\tau_N$ is a fixed constant placed high in the HWM distribution (the population $q_{80}$) with a steep slope $\lambda_N$, so the label depends almost entirely on the peak and makes the HWM maximally prominent relative to recency and accumulation summaries. Regime O is N's extremal mirror: its label depends only on the exposure floor $\min_t x_t$, the low-water mark of Proposition~\ref{prop:bases}.}
\label{tab:regimes}
\small
\setlength{\tabcolsep}{4pt}
\begin{tabular}{@{}lll@{}}
\toprule
Regime & Logit mechanism $\eta$ & Targets \\
\midrule
A (recency) & $a\,e_T^* + b$ & $g_1^* = \log(1+x)$ \\
B (peak) & $c\,m_T^* + d$ & $g_2^* = x$ \\
C (mixed) & $a\,e_T^* + c\,m_T^* + b$ & EWMA and HWM jointly \\
D (threshold stress) & $\lambda\,m_T^* + b$ & $g_2^* = (x-\tau^*)^+$ \\
E (diminishing returns) & $a\log(1+e_T^*) + c\log(1+m_T^*) + b$ & saturating EWMA/HWM \\
\midrule
F (cumulative burden) & $\gamma \sum_t g(x_t) + b$ & cumulative sum \\
G (persistence) & $\beta\,r_T^* + b$ & longest run \\
H (severity $\times$ duration) & $\lambda\,E_T^{\mathrm{ex}} \log(1+B_T) + b$ & excess $\times$ breach count \\
I (substitution) & $a\,e_T^* + b$ if $z{=}1$, else $c\,m_T^* + d$ & joint EWMA{+}HWM \\
J (breach count) & $\kappa \sum_t \mathbf{1}[x_t > \tau^*] + b$ & breach count \\
K (early warning) & $a\,e_{t_0}^* + b$ & early-window EWMA \\
L (saturation) & $a\,\min(e_T^*, C) + b$ & capped / saturating \\
M (multi-scale) & $a\,e_T^* + c\,m_T^* + d \sum_{t=T-k+1}^{T} x_t + b$ & combined bases \\
N (peak prominence) & $\lambda_N\,(\max_t x_t - \tau_N)^+ + b$ & HWM, standalone \\
O (exposure floor) & $a\,\min_t x_t + b$ & LWM, standalone \\
\bottomrule
\end{tabular}
\end{table}

Every regime is monotone non-decreasing in every exposure input $x_t$: each mechanism is a non-decreasing composition of statistics from the admissible library (Proposition~\ref{prop:bases}), plus two one-line extensions the library does not list---regime M's trailing-window sum $\mathrm{RS}_k$, monotone as a partial sum, and regime H's product, monotone because both factors are non-negative and non-decreasing (for $f, g \ge 0$ non-decreasing, $fg$ is non-decreasing). For the breach-count regime J this follows from Lemma~\ref{lem:breach}---increasing any $x_t$ can only switch an indicator from $0$ to $1$---so J is admissible and is the unique regime isolating the breach-count basis. The threshold $\tau^*$ is always a fixed constant rather than a data-dependent quantile, which is what keeps the threshold-based summaries monotone in $x$.

\subsection{Data protocols}
\label{sec:protocols}

A data protocol transforms a base regime's dataset---inputs, labels, or observation mask---while reusing that regime's label mechanism: it moves the inputs (and, in one case, the labels), never the underlying risk function. Each protocol therefore takes a base regime as a parameter.

\paragraph{Clean.}
The identity transformation; the default protocol for most experiments.

\paragraph{Missingness.}
Applies masks after generation: MCAR at rates $\{5, 10, 15, 20, 30\}\%$ and block missingness with contiguous block lengths $\{2, 4, 6, 8\}$. Labels are computed from the unmasked sequence, and the masked inputs are completed with zero, forward-fill, or mean imputation.

\paragraph{Distribution shift.}
Applies test-time covariate shift to the inputs: scale inflation, heavier tails, increased spike frequency, and mixture contamination. Labels are frozen at their pre-shift values, so only ordering-preserving shifts ($x' \succeq x$) are admissible---a shift that reordered exposures would invalidate the frozen labels rather than test robustness. Severity grids are given in Appendix~\ref{app:protocols}.

\paragraph{Mechanism-parameter sweeps.}
Two sweeps recalibrate a regime's label coefficients as a generating parameter varies, holding class prevalence and the slope on the standardized mechanism feature fixed so that only the swept property changes: the E2/E10 decay sweep over regime A's $\alpha^*$, and the E10 threshold sweep over regime D's $\tau^*$. Both recalibrations reproduce the regime's published coefficients at the reference parameter (Appendix~\ref{app:protocols}).

\paragraph{Training-size subsampling.}
For the E10 sample-size sweep, the training split is truncated to its first $n$ sequences after generation while the validation and test splits remain at full size and identical across $n$, so within-seed AUC-versus-$n$ curves are exactly comparable; because generation is i.i.d., the prefix is an i.i.d.\ subsample.

\paragraph{Dataset sizes and seeds.}
For each regime we generate 50{,}000 training, 10{,}000 validation, and 10{,}000 test sequences of length $T = 24$ (the released suite also supports $T \in \{12, 36\}$; no reported experiment uses them). All experiments use five random seeds (42--46) governing both data generation and training, and compared models always share identical splits, so that any difference between models is attributable to the model class rather than the sample.

\section{Experimental Design}
\label{sec:design}

The evaluation maps each claim of Section~\ref{sec:intro} to exactly one experiment, so that a reviewer can audit any claim from one primary result; E9 re-tests C1 and C3 across capacity, and E11 is the constructive counterpart of C7 and C8. Each experiment is a point in the three-axis space of Section~\ref{sec:data}---a label regime, a data protocol, and a set of test operators---together with a model set, and the axis values are chosen to maximize the contrast that supports the claim rather than by historical accident. Trained models are shared across experiments where the measurements are independent (for example, E1 reuses the regime-B and regime-D models trained for E3), but no experiment is the sole evidence for two claims. Table~\ref{tab:claims-map} summarizes the mapping.

\subsection{Models and baselines}
\label{sec:models}

\paragraph{Proposed model.}
The monotone exposure RNN of Section~\ref{sec:architecture} (\texttt{monotone\_exposure\_rnn}), with the two-state EWMA/HWM recurrence and a monotone head.

\paragraph{Generic monotone RNN.}
A monotone recurrent baseline whose hidden state is a generic learned vector rather than the theory-guided two-state structure, evaluated in two recurrence forms. The shipped form is a bank of fixed-decay linear accumulators over a learned non-negative-weight monotone input map, $h_t = 0.9\, h_{t-1} + T_{\mathrm{in}}(x_t)$---monotone by construction and the simplest stable generic monotone state, but itself time-separable. Because that form cannot test whether a \emph{nonlinear} monotone recurrence escapes the separable family, the boundary evaluation adds a weight-constrained Elman form, $h_t = \tanh(W_{hh} h_{t-1} + T_{\mathrm{in}}(x_t))$ with $W_{hh} = \mathrm{softplus}(\Theta)$ entrywise non-negative and therefore order-preserving---the weight-constrained recurrence of the industrial monotone-RNN patent's core claims (Section~\ref{sec:rw-monotone-nn}) \citep{boardman2024mrnn}.\footnote{One departure from that patent's preferred embodiment is deliberate: its gated (LSTM) variant requires non-negative-range activations, which excludes tanh---the stronger activation on these regimes, curvature-wise (Section~\ref{sec:architecture})---so the ungated arm evaluates the patented mechanism at its most favorable activation. The gated embodiment is not evaluated: gating would add a second mechanism (input-conditioned decay) to the control, and a gated monotone recurrence is still time-invariant, so the time-localization argument of Section~\ref{sec:e10-results} covers it unchanged; its behavior on the other boundary regimes is untested here.} Both forms control for the possibility that the weight constraint alone, rather than the EWMA/HWM structure, drives the results; Section~\ref{sec:e10-results} evaluates both forms on the boundary regimes.

\paragraph{Static aggregate baselines.}
A monotone head, and an unconstrained counterpart, over fixed static aggregates of the sequence: sum, mean, maximum, minimum, and last value. The first four are order-invariant (the fixed-pooling case of the permutation-invariant class formalized by \citet{zaheer2017deepsets}); the last value is order-dependent and is retained because deployed scorecards carry it. These control for whether temporal state is needed at all: the monotone variant is itself monotone by construction, so any gap against the recurrent model is purely representational.

\paragraph{Calibrated monotone lattice comparator.}
A modern monotone architecture in the lattice tradition \citep{gupta2016lattice,you2017deep} over the same fixed monotone featurizations, included so that no result can be attributed to the monotone-MLP head class: per-feature monotone calibrators (convex combinations of non-negative-slope sigmoids of $\log(1+x)$, mapping each feature into $(0,1)$) feed a multilinear lattice on the unit hypercube whose M\"obius coefficients are constrained non-negative through a softplus parameterization---a sufficient condition for coordinatewise monotonicity, so the guarantee holds by construction with no projection step. The lattice consumes the five static aggregates in E3 (\texttt{aggregate\_lattice}) and each hand-crafted grid in E10; it is a comparator class only, never a component of the proposed model. The lattice is chosen over the other modern monotone classes of Section~\ref{sec:related-work} deliberately: it is exactly monotone by parameterization, whereas, e.g., UMNN's quadrature-based construction is monotone only up to numerical integration error in the multi-input setting and certified monotonic networks verify a trained network post hoc---neither meets this paper's requirement that every monotone model pass the zero-violation battery by construction.

\paragraph{Unconstrained sequence baselines.}
A standard RNN and an LSTM with no monotonicity constraint, representing what generic sequence learning achieves on the same data. We refer to the proposed model, the generic monotone RNN, the two static aggregate baselines, and the unconstrained RNN and LSTM collectively as the \emph{core} model set. We deliberately include no attention or transformer baseline: the claims concern monotone temporal aggregation against engineered feature grids, not state-of-the-art sequence modeling, the unconstrained LSTM is already scaled to roughly $200{,}000$ parameters in the capacity study (E9) without overturning parity, and an unconstrained transformer could only add to the violation counts that C1 already establishes for unconstrained sequence models.

\paragraph{Penalty-guided learning baselines (PGL).}
The unconstrained RNN and LSTM trained with a soft monotonicity penalty in the loss,
\begin{equation}
\label{eq:pgl}
\mathcal{L} \;=\; \mathcal{L}_{\mathrm{pred}} \;+\; \lambda_{\mathrm{mono}}\, \mathbb{E}\!\left[\max\big(0,\; f(x) - f(x') + \varepsilon\big)\right],
\end{equation}
where the expectation runs over monotone pairs $x \preceq x'$ sampled during training (Appendix~\ref{app:protocols})---the classic \emph{monotonicity hints} approach \citep{sill1997hints} instantiated for sequence models. These baselines address the central skeptical question behind C1: whether a soft penalty could achieve the same guarantee as the architecture.

\paragraph{Dropout-equipped variant.}
A copy of the proposed model with dropout inside its monotone MLP components, carried solely for the MC-dropout check in E1. Isolating dropout in a dedicated variant ensures that dropout never affects the predictive results reported elsewhere, while Corollary~\ref{cor:mcdropout} applies to every sampled mask of the variant.

\paragraph{Ablation grid (E7).}
The proposed model instantiated over $g_1 \in \{\text{identity},\ \log,\ \text{monotone MLP}\}$ $\times$ $g_2 \in \{\text{identity},\ \text{hinge},\ \text{power},\ \text{monotone MLP}\}$ $\times$ $\alpha \in \{0.70,\ 0.85,\ 0.95,\ \text{learned}\}$, giving $3 \times 4 \times 4 = 48$ configurations.

\paragraph{Hand-crafted grid baselines (E10).}
Two fixed monotone featurizations represent the engineered alternative to the learned channel. The \emph{practice grid} mirrors deployed trended-data attributes \citep{experian_trended3d}: trailing-window mean and maximum over windows $\{3, 6, 12, T\}$, breach counts at fixed thresholds $\{2, 5, 10\}$, and the last value (12 features). The \emph{steelman grid} is a densified hand-crafted approximation of the learnable $(g_1, \alpha, g_2)$ family that no practitioner ships but a skeptical reviewer would build: EWMA at fixed decays $\{0.5, 0.7, 0.85, 0.95\}$, the HWM, and cumulative hinge excesses plus breach counts at thresholds $\{2, 5, 10\}$ (11 features).

Each grid is consumed by (i) a monotone MLP head, monotone by Theorem~\ref{thm:score}'s composition argument, (ii) LightGBM with a non-decreasing monotone constraint on every feature \citep{ke2017lightgbm,lightgbm_monotone}---the industry's scorecard learner in its governance configuration---and (iii) the calibrated monotone lattice above. A further comparator feeds the \emph{raw lags} $x_1, \dots, x_T$ to constrained LightGBM as $T$ columns: order-aware, monotone by construction, and requiring no feature engineering at all. Every comparator is therefore monotone end-to-end, so E10 isolates the value of \emph{learned} aggregation rather than of the guarantee. The classical monotone-binned logistic scorecard is not run as a separate consumer: it is one more monotone head family over the same columns, and E10's head-class insensitivity (Section~\ref{sec:e10-results}) leaves little room for a fourth head to change any verdict; its piecewise-constant bins would saturate under the stress phase exactly as the tree ensembles do. One operator caveat applies: the adversarial monotonicity search is gradient-based and inapplicable to tree ensembles, so tree models are verified by the random, coordinatewise, and structured-probe operators and their adversarial cell is reported as not applicable rather than as zero.

\paragraph{Basis models M1--M9, in two arms (E8).}
Monotone exposure RNNs whose state tracks a subset of the admissible basis library of Section~\ref{sec:bases} and whose monotone head consumes all tracked statistics: M1 (EWMA only), M2 (HWM only), M3 (EWMA+HWM), M4 (M3 plus breach count), M5 (M4 plus longest run), M6 (M3 plus cumulative sum and breach count), M7 (M3 plus cumulative sum, breach count, longest run, and cumulative excess), M8 (M3 plus the low-water mark), and M9 (M3 plus the early-window EWMA). Each model is instantiated in two arms. The \emph{learned} arm keeps the proposed model's learnable transforms and decay on the EWMA and HWM channels. The \emph{frozen} arm (M1f--M9f) fixes both transforms to the identity and the decay to $\alpha = 0.9$, so the channels compute the literal fixed EWMA and HWM and only the head learns.

The frozen arm is required because the learned recurrence is itself a basis learner: with a learnable monotone $g_1$ and learned $\alpha$, the e-channel (the model's $e_t$ coordinate; likewise the m-channel for $m_t$) realizes geometrically weighted, time-separable monotone statistics $\sum_t \alpha^{T-t} g_1(x_t)$ (up to the reach of the bounded monotone MLP within that family)---a family that contains the cumulative sum, breach count, and cumulative excess as limiting cases, and whose rank closure contains the HWM itself (and, symmetrically, the LWM; Appendix~\ref{app:rank-closure} formalizes the closure). Basis-level necessity is therefore only measurable against fixed channels; the learned arm then quantifies how much of each gap the learnable channel recovers. Identity transforms and fixed decay are members of the admissible class, so both arms inherit the guarantee of Theorem~\ref{thm:score} by the composition argument of Section~\ref{sec:bases}.

\subsection{Test operators}
\label{sec:operators}

A test operator is an evaluation-time procedure applied to a trained model; operators generate no training data.

\paragraph{Random, adversarial, and coordinatewise perturbations.}
The random operator draws monotone perturbations $x \preceq x'$ at random and counts violations; the adversarial operator searches for worst-case monotone perturbations by projected gradient descent on the score with random restarts (full procedure in Appendix~\ref{app:protocols}); the coordinatewise operator increases a single time index at a time.

\paragraph{Structured probes P1--P5.}
P1 increases a single time step by a fixed amount; P2 increases a contiguous block placed either early or late in the sequence; P3 increases the index holding the current maximum exposure; P4 applies small increments to low-exposure indices one at a time; P5 perturbs exposures around large observed jumps, a detected proxy for latent stress-state transitions. Full definitions, including the paired construction for P4, are given in Appendix~\ref{app:probes}.

\paragraph{MC dropout.}
We draw $K = 20$ pinned dropout masks per (regime, seed) run---each a realized dropout sub-network---and verify $f^{(k)}(x) \le f^{(k)}(x')$ within each mask $k$ over a dedicated budget of $1{,}000$ random monotone pairs ($20$ base sequences $\times$ $50$ upward perturbations, base and perturbed inputs sharing the mask), reporting the per-mask violation rate and the maximum violation magnitude across masks.

\paragraph{Ensemble.}
A deep ensemble of independently trained instances, compared against MC dropout on predictive and uncertainty quality; the comparison is reported in Appendix~\ref{app:results-tables} (Table~\ref{tab:s1c}) and read in Section~\ref{sec:discussion}.

\subsection{Evaluation protocol}
\label{sec:eval-protocol}

Predictive quality is reported as the area under the ROC curve (AUC), log loss, Brier score, and expected calibration error (ECE; $10$ equal-width probability bins), each as mean $\pm$ standard deviation over the five seeds, with identical splits across compared models. Confidence intervals on paired deltas come from a two-level paired bootstrap that resamples seeds and then test examples on shared splits; with five seeds the outer level is necessarily coarse, so the intervals should be read as approximate rather than exact coverage statements: throughout the paper, \emph{statistically} ahead or behind means the joint criterion below rather than an exact-coverage claim. The suite reports many pairwise contrasts across regime--model cells with no multiplicity adjustment; individual interval endpoints should therefore not be over-read, and every verdict the paper leans on is required to hold jointly---the interval excluding zero, the per-seed delta carrying the same sign across all five seeds, and, where multiple comparators exist, the same direction against each of them. Deltas below ${\sim}0.002$ AUC are described as leans, not wins, even when they meet the joint criterion. A monotonicity violation is recorded when a monotone increase $x \preceq x'$ yields $f(x') < f(x) - \varepsilon$. All violation checks are computed in float64 with an explicit numerical tolerance $\varepsilon$, so that reported violations reflect genuine order inversions rather than floating-point noise. Tables print four AUC decimals where paired deltas at the $10^{-3}$ scale carry the verdict (E2, E3, E5, the E10 parity and delta tables) and three decimals where seed-level variation dominates (E8, E11, the E10 boundary table); prose quotes match the tables they draw from. Derived quantities (deltas, gaps, spreads) are computed from unrounded per-seed values and can differ from the difference of two rounded table operands in the last digit.

\subsection{Experiments}
\label{sec:experiments}

\paragraph{E1: monotonicity by construction (tests C1).}
Regimes B and D under the clean protocol, with the full operator suite (random, adversarial, coordinatewise, probes P1--P5, and MC dropout) applied to the core model set, the PGL baselines, and the dropout-equipped variant; we record violation counts and rates by operator, model, and seed. Regimes B and D are chosen because peak- and threshold-driven mechanisms are where unconstrained baselines violate most frequently, so the contrast between zero and non-zero violation counts is unambiguous, whereas a regime where baselines barely violate would be the weakest possible showcase. The clean protocol isolates the by-construction property: distribution shift is the job of E4, and bundling stressors would blur attribution. The random, adversarial, and coordinatewise operators establish worst-case zero violations; the structured probes localize \emph{where} the baselines fail, separating distributed micro-increment failures from single-step, block, peak-aligned, and stress-transition ones. The MC-dropout operator, run on the dropout-equipped variant, tests the per-sample guarantee of Corollary~\ref{cor:mcdropout}---the property that distinguishes this architecture from static monotone networks---while the contrast against the penalty-trained baselines is carried by the deterministic operators.

\paragraph{E2: necessity of temporal state (tests C2).}
Regime A swept over its data-generating decay $\alpha^*$, with regime C as a cross-mechanism control, under the clean protocol with coordinatewise and probe operators, comparing the proposed model against the static monotone aggregate baseline. For each $\alpha^*$ in the sweep, the label coefficients are recalibrated so that class prevalence (approximately $25\%$) and the logit slope on the standardized EWMA feature are held fixed, so only recency varies along the sweep (Appendix~\ref{app:protocols}). The expected signature is an inverted-U AUC gap as a function of $\alpha^*$: closing toward both sweep ends, where the generating EWMA degenerates to a static summary, and peaking at intermediate $\alpha^*$, where $e_T$ is a genuinely order-dependent decaying sum; the endpoint mechanics are analyzed with the results. Regimes B and D are deliberately avoided here: their labels depend only on the maximum, which the aggregate's $\max$ feature already captures, so they cannot exhibit the gap. Regime C, whose label the aggregate's order-invariant features largely recover, runs once as a control whose gap should shrink markedly relative to the within-A peak, ruling out the reading that the aggregate is simply a weak baseline everywhere. Both models are monotone by construction and are confirmed at zero violations throughout, so the gap is purely predictive: monotonicity is necessary but not sufficient.

\paragraph{E3: predictive cost of the constraint (tests C3).}
The core model set, plus the calibrated-lattice comparator over the same static aggregates, on regimes A--E under the clean protocol, with predictive metrics only (AUC, log loss, Brier score, ECE; mean $\pm$ std over seeds). The regime set is the full core spread precisely so that parity cannot be read as cherry-picking: it includes regimes where the inductive bias is exactly in family (A--C), misspecified in the transform (D), and misspecified in the head (E). We report per-regime deltas between the proposed model and the unconstrained RNN/LSTM baselines rather than a single average, so that any regime-specific deficit---in particular on the recency-driven regime A---is quantified rather than averaged away; each delta carries a two-level paired bootstrap $95\%$ confidence interval (Section~\ref{sec:eval-protocol}) judged against the $-0.01$ non-inferiority margin shared with E9. The regime-B and regime-D runs are shared with E1, which reuses the same trained models for its violation analysis.

\paragraph{E4: robustness to distribution shift (tests C4a).}
Regime C under the distribution-shift protocol run as severity sweeps, with predictive scoring plus the random and coordinatewise operators, on the proposed model, the static monotone aggregate, and the unconstrained RNN and LSTM. Regime C is chosen for headroom: its baseline discrimination leaves room to observe degradation, and both state channels are active so the shift genuinely stresses the mechanism, whereas regime A is near ceiling and regime E near floor, either of which would blunt the contrast. Each shift family---scale, upper-tail amplification, spike frequency, and mixture contamination---is swept over severity (Appendix~\ref{app:protocols}), so the result is an AUC-versus-severity curve rather than a single, cherry-pickable operating point. Because labels are frozen at pre-shift values, the battery is restricted to ordering-preserving covariate shifts. The verdict splits into two metrics: R1, violation counts, where the monotone models must remain at zero at every severity while unconstrained violations grow; and R2, AUC across severity, where the static aggregate's ranking quality should stay flat under broad ordering-preserving shift and the constrained recurrent model should degrade most gracefully among the sequence models. Calibration metrics are reported but degrade by design under exposure-inflating shift with frozen labels; they are not the C4a verdict.

\paragraph{E5: robustness to missingness (tests C4b).}
Mirrors E4 on regime C with the missingness protocol: MCAR masking at rates $\{5,10,15,20,30\}\%$ and block missingness with block lengths $\{2,4,6,8\}$, each followed by zero, forward-fill, or mean imputation, with the same model set and operators as E4. The block lengths extend to a third of the sequence length, so the grid covers outages long enough to stress the recency state. Retaining regime C keeps the two robustness experiments directly comparable, so shift and missingness are assessed against the same predictive baseline. Labels are computed from the unmasked sequences, so the protocol degrades the observations while leaving the ground-truth mechanism intact. We report AUC versus missingness rate per imputation scheme, and violation counts for every mechanism--imputation combination. The claim is supported if the monotone models preserve zero violations under every combination and degrade gracefully in AUC relative to the unconstrained and static-aggregate baselines.

\paragraph{E6: interpretability via shape recovery (tests C5).}
The proposed model with learned $g_1$, $g_2$, and $\alpha$, trained under the clean protocol on regimes A, B, and D, and evaluated with the shape-recovery operator. These regimes are chosen because their generating transforms span three qualitatively different admissible shapes---logarithmic (A), identity (B), and hinge (D)---so successful recovery cannot be attributed to all targets being log-like. We additionally ablate the activation of the learned monotone transforms (tanh, softplus, ReLU, sigmoid) to test the expressivity restriction of Section~\ref{sec:architecture}: convex activations should fail on the concave log target (regime A) and remain adequate on the linear and convex targets (B, D). We compare recovered and generating shapes through overlays scored by both RMSE on a common evaluation grid and correlation, and compare the recovered $\alpha$ to $\alpha^*$ in regime A, the one regime whose label depends on the recency channel (in the peak-driven regimes B and D the decay is not identified, since the label does not constrain it); a decay-recovery sweep additionally retrains the tanh arm at $\alpha^* \in \{0.5, 0.7, 0.95\}$ under the E2 recalibration, so recovery is tested across the decay range rather than at the reference point alone. The expected outcome is faithful recovery of $g_2$ and $\alpha$, with the identifiability of $g_1$ characterized per activation rather than asserted, since distinct $(g_1, \alpha)$ pairs can induce similar distributions of $e_T$.

\paragraph{E7: insensitivity of the guarantee to admissible design (tests C6).}
The 48-configuration ablation grid of Section~\ref{sec:models}, trained under the clean protocol on regimes A--D, with the monotonicity suite run on every configuration. The regimes are chosen so that each design knob is load-bearing in at least one of them: A is EWMA-only, exercising $g_1$ and $\alpha$ while $g_2$ is correctly inert; B (pure peak) and D (threshold, with generating $g_2^*$ a hinge) make the peak channel the dominant driver; and C is the mixed case. We report the violation count for every configuration--regime pair and the AUC distribution across the grid. The claim is supported if every configuration records zero violations---establishing that the guarantee is design-independent rather than an artifact of one tuned configuration---and accuracy stays within a tight band with no catastrophic configuration. Secondary observations, such as whether the shape of $g_2$ is immaterial even in the peak-driven regimes and whether $g_1 = \log$ and learned $\alpha$ emerge as mild optima, are reported as observations rather than claims.

\paragraph{E8: basis-function sufficiency (tests C7).}
The core regimes A--E plus the ten basis-adversarial regimes F--O, under the clean protocol, with the basis models M1--M9 in both the learned and frozen arms. The core regimes establish where EWMA+HWM should be near-oracle; each adversarial regime is a minimal adversary engineered around one admissible basis (or, for regimes H, I, and M, a small combination), so together they trace the sufficiency boundary rather than a single anecdote. The experiment runs in four phases:
\begin{enumerate}
  \item mutual-information rankings of individual bases against the label, including the incremental information of each basis over the pair $\{$EWMA, HWM$\}$---the headline incremental MI is permutation-null debiased, because the stratified plug-in estimator inflates conditional MI for discrete bases in small conditioning strata, and the raw estimate is recorded alongside (Appendix~\ref{app:protocols});
  \item a Pareto subset search (greedy forward selection over subset sizes) reporting the smallest admissible basis set reaching at least $95\%$ of the oracle AUC on the training regime C;
  \item the structured basis-model ablations in both arms, which form a ladder per adversarial regime---frozen subset without the target basis, frozen subset with it, and the learned subset without it---so that every performance gap is attributed either to the basis level or to the learnable channel;
  \item cross-regime generalization, training on regime C and evaluating on the held-out adversarial regimes.
\end{enumerate}
Non-monotone statistics (variance, trend, time since last breach) participate only in the oracle pool for comparison and are excluded from any admissible score, in keeping with Section~\ref{sec:setup}. All admissible subsets must record zero violations.

\paragraph{E9: capacity robustness (tests C1 and C3 across model sizes).}
The core model set re-instantiated at four capacity tiers---tiny, small, medium, and large---on regimes A and C under the clean protocol, reporting AUC, violation counts, and parameter counts per tier; the unconstrained LSTM spans roughly $360$ to $200{,}000$ parameters across the tiers. Regime C is the mixed default; regime A is the recency-only regime that most stresses the constrained class, so capacity is swept where the monotone model has the least slack rather than only where it is comfortable. This is a robustness check rather than a new claim: it tests that zero violations hold at every capacity, ruling out the reading that the guarantee is a small-model artifact, and that the predictive parity of E3 is not overturned when the unconstrained baselines are scaled up. Parity is tested formally rather than by inspection: per (size, regime) cell we report $\Delta\mathrm{AUC}$ between the monotone model and a fixed comparator (the unconstrained LSTM) with a two-level paired bootstrap 95\% confidence interval (Section~\ref{sec:eval-protocol}), declaring non-inferiority when the interval's lower bound is at least $-0.01$; the per-cell best unconstrained baseline serves as a conservative sensitivity comparator. A training-adequacy audit accompanies the sweep: runs that hit the epoch cap while validation loss was still improving within the early-stopping patience window are flagged, and parity verdicts in cells containing flagged baseline runs carry an explicit caveat, so parity cannot be attributed to undertrained large baselines. E9 also justifies the lean default capacity (the medium tier) used by E1--E8.

\paragraph{E10: learned vs.\ hand-crafted monotone aggregation (tests C8).}
The grid-baseline model set---the proposed model, the static monotone aggregate as the naive rung, the practice and steelman grids under monotone MLP heads and under calibrated monotone lattices, and the three constrained-LightGBM comparators (practice, steelman, raw lags)---compared in four phases. \emph{Parity ladder:} regimes A--E under the clean protocol, with paired two-level bootstrap deltas against the constrained gradient-boosted-machine (GBM) comparators (Section~\ref{sec:eval-protocol}) and a parsimony account (parameter and feature counts) per model. At full sample size on in-family mechanisms, parity is the expected outcome and is not the verdict: at or near a grid point (the default $\alpha^* = 0.9$ sits $0.05$ from the steelman grid's nearest decay), the grid is essentially as in-family as the learned channel. \emph{Misspecification curves:} the E2 decay sweep and an analogous threshold sweep on regime D (Appendix~\ref{app:protocols}), reporting each baseline's paired AUC gap against the learned channel as a function of the distance from the generating $\alpha^*$ or $\tau^*$ to the nearest grid value. This is the designed contrast: the grids' decays and thresholds are fixed ex ante, so their misspecification cost is the price of hand-crafting, while the learned channel adapts. \emph{Sample-size sweep:} training-set sizes $\{1000, \dots, 50000\}$ on regimes A and C with fixed full validation/test splits, testing whether the compact learned channel degrades more slowly than the multi-column grids and the tree ensembles as labeled data becomes scarce. Because the suite calibrates prevalence to ${\sim}25\%$---far above rare-event default rates---a low-prevalence sensitivity accompanies the sweep: regime C recalibrated to $5\%$ prevalence (intercept only; slopes and decay unchanged), rerun at full $n$ and $n = 1000$ against the practice-grid and raw-lags comparators, so the small-sample contrast is checked under class rarity and not only label scarcity. \emph{Stress extrapolation:} the input-displacing E4 severity ladders (scale and upper tail) on regime C, augmented with score-degeneracy diagnostics (tie rate and the fraction of scores saturated at the maximum), which make the predicted structural contrast---piecewise-constant ensembles going flat beyond the training support---measurable. All monotone models must record zero violations on every applicable operator throughout.

\paragraph{E11: functional emulation of hand-crafted statistics (corollary of C7 and C8).}
E8 and E10 measure subsumption \emph{predictively}: the learned channel closes accuracy gaps without being given the target basis or grid feature. E11 makes the same statement \emph{constructively}, by regression. A single-basis learned channel---the e-channel (learned $g_1$, learned $\alpha$), the m-channel (learned $g_2$ feeding $\max$), the explicit LWM channel, the explicit early-window channel (the e-channel recurrence with increments accepted only for $t \le t_0$), or the explicit run-length channel (the current-run/max-run state above a fixed threshold)---is trained by direct mean-squared-error regression to reproduce a fixed target statistic of the exposure path, standardized by train-split mean and scale (a monotone affine map), and scored by held-out $R^2$ and Spearman rank correlation. The eighteen targets span four structural families: \emph{separable accumulations} (EWMA at the steelman grid's decays plus off-grid probes, cumulative sum, breach count, cumulative excess, last value), which the $(g_1, \alpha)$ family provably spans (Proposition~\ref{prop:rank-closure}; for the breach count, up to the tie caveat discharged in Appendix~\ref{app:rank-closure}); \emph{trailing-window aggregates} (window means at $w \in \{3, 12\}$), whose step-shaped weight profile no member of the geometric family realizes (Appendix~\ref{app:rank-closure}; the rank-closure status of this one tier is open); \emph{extremal statistics} (HWM and windowed maxima); and \emph{negative controls} (longest run, an early-window EWMA, and the LWM), engineered to lie outside the separable family---the run and the early window provably outside even its rank closure, the floor inside the closure but outside the family (Proposition~\ref{prop:rank-closure} and the remark following it). The fitted regressors are channel diagnostics and never enter a score, so admissibility is untouched. Expected signature: rank correlation ${\approx}1$ on the separable family---including decays off every grid---high but imperfect on windows, and clearly reduced on the controls, mirroring the E8 boundary functionally. Because the window tier is the one family with a measured ceiling and no designed adversary, a companion diagnostic prices it predictively: regime P (Appendix~\ref{app:protocols}), whose label is a pure trailing-window sum at a practice-grid window ($k = 6$), trains the two-state channel, the learned repair rung with the recent-sum basis, and the practice-grid and raw-lags comparators under the clean protocol, with the per-seed Bayes index as the ceiling.

\begin{table}[t]
\centering
\caption{Claims-to-experiments map. ``Core'' denotes the six-model core set of Section~\ref{sec:models}; ``lattice'' the calibrated-lattice comparator; ``robustness set'' the proposed model, the static monotone aggregate, and the unconstrained RNN and LSTM; ``grid set'' the grid-baseline set of Section~\ref{sec:models} including the grid lattices. E10's ``$n$'' is the training-size subsampling phase (regimes A and C), which carries the $5\%$-prevalence sensitivity; E11's regime P is its predictive window-price diagnostic (two-state channel, recent-sum rung, and grid comparators against the Bayes ceiling).}
\label{tab:claims-map}
\scriptsize
\setlength{\tabcolsep}{1pt}
\begin{tabular}{@{}llllll@{}}
\toprule
Exp & Claim & Regimes & Protocol & Operators & Models \\
\midrule
E1 & C1 & B, D & clean & rand., adv., coord., P1--P5, MC drop. & core + PGL + dropout var. \\
E2 & C2 & A (swept), C & clean & coordinate, probes & proposed, mono.\ aggregate \\
E3 & C3 & A--E & clean & predictive only & core + lattice \\
E4 & C4a & C & dist.\ shift & predictive; rand., coord. & robustness set \\
E5 & C4b & C & missingness & predictive; rand., coord. & robustness set \\
E6 & C5 & A, B, D & clean & shape recovery & proposed (learned $g_1, g_2, \alpha$) \\
E7 & C6 & A--D & clean & monotonicity suite & ablation grid (48) \\
E8 & C7 & A--O & clean & deb.\ MI, Pareto, abl., cross-reg. & basis M1--M9, 2 arms \\
E9 & C1, C3 & A, C & clean & predictive; monotonicity; non-inf. & core $\times$ 4 sizes \\
E10 & C8 & A--E; A, D (swept); $n$; C & clean; subsamp.; shift & paired deltas; misspec.; degen. & grid set \\
E11 & C7/C8 (constr.) & 18 targets; P & clean & MSE regr.; $R^2$, Spearman & single-basis channels \\
\bottomrule
\end{tabular}
\end{table}

\section{Results}
\label{sec:results}

This section reports experiments E1--E10 and the functional-emulation study E11 against their claims; all statistics are aggregated over five seeds (mean $\pm$ standard deviation unless stated otherwise). E1--E7 and E9 certify the expected properties of the instrument---the guarantee, its cost, and its robustness---on the core regimes A--E, whose labels are drawn from the same EWMA/HWM family the model parameterizes; parity and shape recovery there are in-family consistency checks of a well-specified model rather than evidence about misspecified settings, and are reported compactly. E8, E10, and E11 carry the paper's substantive findings---the out-of-family boundary and the misspecification cost---and are reported in full. Table~\ref{tab:verdicts} summarizes the verdicts so that each can be audited from its primary subsection.

\begin{table}[t]
\centering
\caption{Claim-by-claim verdicts, in the wording of the primary results subsections.}
\label{tab:verdicts}
\small
\begin{tabular}{llp{9.2cm}}
\toprule
Claim & Experiments & Verdict \\
\midrule
C1 & E1, E9 & Supported in full: zero violations on every operator and per MC-dropout sample, identically at every capacity tier. \\
C2 & E2 & Supported: inverted-U decay-sweep gap over the static aggregate, peak $+0.044$ ($+0.034$ under a tanh aggregate head), closing toward both sweep ends. \\
C3 & E3, E9 & Supported: worst paired CI lower bound $-0.0017$ across the ten regime--baseline comparisons; capacity-robust from the small tier upward, with the four-parameter tiny tier's one miss an expressiveness floor. \\
C4a & E4 & Holds as stated: guarantee exact under every ordering-preserving shift; the static aggregate is the more shift-robust scorer, and the learned channel degrades most gracefully among sequence models. \\
C4b & E5 & Supported, with its boundary quantified rather than asserted. \\
C5 & E6 & Supported in asymmetric form: $g_2$ and $\alpha$ identifiable regardless of parameterization, $g_1$ given a non-convex monotone activation. \\
C6 & E7 & Supported: zero violations at every configuration; the accuracy band is regime-dependent (${\le}0.003$ on the peak regimes, $0.035$ on the recency regime). \\
C7 & E8; E11 & Supported in sharpened form: the separable family is subsumed; the run and the early window fail structurally, the floor as a learnability boundary, each repaired by one admissible basis. \\
C8 & E10; E11 & Supported as a division of labor: parity at full sample size on mechanisms in or near the learned family; advantages on uncovered decays, scarce data, and stress; symmetric boundaries at extreme severity (grid-fed neural heads) and on the boundary regimes (raw lags). \\
\bottomrule
\end{tabular}
\end{table}

\subsection{E1: Monotonicity by construction (C1)}
We evaluate the full operator suite (random, adversarial, coordinatewise, structured probes P1--P5, and per-sample MC dropout) on regimes B and D. Table~\ref{tab:e1-violations} reports violation rates pooled over five seeds. Every monotone-by-construction model---the proposed exposure RNN, its MC-dropout variant, the generic monotone RNN, and the static monotone aggregate---records \emph{zero} violations on all $128{,}724$ ordered test pairs per model (the random, adversarial, and coordinatewise checks verified in double precision; the structured probes in inference precision, Appendix~\ref{app:repro}). The dropout-equipped variant is violation-free for every one of its $200$ sampled dropout sub-networks ($20$ pinned masks per regime--seed run, each verified over its dedicated $1{,}000$-pair budget), with maximum violation magnitude exactly zero, instantiating Corollary~\ref{cor:mcdropout} empirically. The unconstrained baselines violate on $49{,}601$--$51{,}958$ pairs ($38.5$--$40.4\%$ of their test budgets), with the highest rates concentrated on the adversarial operator and the structured probes; the totals are not an artifact of adversarial search, since under natural random perturbations alone the same models violate on $8.6$--$13.7\%$ of pairs on regime B and $31.9$--$40.5\%$ on regime D (Table~\ref{tab:e1-violations}). Penalty training reduces the totals only to $44{,}428$--$45{,}837$ ($34.5$--$35.6\%$) and still leaves $25$--$49\%$ violation rates on the adversarial and probe operators; the rates are quoted at the disclosed penalty weight ($\lambda = 1$, Appendix~\ref{app:protocols}), and while heavier weights would trade accuracy for lower rates, the property at issue is qualitative---a training-time penalty shrinks violations near the training distribution but provides no guarantee away from it. C1 is supported in full.

\begin{table}[t]
\centering
\caption{E1 monotonicity violation rates by operator family (pooled over five seeds; per-seed budgets: $2{,}500$ random, $150$ adversarial, $7{,}200$ coordinatewise, and ${\sim}3{,}000$ structured-probe pairs per regime, the probe counts being data-dependent by construction, Appendix~\ref{app:probes}). Monotone-by-construction models (top block) record zero violations on every operator; penalty-trained (PGL) baselines reduce but do not eliminate violations; unconstrained models violate on roughly a third to nearly two-thirds of non-random tests. The MC-dropout variant is additionally violation-free for each of its $200$ sampled dropout sub-networks.}
\label{tab:e1-violations}
\small
\setlength{\tabcolsep}{5pt}
\begin{tabular}{lcccccccc}
\toprule
 & \multicolumn{4}{c}{Regime B} & \multicolumn{4}{c}{Regime D} \\
\cmidrule(lr){2-5}\cmidrule(lr){6-9}
Model & Rand. & Adv. & Coord. & Probes & Rand. & Adv. & Coord. & Probes \\
\midrule
Monotone exposure RNN & $0$ & $0$ & $0$ & $0$ & $0$ & $0$ & $0$ & $0$ \\
Monotone exposure RNN (MC) & $0$ & $0$ & $0$ & $0$ & $0$ & $0$ & $0$ & $0$ \\
Generic monotone RNN & $0$ & $0$ & $0$ & $0$ & $0$ & $0$ & $0$ & $0$ \\
Monotone aggregate & $0$ & $0$ & $0$ & $0$ & $0$ & $0$ & $0$ & $0$ \\
\midrule
Unconstrained RNN + PGL & $6.3\%$ & $30.7\%$ & $34.3\%$ & $35.7\%$ & $28.4\%$ & $48.3\%$ & $43.8\%$ & $49.0\%$ \\
Unconstrained LSTM + PGL & $6.7\%$ & $25.2\%$ & $36.0\%$ & $36.1\%$ & $28.8\%$ & $43.7\%$ & $39.8\%$ & $44.6\%$ \\
\midrule
Unconstrained aggregate & $8.6\%$ & $44.0\%$ & $37.5\%$ & $39.2\%$ & $33.5\%$ & $58.8\%$ & $51.8\%$ & $52.0\%$ \\
Unconstrained RNN & $9.5\%$ & $40.7\%$ & $39.3\%$ & $40.0\%$ & $40.5\%$ & $62.7\%$ & $42.7\%$ & $46.6\%$ \\
Unconstrained LSTM & $13.7\%$ & $31.3\%$ & $36.7\%$ & $37.8\%$ & $31.9\%$ & $45.3\%$ & $46.7\%$ & $52.8\%$ \\
\bottomrule
\end{tabular}

\end{table}

\subsection{E2: Necessity of temporal state (C2)}
Figure~\ref{fig:e2-gap} and Table~\ref{tab:e2-gap} report the decay sweep on regime A. The recurrent model holds AUC between $0.837$ and $0.848$ across the entire grid, while the static aggregate dips and recovers, producing the predicted inverted-U gap curve: the paired AUC gap is $+0.0018 \pm 0.0014$ at $\alpha^*{=}0.1$, climbs to a peak of $+0.0439 \pm 0.0030$ at $\alpha^*{=}0.8$, and closes again to $+0.0071 \pm 0.0015$ at $\alpha^*{=}0.99$. Both tails behave as the design's degeneracy prediction (Section~\ref{sec:experiments}) anticipates, with one instructive asymmetry. At $\alpha^* \to 0$ the generating summary degenerates to $\log(1 + x_T)$, a monotone reparameterization of the aggregate's last-value feature, and the gap nearly closes ($+0.0018$); at $\alpha^* \to 1$ it degenerates to the order-invariant $\sum_t \log(1 + x_t)$, so the temporal advantage vanishes but a transform mismatch against the aggregate's raw-sum feature remains, visible in the larger right-endpoint residual ($+0.0071$). Intermediate decay creates genuinely temporal structure that no static summary captures. On the regime-C control, where the aggregate's $\max$ feature recovers the label, the gap shrinks to $+0.0103 \pm 0.0018$, less than a quarter of the within-A peak. One sensitivity is measured rather than assumed: the shipped aggregate head uses a ReLU activation, which under non-negative weights is convex-only (Section~\ref{sec:architecture})---the same restriction E6 diagnoses for the proposed model's transforms---while the proposed model's monotone components use tanh. Retraining the aggregate with a tanh head on the full sweep (same seeds and splits) shifts its AUC by at most $+0.0096 \pm 0.0006$ (at $\alpha^*{=}0.8$; $+0.0009$/$+0.0004$ at the endpoints, $+0.0006$ on the regime-C control), so the peak gap reads $+0.0343$ against the strongest aggregate head: the inverted-U, its location, and both endpoint degeneracies are unchanged, and roughly a fifth of the peak magnitude is attributable to the baseline's head class rather than to missing temporal state. Both models record zero violations at every grid point, isolating the gap as purely predictive. C2 is supported.

\begin{figure}[t]
\centering
\includegraphics[width=\linewidth]{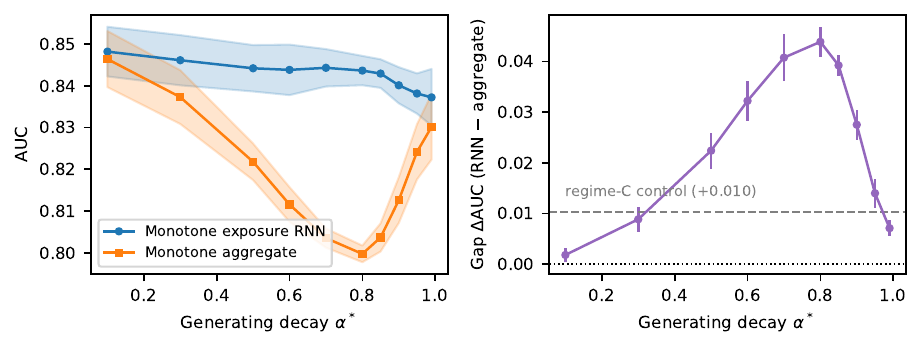}
\caption{E2 decay sweep on regime A (five seeds; bands and bars are $\pm 1$ std). \emph{Left:} test AUC of the recurrent monotone model and the static monotone aggregate as the generating decay $\alpha^*$ varies. \emph{Right:} the paired AUC gap forms the predicted inverted-U, peaking at $+0.044$ at $\alpha^*{=}0.8$ and closing toward both endpoints where the generating EWMA degenerates to static summaries (the last value at $\alpha^*{\to}0$, a saturating sum at $\alpha^*{\to}1$); the dashed line marks the regime-C control gap.}
\label{fig:e2-gap}
\end{figure}

\begin{table}[t]
\centering
\caption{E2 AUC and paired gap by generating decay $\alpha^*$ on regime A, with the regime-C control (mean $\pm$ std over five seeds; the gap is the per-seed paired difference). Both models record zero monotonicity violations at every grid point.}
\label{tab:e2-gap}
\small
\begin{tabular}{lccc}
\toprule
Decay $\alpha^*$ & Monotone exposure RNN & Monotone aggregate & Gap $\Delta$AUC \\
\midrule
0.10 & $0.8482 \pm 0.0060$ & $0.8464 \pm 0.0067$ & $0.0018 \pm 0.0014$ \\
0.30 & $0.8461 \pm 0.0060$ & $0.8373 \pm 0.0064$ & $0.0088 \pm 0.0024$ \\
0.50 & $0.8442 \pm 0.0056$ & $0.8218 \pm 0.0044$ & $0.0224 \pm 0.0036$ \\
0.60 & $0.8438 \pm 0.0061$ & $0.8116 \pm 0.0041$ & $0.0322 \pm 0.0039$ \\
0.70 & $0.8443 \pm 0.0045$ & $0.8035 \pm 0.0023$ & $0.0408 \pm 0.0046$ \\
0.80 & $0.8436 \pm 0.0035$ & $0.7997 \pm 0.0020$ & $0.0439 \pm 0.0030$ \\
0.85 & $0.8429 \pm 0.0034$ & $0.8037 \pm 0.0034$ & $0.0393 \pm 0.0021$ \\
0.90 & $0.8401 \pm 0.0043$ & $0.8126 \pm 0.0055$ & $0.0275 \pm 0.0030$ \\
0.95 & $0.8381 \pm 0.0048$ & $0.8242 \pm 0.0066$ & $0.0140 \pm 0.0028$ \\
0.99 & $0.8372 \pm 0.0068$ & $0.8301 \pm 0.0078$ & $0.0071 \pm 0.0015$ \\
\midrule
Regime C (control) & $0.8535 \pm 0.0045$ & $0.8432 \pm 0.0050$ & $0.0103 \pm 0.0018$ \\
\bottomrule
\end{tabular}

\end{table}

\subsection{E3: Predictive cost of the constraint (C3)}
Table~\ref{tab:e3-auc} reports AUC for the core model set and the calibrated-lattice comparator on regimes A--E; log loss, Brier score, and ECE appear in Appendix~\ref{app:results-tables}. Per-regime deltas, not a single average, carry the verdict. Against the stronger of the two unconstrained sequence baselines, the constrained model's AUC delta is $-0.0004$ (A), $+0.0023$ (B), $+0.0003$ (C), $+0.0005$ (D), and $+0.0001$ (E), and the constrained model posts the best mean AUC in four of the five regimes. Table~\ref{tab:e3-delta} reports fully paired two-level bootstrap $95\%$ CIs on the shared test splits (the Section~\ref{sec:eval-protocol} machinery, shared with the E5 and E9 verdicts). Across all ten regime--baseline comparisons the worst lower bound is $-0.0017$, well inside the $-0.01$ non-inferiority margin shared with E9 (a design constant, not an externally registered threshold); the verdict also holds at margin $0.005$. The one negative point estimate, regime A against the LSTM ($-0.0004$, CI $[-0.0009, +0.0000]$), bounds any residual cost in the pure-recency regime below a tenth of an AUC point; the verdict there is non-inferiority, not superiority. The training budget is asymmetric only in the conservative direction: the constrained model trains to the 100-epoch cap in 23 of 25 runs while the unconstrained sequence baselines early-stop well inside it (28--96 epochs), so a larger budget could only improve the constrained side. The only model with a visible deficit is the static monotone aggregate in the recency-driven regime A ($0.8120$ vs.\ $0.8397$), which is the E2 phenomenon, not a cost of the monotonicity constraint. Neither a modern monotone head class nor a non-convex activation repairs a static featurization: the calibrated-lattice comparator over the same five aggregates narrows the deficit only marginally ($0.8185$), retraining the aggregate's own ReLU head with tanh reaches $0.8177$ (paired head effect $+0.0057 \pm 0.0008$ on regime A, within $\pm 0.0012$ on B--E), and both tie the aggregate baselines elsewhere ($\pm 0.002$ on regimes B--E), with zero violations throughout---the regime-A gap is representational, not an artifact of the head family. Calibration tells the same story: the constrained model's ECE sits at $0.011$--$0.013$ across regimes, indistinguishable from the unconstrained sequence baselines ($0.008$--$0.016$). On mechanisms within the model's representational class, the constraint costs nothing measurable; C3 is supported.

\begin{table}[t]
\centering
\caption{E3 test AUC (mean $\pm$ std over five seeds) for the core model set plus the calibrated-lattice comparator on regimes A--E; best per regime in bold. Column abbreviations: monotone exposure RNN, generic monotone RNN, monotone aggregate, calibrated lattice over the same aggregates, unconstrained aggregate, unconstrained RNN, unconstrained LSTM. The constrained exposure RNN matches or exceeds the unconstrained sequence baselines in every regime; log loss, Brier, and ECE are reported in Appendix~\ref{app:results-tables}.}
\label{tab:e3-auc}
\scriptsize
\setlength{\tabcolsep}{2.5pt}
\begin{tabular}{lccccccc}
\toprule
Regime & Mono.\ exp.\ RNN & Gen.\ mono.\ RNN & Mono.\ agg. & Lattice agg. & Unc.\ agg. & Unc.\ RNN & Unc.\ LSTM \\
\midrule
A & $0.8397 \pm 0.0043$ & $0.8308 \pm 0.0042$ & $0.8120 \pm 0.0056$ & $0.8185 \pm 0.0047$ & $0.8212 \pm 0.0054$ & $0.8396 \pm 0.0040$ & $\mathbf{0.8402 \pm 0.0041}$ \\
B & $\mathbf{0.7540 \pm 0.0087}$ & $0.7302 \pm 0.0056$ & $0.7538 \pm 0.0089$ & $0.7534 \pm 0.0086$ & $0.7533 \pm 0.0084$ & $0.7501 \pm 0.0083$ & $0.7516 \pm 0.0087$ \\
C & $\mathbf{0.8535 \pm 0.0045}$ & $0.8461 \pm 0.0038$ & $0.8432 \pm 0.0050$ & $0.8450 \pm 0.0046$ & $0.8450 \pm 0.0051$ & $0.8519 \pm 0.0046$ & $0.8531 \pm 0.0048$ \\
D & $\mathbf{0.7142 \pm 0.0109}$ & $0.7061 \pm 0.0070$ & $0.7129 \pm 0.0089$ & $0.7139 \pm 0.0110$ & $0.7139 \pm 0.0091$ & $0.7105 \pm 0.0123$ & $0.7136 \pm 0.0102$ \\
E & $\mathbf{0.8757 \pm 0.0033}$ & $0.8635 \pm 0.0031$ & $0.8709 \pm 0.0045$ & $0.8709 \pm 0.0040$ & $0.8710 \pm 0.0045$ & $0.8746 \pm 0.0038$ & $0.8756 \pm 0.0035$ \\
\bottomrule
\end{tabular}

\end{table}

\begin{table}[t]
\centering
\caption{E3 paired AUC deltas: monotone exposure RNN minus each unconstrained sequence baseline, per regime. Deltas are computed within (regime, seed) cells on the shared test split; intervals are two-level paired bootstrap $95\%$ CIs over the stored per-example predictions (resampling seeds, then test examples on shared indices). Every lower bound clears the $-0.01$ non-inferiority margin prespecified for the capacity study (E9).}
\label{tab:e3-delta}
\footnotesize
\begin{tabular}{lcccc}
\toprule
 & \multicolumn{2}{c}{vs.\ Unc.\ RNN} & \multicolumn{2}{c}{vs.\ Unc.\ LSTM} \\
\cmidrule(lr){2-3}\cmidrule(lr){4-5}
Regime & $\Delta$AUC & 95\% CI & $\Delta$AUC & 95\% CI \\
\midrule
A & $+0.0001$ & $[-0.0006, +0.0009]$ & $-0.0004$ & $[-0.0009, +0.0000]$ \\
B & $+0.0038$ & $[+0.0022, +0.0053]$ & $+0.0023$ & $[+0.0013, +0.0033]$ \\
C & $+0.0016$ & $[+0.0008, +0.0026]$ & $+0.0003$ & $[-0.0002, +0.0009]$ \\
D & $+0.0037$ & $[-0.0003, +0.0078]$ & $+0.0005$ & $[-0.0017, +0.0025]$ \\
E & $+0.0010$ & $[+0.0000, +0.0022]$ & $+0.0001$ & $[-0.0006, +0.0008]$ \\
\bottomrule
\end{tabular}

\end{table}

\subsection{E4: Robustness to distribution shift (C4a)}
On regime C, models trained on the base distribution are evaluated under severity sweeps of four covariate-shift families (Appendix~\ref{app:protocols}). (R1) \emph{Violations:} both monotone models remain at zero across all 15 shifted severities (16 evaluation conditions including the unshifted baseline), while the unconstrained RNN and LSTM accumulate $51{,}596$ and $52{,}429$ violations respectively, concentrated in the scale and upper-tail families (Table~\ref{tab:e4-violations}). (R2) \emph{Ranking quality:} Figure~\ref{fig:e4-shift} shows AUC across severity. Under input-displacing shifts the constrained recurrent model degrades most gracefully among the sequence models: at scale $\times 8$ its AUC falls from $0.8535$ to $0.8147$ ($-0.039$), versus $-0.054$ for the unconstrained LSTM ($0.7996$) and $-0.116$ for the unconstrained RNN ($0.7363$); at upper-tail $\gamma{=}10$ the ordering is the same ($0.8225$ vs.\ $0.8064$ and $0.7806$). The static monotone aggregate is essentially flat (exactly $0.843$ at every scale severity, and within $0.003$ under upper-tail amplification, where the kinked amplification map does not commute with the aggregate's sums and means, so flatness is only approximate), the expected behavior when a shift displaces every sequence coherently: each aggregate feature is transformed monotonically across the whole test set, leaving its ranks essentially unchanged. Spike-frequency shifts---the designed null case---move all models by at most $0.022$. One family is an exception: mixture contamination rescales a random $(1-\pi)$ fraction of test rows by $\times 3$ while leaving the rest at base (pathwise $x' \succeq x$ row by row, so the frozen labels remain valid), and it degrades all models together (at contamination $1-\pi=0.5$: $0.7751$ for the constrained model vs.\ $0.7672$--$0.7737$ for the baselines). The shared degradation is mechanical. Selective per-row displacement destroys cross-sequence score comparability---a contaminated negative leapfrogs base positives---and no scorer consuming the displaced inputs can undo it: provably for any monotone scorer, whose score on a pathwise-dominating row cannot decrease, and empirically here for the unconstrained baselines as well. This is also why the otherwise-flat aggregate degrades under contamination but not under uniform scaling. These behaviors are specific to shifts that preserve the input ordering, which is how C4a is stated; the flatness belongs to the static aggregate, not to the learned channel, so on this axis the engineered aggregate is strictly the more shift-robust scorer. Calibration metrics degrade across all families by design, since labels are frozen at pre-shift exposures. C4a holds in exactly this form.

\begin{figure}[tp]
\centering
\includegraphics[width=\linewidth]{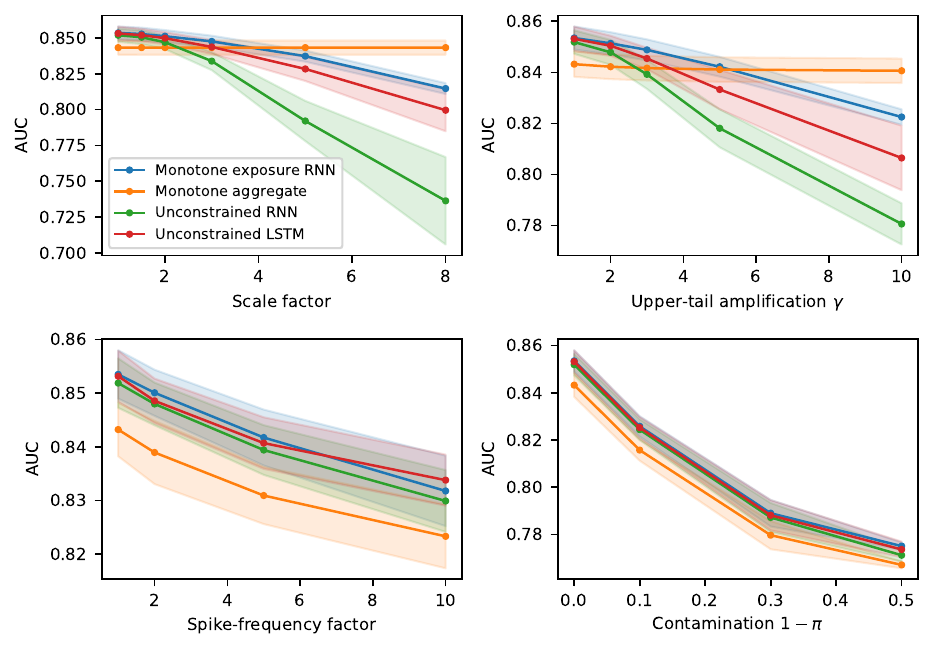}
\caption{E4 test AUC under severity sweeps of the four shift families on regime C (five seeds; bands are $\pm 1$ std). The static monotone aggregate is flat under input-displacing shifts (exactly under scale, within $0.003$ under upper tail); the constrained recurrent model degrades most gracefully among sequence models; mixture contamination (bottom right) degrades all models together: its selective per-row displacement destroys cross-sequence score comparability (see text).}
\label{fig:e4-shift}
\end{figure}

\begin{table}[t]
\centering
\caption{E4 monotonicity violations by shift family (counts summed over severities and five seeds, identical test budgets across models). Monotone models remain at zero under every shift; unconstrained violations concentrate where inputs are displaced furthest from the training support.}
\label{tab:e4-violations}
\small
\begin{tabular}{lrrrrrr}
\toprule
Model & Baseline & Scale & Upper tail & Spike freq. & Mixture & Total \\
\midrule
Monotone exposure RNN & 0 & 0 & 0 & 0 & 0 & 0 \\
Monotone aggregate & 0 & 0 & 0 & 0 & 0 & 0 \\
Unconstrained RNN & 1,302 & 23,758 & 15,960 & 4,083 & 6,493 & 51,596 \\
Unconstrained LSTM & 1,182 & 24,015 & 17,152 & 4,408 & 5,672 & 52,429 \\
\bottomrule
\end{tabular}

\end{table}

\subsection{E5: Robustness to missingness (C4b)}
Figure~\ref{fig:e5-missingness} reports AUC across the missingness grid on regime C. From the clean anchor ($0.8535$), the constrained model degrades gracefully under MCAR: $-0.012$ at $30\%$ missingness under forward-fill and $-0.021$ under zero imputation. Contiguous blocks are harder, as the recency structure predicts: $-0.046$ at block length $8$ with zero imputation, recovered to $-0.040$ by forward-fill. The constrained model posts the best or tied-best AUC among the four models in every cell except at the longest block (zero imputation: the static aggregate edges ahead, $0.8087$ vs.\ $0.8079$ at length $8$; mean: a within-noise $0.0005$ LSTM edge). Both monotone models record zero violations under every mechanism--imputation combination, while the unconstrained RNN and LSTM accumulate $2{,}373$ and $2{,}425$ violations (Table~\ref{tab:baseline-violations}): imputation composed with the architecture preserves the guarantee because all three imputation operators are monotone. Table~\ref{tab:e5-noninf} reports the headline paired non-inferiority test on degradation, $\Delta\Delta\text{AUC} = \text{degradation(monotone)} - \text{degradation(baseline)}$, with two-level paired bootstrap $95\%$ CIs. Against the unconstrained RNN and LSTM, all $54$ contrasts pass at margin $0.005$. Against the static monotone aggregate, $25$ of $27$ contrasts pass; the two failures are zero-imputed contiguous blocks of length $6$ and $8$---the structural caveat of recency weighting analyzed in Section~\ref{sec:discussion}. Forward-fill removes the effect ($0.8132$ vs.\ $0.8035$ at block length $8$). C4b is supported, with its boundary quantified rather than asserted.

\begin{figure}[tp]
\centering
\includegraphics[width=\linewidth]{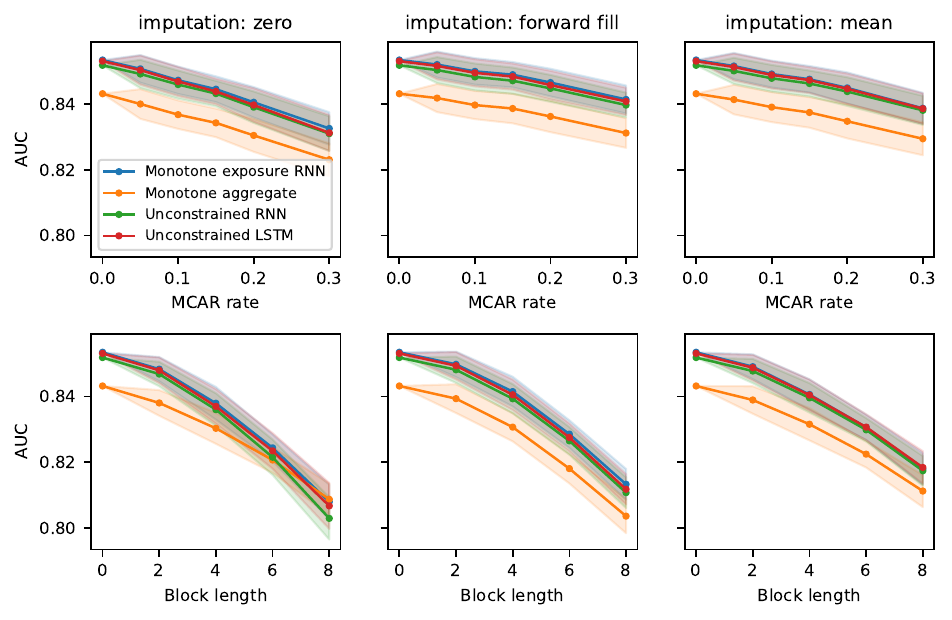}
\caption{E5 test AUC under MCAR (top) and contiguous-block (bottom) missingness on regime C, by imputation (five seeds and three mask draws; bands are $\pm 1$ std). The constrained model degrades gracefully and keeps the best or tied-best AUC across the grid; under zero imputation, lengthening blocks erode its advantage over the static aggregate---the structural caveat of Section~\ref{sec:discussion}---while forward-fill preserves it.}
\label{fig:e5-missingness}
\end{figure}

\begin{table}[t]
\centering
\caption{E5 paired non-inferiority summary: $\Delta\Delta$AUC $=$ degradation(monotone) $-$ degradation(baseline) with two-level paired bootstrap $95\%$ CIs (seeds $\times$ examples, mask draws averaged); a condition passes when the CI upper bound is at most $0.005$. Cells show the worst (largest) mean and CI upper bound across missingness levels and baselines in each group. All $54$ contrasts against the unconstrained baselines pass; the only failures are zero-imputed blocks of length $6$ and $8$ against the static monotone aggregate.}
\label{tab:e5-noninf}
\footnotesize
\setlength{\tabcolsep}{4pt}
\begin{tabular}{llcccccc}
\toprule
 & & \multicolumn{3}{c}{vs.\ unconstrained (RNN, LSTM)} & \multicolumn{3}{c}{vs.\ monotone aggregate} \\
\cmidrule(lr){3-5}\cmidrule(lr){6-8}
Mechanism & Imputation & worst $\Delta\Delta$AUC & worst CI$_{95}$ hi & pass & worst $\Delta\Delta$AUC & worst CI$_{95}$ hi & pass \\
\midrule
MCAR & zero & $+0.0001$ & $+0.0015$ & 10/10 & $+0.0007$ & $+0.0017$ & 5/5 \\
MCAR & forward fill & $-0.0001$ & $+0.0006$ & 10/10 & $+0.0001$ & $+0.0006$ & 5/5 \\
MCAR & mean & $+0.0010$ & $+0.0017$ & 10/10 & $+0.0010$ & $+0.0017$ & 5/5 \\
\midrule
Block & zero & $-0.0009$ & $+0.0011$ & 8/8 & $+0.0111$ & $+0.0121$ & 2/4 \\
Block & forward fill & $-0.0004$ & $+0.0003$ & 8/8 & $+0.0006$ & $+0.0014$ & 4/4 \\
Block & mean & $+0.0011$ & $+0.0017$ & 8/8 & $+0.0035$ & $+0.0044$ & 4/4 \\
\bottomrule
\end{tabular}

\end{table}

\subsection{E6: Interpretability via shape recovery (C5)}
Table~\ref{tab:e6-recovery} and Figure~\ref{fig:e6-shapes} report shape recovery on regimes A ($g^*=\log$), B ($g^*=$ identity), and D ($g^*=$ hinge), with the activation of the learned transforms ablated (tanh, sigmoid, softplus, ReLU). The peak channel and the decay are recovered by every activation: $g_2$ correlations are $0.985$--$0.998$ (RMSE $0.021$--$0.115$ on the normalized grid) against the identity and hinge targets, and the learned decay lands within $0.006$ of $\alpha^*{=}0.90$ for all four activations ($0.899 \pm 0.003$ with tanh). Decay recovery is not specific to the reference point: retraining the tanh arm on regime A recalibrated at off-reference decays (the E2 recalibration, Appendix~\ref{app:protocols}) recovers $\hat{\alpha} = 0.499 \pm 0.008$ at $\alpha^* = 0.5$, $0.698 \pm 0.002$ at $0.7$, and $0.949 \pm 0.002$ at $0.95$---the mean recovered decay lies within $0.002$ of truth at every swept point (per-seed spread at most $\pm 0.008$). The recency transform separates the activations exactly as the expressivity analysis of Section~\ref{sec:architecture} predicts. Non-convex activations recover the concave $\log$: tanh reaches correlation $0.997$ with RMSE $0.036$, sigmoid $0.989$ with $0.070$. Convex activations cannot: ReLU and softplus are pinned at correlation $0.940$ and $0.939$ with RMSE $0.244$ and $0.247$---seven times the tanh error---because the best convex approximation to a concave transform is near-linear. The contrast also shows why both shape metrics are reported: correlation alone flatters the convex fits ($0.94$ reads as high), while RMSE on the common evaluation grid exposes them, and panel (a) of Figure~\ref{fig:e6-shapes} makes the failure visible. C5 is supported in asymmetric form (mechanism analyzed in Section~\ref{sec:discussion}): $g_2$ and $\alpha$ are identifiable regardless of parameterization, and $g_1$ is identifiable given a non-convex monotone activation.

\begin{figure}[tp]
\centering
\includegraphics[width=\linewidth]{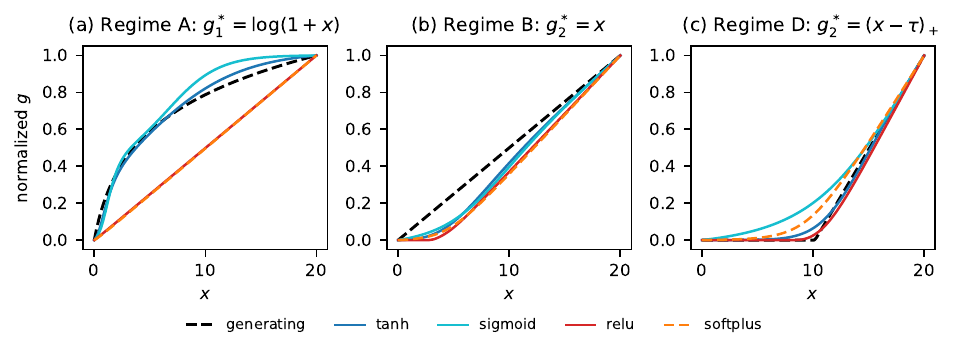}
\caption{E6 recovered transforms (seed-averaged) against the generating shapes; curves are min--max normalized to $[0,1]$ on the evaluation grid because the monotone head absorbs affine scale, so only the shape is identified. \emph{(a)} The concave $g_1^*=\log(1+x)$ of regime A is recovered by the non-convex activations (tanh, sigmoid) while the convex activations (softplus, ReLU) collapse onto a near-linear fit. \emph{(b, c)} The peak transforms of regimes B and D are recovered by all activations on the upper input range; in (b) the recovered curves flatten at low $x$ because the $\max$ channel constrains $g_2$ only where per-sequence maxima concentrate, leaving the low-$x$ region unidentified---the same affine-scale caveat quantified by the RMSE column of Table~\ref{tab:e6-recovery}.}
\label{fig:e6-shapes}
\end{figure}

\begin{table}[t]
\centering
\caption{E6 shape-recovery metrics (mean $\pm$ std over five seeds): correlation and RMSE between the learned and generating transforms on a common normalized grid, and the learned decay on regime A. All activations recover $g_2$ and $\alpha$; only the non-convex activations recover the concave $g_1$, and correlation alone would mask the failure that RMSE exposes.}
\label{tab:e6-recovery}
\footnotesize
\setlength{\tabcolsep}{3pt}
\begin{tabular}{lccccccc}
\toprule
 & \multicolumn{2}{c}{$g_1$, regime A ($\log$)} & \multicolumn{2}{c}{$g_2$, regime B (identity)} & \multicolumn{2}{c}{$g_2$, regime D (hinge)} & $\hat\alpha$, regime A \\
\cmidrule(lr){2-3}\cmidrule(lr){4-5}\cmidrule(lr){6-7}\cmidrule(lr){8-8}
Activation & corr.\ & RMSE & corr.\ & RMSE & corr.\ & RMSE & ($\alpha^*=0.90$) \\
\midrule
tanh & $0.997 \pm 0.003$ & $0.036 \pm 0.018$ & $0.993 \pm 0.003$ & $0.086 \pm 0.016$ & $0.998 \pm 0.000$ & $0.021 \pm 0.002$ & $0.899 \pm 0.003$ \\
sigmoid & $0.989 \pm 0.005$ & $0.070 \pm 0.017$ & $0.992 \pm 0.002$ & $0.089 \pm 0.007$ & $0.985 \pm 0.002$ & $0.088 \pm 0.006$ & $0.897 \pm 0.003$ \\
softplus & $0.939 \pm 0.000$ & $0.247 \pm 0.000$ & $0.989 \pm 0.002$ & $0.115 \pm 0.008$ & $0.993 \pm 0.001$ & $0.055 \pm 0.005$ & $0.894 \pm 0.003$ \\
relu & $0.940 \pm 0.000$ & $0.244 \pm 0.000$ & $0.990 \pm 0.002$ & $0.115 \pm 0.009$ & $0.998 \pm 0.003$ & $0.025 \pm 0.015$ & $0.894 \pm 0.003$ \\
\bottomrule
\end{tabular}

\end{table}

\subsection{E7: Insensitivity of the guarantee to admissible design (C6)}
Table~\ref{tab:e7-grid} reports the pooled violation counts and the AUC distribution across the 48-configuration ablation grid on regimes A--D; Table~\ref{tab:e7-marginals} reports the marginal AUC of each knob setting. Every one of the $192$ configuration--regime pairs records zero violations across its $49{,}250$ ordered test pairs ($9{,}456{,}000$ pooled over the grid), verified in double precision: monotonicity holds identically for every admissible choice of $g_1$, $g_2$, and $\alpha$. Accuracy stays within a quantified band with no catastrophic configuration: the spread of seed-averaged AUC across the $48$ configurations is at most $0.0351$ (regime A), comparable to the E2 headline gap, and on the peak-driven regimes B and D it is $0.0027$ and $0.0007$---below the within-configuration seed noise, so there the choice of admissible design matters less than the random seed.

The secondary observations land as anticipated. The shape of $g_2$ is immaterial even where the peak channel carries the label: its marginal AUCs span $0.0009$ on regime B and $0.0002$ on regime D. The grid's spread is instead concentrated in the recency channel on the recency-driven regimes, where $g_1=\log$ and learned $\alpha$ emerge as mild optima: on regime A the learned-$\alpha$ marginal exceeds the worst fixed setting ($\alpha=0.70$) by $0.024$ and $g_1=\log$ exceeds the identity by $0.008$---the same function-class mechanism that E6 and E9 isolate---with regime C showing the same ordering attenuated. Even the worst cell in the grid (identity $g_1$ with $\alpha=0.70$ on regime A, seed-averaged AUC $0.805$ against the grid best of $0.840$, and below the static aggregate's $0.812$) degrades gracefully rather than failing. C6 is supported in full: the guarantee and the bulk of the accuracy are properties of the admissible class, not of a tuned instantiation within it.

\begin{table}[t]
\centering
\caption{E7 ablation grid: pooled monotonicity violations and the distribution of seed-averaged AUC across the 48 admissible configurations, per regime. Every configuration--regime pair is violation-free. The spread column, read against the mean within-configuration seed std, shows that on the peak-driven regimes B and D the design choice matters less than the random seed.}
\label{tab:e7-grid}
\footnotesize
\setlength{\tabcolsep}{4pt}
\begin{tabular}{lrccccc}
\toprule
 & & \multicolumn{4}{c}{AUC over the 48 configurations} & \\
\cmidrule(lr){3-6}
Regime & Viol.\ / tests & min & median & max & spread & seed std \\
\midrule
A & $0$ / $2{,}364{,}000$ & $0.8052$ & $0.8316$ & $0.8404$ & $0.0351$ & $0.0042$ \\
B & $0$ / $2{,}364{,}000$ & $0.7513$ & $0.7538$ & $0.7540$ & $0.0027$ & $0.0087$ \\
C & $0$ / $2{,}364{,}000$ & $0.8381$ & $0.8503$ & $0.8537$ & $0.0156$ & $0.0043$ \\
D & $0$ / $2{,}364{,}000$ & $0.7137$ & $0.7141$ & $0.7144$ & $0.0007$ & $0.0109$ \\
\bottomrule
\end{tabular}

\end{table}

\begin{table}[t]
\centering
\caption{E7 marginal AUC per knob setting, averaged over the rest of the grid and five seeds. The shape of $g_2$ is immaterial even on the peak-driven regimes B and D; the only material knobs are the recency channel's $g_1$ and $\alpha$ on the recency-driven regimes A and C, where $\log$ and the learned decay emerge as mild optima.}
\label{tab:e7-marginals}
\footnotesize
\setlength{\tabcolsep}{4pt}
\begin{tabular}{llcccc}
\toprule
Knob & Setting & A & B & C & D \\
\midrule
$g_1$ & identity & $0.8231$ & $0.7536$ & $0.8471$ & $0.7141$ \\
 & $\log$ & $0.8312$ & $0.7534$ & $0.8495$ & $0.7141$ \\
 & monotone MLP & $0.8310$ & $0.7540$ & $0.8494$ & $0.7141$ \\
\midrule
$g_2$ & identity & $0.8288$ & $0.7539$ & $0.8491$ & $0.7141$ \\
 & hinge & $0.8288$ & $0.7539$ & $0.8491$ & $0.7141$ \\
 & power & $0.8273$ & $0.7530$ & $0.8477$ & $0.7140$ \\
 & monotone MLP & $0.8287$ & $0.7539$ & $0.8487$ & $0.7142$ \\
\midrule
$\alpha$ & $0.70$ & $0.8131$ & $0.7538$ & $0.8414$ & $0.7142$ \\
 & $0.85$ & $0.8340$ & $0.7537$ & $0.8505$ & $0.7141$ \\
 & $0.95$ & $0.8296$ & $0.7535$ & $0.8501$ & $0.7140$ \\
 & learned & $0.8370$ & $0.7536$ & $0.8526$ & $0.7141$ \\
\bottomrule
\end{tabular}

\end{table}

\subsection{E8: Basis-function sufficiency (C7)}
\label{sec:e8-results}
Across all $2{,}180$ basis-model runs---both arms, fifteen regimes, the Phase III ablations and the Phase IV transfer evaluations (including the held-out-regime rows of the repair rungs), five seeds---every admissible subset records zero monotonicity violations, so the guarantee is independent of basis configuration and the remainder of this section concerns sufficiency only.

Near-sufficiency on the core regimes exceeds what C7 requires. The Phase II subset search on the training regime C selects \emph{EWMA alone} as the smallest admissible set reaching $95\%$ of the oracle AUC in every seed (a permissive bar on a scale where $0.5$ is chance; on the quoted test values the subset also clears the stricter analogue on the Gini scale $2\,\mathrm{AUC}-1$, $0.834$; selection discipline in Appendix~\ref{app:protocols}) at $0.846 \pm 0.004$ against an oracle of $0.852 \pm 0.004$ that may use non-monotone statistics, a governance gap below $0.008$ AUC. Across all core regimes A--E, the EWMA+HWM pair (M3) matches the full six-basis model (M7) within $\pm 0.001$ AUC.

The adversarial regimes resolve into a two-level finding that the frozen arm makes precise (Table~\ref{tab:e8-necessity}). At the \emph{basis} level the engineered signals are real. Against the literal fixed EWMA+HWM channels, adding the smallest frozen subset containing the target basis is worth $+0.027$ AUC on the cumulative-burden regime F, $+0.064$ on the persistence regime G, $+0.051$ on the breach-count regime J, $+0.071$ on the peak-prominence regime N (HWM over EWMA alone), $+0.094$ on the early-warning regime K (the early-window EWMA), and $+0.143$ on the exposure-floor regime O---the largest basis-level gap in the study. For J, N, K, and O the subset adds exactly the target basis; the F and G ladders pass through the breach count, whose own increment is $+0.020$ (F) and $+0.036$ (G), so the \emph{basis-specific} increments are $+0.007$ for the cumulative sum and $+0.028$ for the run length---the latter matching the learned arm's open residual of $-0.027$ on G almost exactly. The single-basis gaps on J, N, K, and O sit an order of magnitude above seed noise (the cumulative sum's $+0.007$ is at seed-noise scale), corroborated by the debiased incremental information ($0.116 \pm 0.020$ nats for longest run beyond $\{$EWMA, HWM$\}$ on G, $0.122 \pm 0.027$ for breach count on J, $0.119 \pm 0.005$ for the LWM on O). At the \emph{channel} level, however, the learned arm closes the F, J, and N gaps without ever being given the target basis: the residual column of Table~\ref{tab:e8-necessity} shows learned EWMA+HWM tying the frozen-plus-target models on F and J, and learned EWMA alone tying frozen EWMA+HWM on N. This is the subsumption anticipated in Section~\ref{sec:models}: cumulative sums, breach counts, and steep peak hinges are reachable without explicit bases. The strongest case is regime N, engineered to make the HWM maximally prominent: the learned e-channel alone matches the explicit max channel (paired residual $-0.000$, Table~\ref{tab:e8-necessity}), consistent with the HWM lying in the rank closure of the learnable family. The explicit max channel still pays its way as a \emph{learnability dividend}---it supplies the $p \to \infty$ limit exactly through a parameter-free recurrence, where the bounded monotone MLP must approximate extreme convexity---worth $+0.0053 \pm 0.0009$ (B), $+0.0033 \pm 0.0007$ (C), and $+0.0054 \pm 0.0006$ (M) AUC for the learned arm, the paired per-seed delta positive in every seed on each regime.

Three boundary witnesses survive the learned arm's subsumption, and they characterize the sufficiency boundary structurally; each is then closed by its matching explicit basis.

\emph{Consecutivity (regime G).} The longest-run statistic is not time-separable: it depends on the \emph{adjacency pattern} of threshold exceedances, not on any per-step transform, and it lies outside even the separable family's rank closure (Proposition~\ref{prop:rank-closure}, part (vi)). The learned channel closes the ladder's breach-count share but almost none of the run-specific increment: $0.027$ of its $+0.028$ stays open, with learned M3 at $0.806$ against $0.833 \pm 0.005$ for the explicit run-length rung. That rung sits at the regime's data-level ceiling (fresh-pool Bayes AUC $0.831$, protocol in Appendix~\ref{app:protocols}; paired same-split delta $-0.0005 \pm 0.0006$, Table~\ref{tab:e8-necessity}), so the one-basis repair returns G to the best any model consuming the same sequences can achieve.

\emph{Time localization (regime K).} The early-window label requires a temporal weight profile that vanishes after $t_0$, which no geometric decay realizes---the statistic lies outside the rank closure (Proposition~\ref{prop:rank-closure}, part (v))---and no \emph{separable accumulator} in the pool supplies: without the matching basis, every model in the basis pool sits roughly $0.07$ AUC below the data-level ceiling (best $0.689 \pm 0.007$ against a fresh-pool Bayes AUC of $0.764$), and the full frozen pool of separable bases improves on the EWMA+HWM pair by only $+0.022$. The engineered grids share the outage (the grid GBMs also sit at $0.689$; Table~\ref{tab:e10-boundary}), while a constrained tree over raw lags---no aggregation committed---reaches the ceiling, confirming that the boundary belongs to separable aggregation rather than to the data. The repair is one explicit admissible basis: adding the early-window EWMA---the prefix-restricted recurrence $e_{t_0} = \sum_{t \le t_0} \alpha^{t_0-t} x_t$, monotone by the same lemma as the EWMA itself---lifts the frozen arm to $0.761 \pm 0.008$, at the Bayes ceiling (paired same-split delta $-0.0003 \pm 0.0003$; the nominal gap to the fresh-pool $0.764$ is split noise), a $+0.072$ recovery over the best repair-free pool (Table~\ref{tab:e8-necessity}); the learned arm over the same bases reaches the identical value.

\emph{The exposure floor (regime O).} The third witness is \emph{partial}, and it separates basis-level necessity from ranking recoverability. The running minimum is not a member of the separable family, but unlike the run and the early window it does lie in the family's \emph{rank closure}: saturating reciprocal-power transforms $g(x) = c^{-p} - (x+c)^{-p}$ are admissible, and ranking by their discounted sum converges to ranking by the floor as $p \to \infty$---the exact mirror of the power limit that reaches the maximum (Proposition~\ref{prop:rank-closure}, part (iv)). The boundary at the floor is therefore one of \emph{learnability} rather than representability, and the measurements bear that out: the learned channel recovers most of the O gap (learned M3 reaches $0.774$ against $0.783 \pm 0.008$ with the explicit LWM, itself exactly at O's data-level ceiling, the fresh-pool Bayes AUC $0.783$---paired rung-minus-Bayes delta $-0.0010 \pm 0.0044$; paired residual $-0.0091 \pm 0.0019$, negative in every seed---small but consistent), because a bounded monotone MLP realizes a finite-steepness proxy of that limit: the minimum exceeds $c$ exactly when no step falls below $c$, so a steep saturating $g_1$ turns the discounted sum into a count of low steps. The proxy is also an optimization accident of the two-channel configuration, not a stable property of the learned arm: when the separable bases are added to the head (learned M6 and M7), the trained e-channel abandons the steep saturating solution and collapses to the frozen-arm level ($0.647$ for learned M6 and M7---near the frozen base's $0.640$, against learned M3's $0.774$---in all five seeds), so the recovery does not survive richer configurations. The explicit LWM channel supplies the exact limit through a parameter-free recurrence, closing the last of the largest basis-level gap in the ladder---the low-water mirror of the HWM's learnability dividend.

The remaining adversaries behave as robustness regimes rather than boundary witnesses: regime L's cap is an outer non-decreasing transform of $e_T$ and cannot create a ranking gap (the ties it introduces are label-exchangeable, leaving expected AUC unchanged), and regimes H and M are well proxied by their separable factors in the learned arm (gaps ${\le}0.001$; at the frozen basis level H retains a $+0.011$ interaction gap that no additive subset closes).

Cross-regime transfer is flat where the design originally predicted an ordering (Table~\ref{tab:e8-transfer}). Training on regime C and evaluating on the held-out adversarial regimes (averaged over the nine evaluated for every subset), the full subset M7 is nominally ahead of EWMA-alone in every seed but only by $+0.0003$ to $+0.0024$ AUC (mean $+0.001$)---at most a lean under the Section~\ref{sec:eval-protocol} convention, an order of magnitude below the necessity gaps---while HWM-alone transfers clearly worst ($0.755 \pm 0.007$). The frozen multi-basis models sit at or marginally above the learned models (differences ${\sim}0.001$), so extra fixed bases neither help nor hurt transfer materially.

C7 is therefore supported in a sharpened form. The two-state summary is near-sufficient on core mechanisms not because the other admissible bases are uninformative but because the learned recurrence spans the separable ones: the e-channel is a learned separable accumulator, and the m-channel is that family's hardest-to-learn limit point kept explicit. The demonstrated boundary is consecutivity (G), time localization (K), and---at the basis level---the exposure floor (O), exactly the statistics that fall outside the geometrically weighted separable family; their structural-versus-learnability placements are proved in Proposition~\ref{prop:rank-closure}, and each is restored by its matching explicit basis at the regime's Bayes ceiling (Table~\ref{tab:e8-necessity}). Experiment E11 (Section~\ref{sec:e14}) confirms this boundary functionally, by regression rather than prediction.

\begin{table}[t]
\centering
\caption{E8 necessity ladder on the boundary-probing regimes (AUC, mean $\pm$ std over five seeds). The frozen arm fixes the EWMA/HWM channels to their literal fixed statistics, so adding the engineered target basis measures basis-level necessity (Gap, paired per seed); the base subset is frozen EWMA+HWM (M3f) for F, G, J, O, and K, and frozen EWMA alone (M1f) for N, and the augmented subset is the smallest frozen subset containing the target: F$\to$M6f, G$\to$M5f, J$\to$M4f, N$\to$M3f, O$\to$M8f, and K$\to$M9f (the early-window EWMA rung; the basis is available to the ablation arms but kept out of the Phase I/II pool). Residual is the learned arm over the base subset minus the frozen-plus-target rung: ${\approx}0$ means the learnable $(g_1, \alpha)$ channel recovers the basis without being given it (F, J, N), negative means the gap stays open in the learned arm without the explicit basis (G; K, where the learned channel cannot realize the vanishing weight profile; and marginally O). The F and G Gap columns include the breach count's pass-through increment, and each explicit rung sits at its regime's data-level ceiling; the decomposition and the paired rung-minus-Bayes deltas are in the text. The derived Gap and Residual columns are computed from unrounded per-seed means, so they can differ from the difference of the rounded operands by one unit in the final digit.}
\label{tab:e8-necessity}
\footnotesize
\setlength{\tabcolsep}{4pt}
\begin{tabular}{llccccc}
\toprule
Regime & Target basis & Frozen base & Frozen $+$ target & Gap & Learned base & Residual \\
\midrule
F (cumulative burden) & cumulative sum & $0.805 \pm 0.007$ & $0.832 \pm 0.008$ & $+0.027$ & $0.838 \pm 0.008$ & $+0.007$ \\
G (persistence) & longest run & $0.769 \pm 0.003$ & $0.833 \pm 0.005$ & $+0.064$ & $0.806 \pm 0.005$ & $-0.027$ \\
J (breach count) & breach count & $0.799 \pm 0.005$ & $0.849 \pm 0.006$ & $+0.051$ & $0.849 \pm 0.006$ & $+0.000$ \\
N (peak prominence) & HWM & $0.787 \pm 0.007$ & $0.858 \pm 0.009$ & $+0.071$ & $0.858 \pm 0.009$ & $-0.000$ \\
O (exposure floor) & LWM & $0.640 \pm 0.009$ & $0.783 \pm 0.008$ & $+0.143$ & $0.774 \pm 0.009$ & $-0.009$ \\
K (early warning) & early-window EWMA & $0.667 \pm 0.007$ & $0.761 \pm 0.008$ & $+0.094$ & $0.689 \pm 0.007$ & $-0.072$ \\
\bottomrule
\end{tabular}

\end{table}

\begin{table}[t]
\centering
\caption{E8 Phase IV cross-regime generalization: AUC after training on regime C, averaged over the held-out adversarial regimes evaluated for every subset (nine; mean $\pm$ std over five seeds). The full subset holds at most a lean over EWMA-alone (per-seed paired difference $+0.0003$ to $+0.0024$, positive in every seed), HWM-alone transfers worst, and the frozen arm matches the learned arm throughout; the floor subset M8 transfers identically to M3 and the early-window subset M9 within $0.002$ of it, as expected when the training label carries no floor or early-window signal.}
\label{tab:e8-transfer}
\footnotesize
\setlength{\tabcolsep}{4pt}
\begin{tabular}{lcc}
\toprule
Basis subset & Learned arm & Frozen arm \\
\midrule
M1 (EWMA) & $0.777 \pm 0.007$ & $0.769 \pm 0.006$ \\
M2 (HWM) & $0.755 \pm 0.007$ & $0.755 \pm 0.007$ \\
M3 (EWMA+HWM) & $0.776 \pm 0.007$ & $0.776 \pm 0.007$ \\
M4 (M3 + breach count) & $0.777 \pm 0.007$ & $0.779 \pm 0.007$ \\
M5 (M4 + longest run) & $0.777 \pm 0.007$ & $0.779 \pm 0.007$ \\
M6 (M3 + cumulative + breach count) & $0.779 \pm 0.006$ & $0.779 \pm 0.007$ \\
M7 (M3 + cum.\ + breach + run + excess) & $0.779 \pm 0.006$ & $0.779 \pm 0.007$ \\
M8 (M3 + LWM) & $0.776 \pm 0.007$ & $0.776 \pm 0.007$ \\
M9 (M3 + early-window EWMA) & $0.777 \pm 0.007$ & $0.778 \pm 0.007$ \\
\bottomrule
\end{tabular}

\end{table}

\subsection{E9: Capacity robustness (C1 and C3 across sizes)}
We retrain the core model set at four capacities on regimes A and C; the unconstrained LSTM spans $361$ to $199{,}297$ parameters across the tiers, the monotone exposure RNN $4$ to $6{,}788$. Table~\ref{tab:e9-capacity} reports parameter counts, AUC, violation totals, and the per-cell non-inferiority verdict against the prespecified LSTM comparator; in every cell the LSTM is also the per-cell best unconstrained baseline.

The guarantee is capacity-independent. All three monotone-by-construction models record zero violations at every size on both regimes (40 evaluation cells per model), while the unconstrained baselines accumulate $4{,}502$ (LSTM), $7{,}135$ (RNN), and $7{,}832$ (aggregate) violations pooled over the same sweep (per-tier breakdown in Table~\ref{tab:baseline-violations}; the static aggregates have no capacity knob, so their tier rows are the same fitted model evaluated once per tier and the pooled count restates one model four times). C1 therefore holds identically for a four-parameter instantiation and a $6{,}788$-parameter one. Parity survives scaling up the baselines: seven of the eight (size, regime) cells pass the prespecified test, and from the small tier upward the paired $\Delta\mathrm{AUC}$ never leaves $\pm 0.001$ even as the LSTM grows to $199{,}297$ parameters against the monotone model's $6{,}788$---which also justifies the lean medium tier used as the default in E1--E8.

The single failing cell---tiny on regime A ($\Delta\mathrm{AUC}=-0.0093$, CI $[-0.0105,-0.0078]$)---is an expressiveness floor, not a training artifact. The tiny tier fixes both transforms to the identity and the head to a monotone linear map (four parameters), while regime A's generating recency transform is the concave $g_1^*=\log(1+x)$, which the identity channel cannot represent---the same function-class mechanism that E6 isolates for convex activations, here in its sharpest form. The training-adequacy audit flags all five tiny regime-A monotone runs as cap-binding, so we tested the undertraining explanation directly: retraining the cell (a single-seed diagnostic rerun at the first seed) with early stopping disabled out to a 300-epoch ceiling leaves test AUC unchanged between epochs 100 and 300 ($0.8345$ at both---the small offset from the five-seed mean of $0.8309 \pm 0.0042$ in Table~\ref{tab:e9-capacity} is the single-seed difference), moves validation loss by less than $10^{-4}$ past epoch 100, and the production early-stopping rule would in fact have fired at epoch 107. The deficit survives an unlimited training budget. Parity is restored as soon as the transforms are learnable: the small tier passes with $84$ parameters against a $1{,}233$-parameter LSTM. The miss thus sharpens rather than weakens C3: the constraint costs nothing wherever the admissible class can express the mechanism, and the capacity threshold for expressing it is small.

\begin{table}[t]
\centering
\caption{E9 capacity sweep (mean $\pm$ std over five seeds): parameter count, AUC, and pooled violation totals for the monotone exposure RNN and the unconstrained-LSTM comparator, with the paired $\Delta$AUC (monotone minus LSTM), its two-level bootstrap 95\% confidence interval, and the non-inferiority verdict (pass iff the lower bound is at least $-0.01$). The only failure is the four-parameter tiny tier on the concave regime A, an expressiveness floor that a cap-raising diagnostic shows is not an artifact of the training budget.}
\label{tab:e9-capacity}
\footnotesize
\setlength{\tabcolsep}{2pt}
\begin{tabular}{llrcrrcrccc}
\toprule
 & & \multicolumn{3}{c}{Monotone exposure RNN} & \multicolumn{3}{c}{Unconstrained LSTM} & \multicolumn{3}{c}{paired non-inferiority} \\
\cmidrule(lr){3-5}\cmidrule(lr){6-8}\cmidrule(lr){9-11}
Size & Regime & params & AUC & viol.\ & params & AUC & viol.\ & $\Delta$AUC & CI$_{95}$ & verdict \\
\midrule
tiny & A & 4 & $0.8309 \pm 0.0042$ & 0 & 361 & $0.8402 \pm 0.0042$ & 170 & $-0.0093$ & $[-0.0105, -0.0078]$ & \textbf{fail} \\
tiny & C & 4 & $0.8513 \pm 0.0045$ & 0 & 361 & $0.8525 \pm 0.0044$ & 1182 & $-0.0012$ & $[-0.0020, -0.0004]$ & pass \\
small & A & 84 & $0.8403 \pm 0.0044$ & 0 & 1{,}233 & $0.8401 \pm 0.0044$ & 53 & $+0.0002$ & $[-0.0002, +0.0005]$ & pass \\
small & C & 84 & $0.8536 \pm 0.0045$ & 0 & 1{,}233 & $0.8528 \pm 0.0047$ & 1115 & $+0.0008$ & $[+0.0001, +0.0014]$ & pass \\
medium & A & 548 & $0.8397 \pm 0.0043$ & 0 & 12{,}961 & $0.8402 \pm 0.0041$ & 6 & $-0.0004$ & $[-0.0009, +0.0000]$ & pass \\
medium & C & 548 & $0.8535 \pm 0.0045$ & 0 & 12{,}961 & $0.8531 \pm 0.0048$ & 1182 & $+0.0003$ & $[-0.0002, +0.0009]$ & pass \\
large & A & 6{,}788 & $0.8396 \pm 0.0043$ & 0 & 199{,}297 & $0.8402 \pm 0.0041$ & 31 & $-0.0005$ & $[-0.0011, +0.0001]$ & pass \\
large & C & 6{,}788 & $0.8534 \pm 0.0045$ & 0 & 199{,}297 & $0.8528 \pm 0.0049$ & 763 & $+0.0006$ & $[+0.0000, +0.0012]$ & pass \\
\bottomrule
\end{tabular}

\end{table}

\subsection{E10: Learned vs.\ hand-crafted monotone aggregation (C8)}
\label{sec:e10-results}
Every comparator in the grid-baseline set is monotone end-to-end, and all $1{,}980$ E10 runs---every model, phase, severity, and seed---record zero violations on every applicable operator, so the guarantee is shared by design and the remainder of this section concerns what learning the aggregation is worth.

On the parity ladder the learned channel (the two-state exposure RNN) is never statistically worse and is the strict winner where recency structure matters. Table~\ref{tab:e10-parity} reports per-regime AUC with the parsimony account; Table~\ref{tab:e10-delta} reports paired two-level bootstrap deltas against the three constrained-LightGBM comparators on the shared test splits. The exposure RNN posts the best mean AUC on regimes A and C, where the recency channel carries signal, and on the peak regime B, with paired deltas whose confidence intervals exclude zero against all three GBMs on all three regimes and whose per-seed sign is positive for every seed (from $+0.0010$ $[+0.0004, +0.0017]$ against the practice GBM on B up to $+0.0065$ $[+0.0040, +0.0092]$ against the raw-lags GBM on the same regime); under the evaluation protocol's sub-$0.002$ rule the grid-GBM margins on B ($+0.0010/{+}0.0011$) and C ($+0.0017/{+}0.0018$) are leans rather than substantive advantages, and so is the regime-A margin against the raw-lags GBM ($+0.0013$), so the strict wins are regime A against the two grid GBMs ($+0.0029/{+}0.0037$) and regimes B and C against the raw-lags GBM; on the threshold and saturating regimes D and E the deltas against the grid GBMs are statistical ties, the expected outcome near a grid point. The strict wins are not a tuning artifact of the disclosed GBM defaults: retraining both grid GBMs over an $18$-point regularization grid (leaves $\{15, 31, 63\}$ $\times$ learning rate $\{0.02, 0.05, 0.1\}$ $\times$ min-data-in-leaf $\{20, 100\}$), selecting per seed by validation AUC, moves the paired deltas by at most $0.0001$ on regimes A and C ($+0.0028$ $[+0.0019, +0.0038]$ and $+0.0037$ $[+0.0029, +0.0045]$ tuned on A), so the regime-A wins and the C leans survive per-model tuning unchanged. The head class of the grid consumer is immaterial at parity: the calibrated lattices track the monotone-MLP heads within $\pm 0.002$ AUC on every regime, so none of the parity conclusions is an artifact of one monotone head family. The parsimony account favors the learned channel against every sequence-aware comparator's \emph{feature pipeline}: $548$ parameters and no engineered features against $2{,}159$--$4{,}659$ tree leaves and $2{,}312$--$4{,}384$ lattice parameters over $11$--$24$ fixed input columns (hand-crafted for the grids, raw lags for the third GBM; the grid-MLP heads themselves are smaller, at $337$--$353$ parameters, but still consume the engineered columns).

\emph{Boundary regimes.} The parity ladder runs on regimes A--E, where the generating mechanism lies inside the learned channel's separable family; that is the scope of every parity statement in this paper, and Table~\ref{tab:e10-boundary} shows why the scoping matters, rerunning every comparator class on the E8 boundary regimes (same splits and seeds; the generic-recurrence arms at the default tier). The grid GBMs inherit the separable-aggregation boundary essentially unchanged---their windowed means, maxima, and breach counts carry no adjacency, early-window, or floor information---while the raw-lags GBM, which commits to no aggregation, crosses it on all three regimes: a constrained tree over raw lags can split on individual time indices, so adjacency, time localization, and the floor are all within its reach, and it sits at the K ceiling, at seed scale from the O rung, and roughly $0.010$ short of the run rung on G. Monotone recurrence alone does not confer that reach. The fixed-decay accumulator bank---itself separable, so this arm tests the constraint mechanism rather than recurrence---stays below the repair-free two-state pool in every seed at both activations (its tanh gain on the floor regime is the Section~\ref{sec:architecture} curvature sensitivity: a saturating-count proxy of the floor). The weight-constrained Elman recurrence---the genuinely non-separable form, and the industrial patent's construction---is the informative arm: its interleaved saturation can encode local adjacency, and on the consecutivity regime it recovers a real share of the run signal ($0.796 \pm 0.005$, above both bank arms and just below the learned pool's $0.806$ with overlapping seed ranges), but on time localization it stays far below the repair-free pool, as a time-invariant recurrence must, and on the floor it does not find the proxy the tanh bank finds. No monotone recurrence tested crosses the boundary on any regime: among the monotone models, only the aggregation-free tree does. The boundary of E8 and E11 is therefore a property of separable aggregation---learned or engineered---not of monotone learning as such; the raw-lags escape is priced below.

\begin{table}[t]
\centering
\caption{E10 boundary regimes: test AUC (mean $\pm$ std over five seeds, shared splits) for every comparator class on the three E8 boundary regimes. The engineered comparators are the constrained-LightGBM arms of Section~\ref{sec:models}; the generic monotone recurrences are the fixed-decay accumulator banks (ReLU and tanh components) and the weight-constrained Elman form, at the default tier; the two-state rows restate, on the same splits, the E8 ladder's learned base and its frozen repair rung---run length for G, early-window EWMA for K, LWM for O (Table~\ref{tab:e8-necessity}). Each repair rung sits at the per-seed test-split Bayes ceiling, and among the monotone models only the aggregation-free raw-lags tree crosses the separable-aggregation boundary.}
\label{tab:e10-boundary}
\footnotesize
\setlength{\tabcolsep}{5pt}
\begin{tabular}{lccc}
\toprule
 & G (persistence) & K (early warning) & O (exposure floor) \\
\midrule
\multicolumn{4}{@{}l}{\emph{Engineered comparators}} \\
Practice grid (constrained GBM) & $0.800 \pm 0.005$ & $0.689 \pm 0.006$ & $0.682 \pm 0.010$ \\
Steelman grid (constrained GBM) & $0.800 \pm 0.005$ & $0.689 \pm 0.007$ & $0.666 \pm 0.010$ \\
Raw lags (constrained GBM) & $0.823 \pm 0.005$ & $0.760 \pm 0.008$ & $0.779 \pm 0.008$ \\
\midrule
\multicolumn{4}{@{}l}{\emph{Generic monotone recurrences}} \\
Fixed-decay bank (ReLU) & $0.764 \pm 0.003$ & $0.617 \pm 0.006$ & $0.640 \pm 0.009$ \\
Fixed-decay bank (tanh) & $0.769 \pm 0.004$ & $0.651 \pm 0.006$ & $0.740 \pm 0.009$ \\
Weight-constrained Elman (tanh) & $0.796 \pm 0.005$ & $0.656 \pm 0.019$ & $0.637 \pm 0.023$ \\
\midrule
\multicolumn{4}{@{}l}{\emph{Two-state architecture (E8)}} \\
Learned EWMA$+$HWM (M3) & $0.806 \pm 0.005$ & $0.689 \pm 0.007$ & $0.774 \pm 0.009$ \\
Explicit repair rung (run / early / LWM) & $0.833 \pm 0.005$ & $0.761 \pm 0.008$ & $0.783 \pm 0.008$ \\
\midrule
Bayes ceiling (per-seed test split) & $0.833 \pm 0.004$ & $0.761 \pm 0.008$ & $0.784 \pm 0.007$ \\
\bottomrule
\end{tabular}

\end{table}

The misspecification sweeps (Figure~\ref{fig:e10-misspec}) resolve the decay and threshold dimensions oppositely. On the decay sweep, each grid pays where its own coverage runs out. The steelman grid pays exactly the predicted cost: its paired gap to the learned channel reaches $0.019$ AUC at $\alpha^* = 0.1$, a distance of $0.4$ from its nearest fixed decay, and shrinks to $0.003$--$0.006$ at on-grid points. The cost is a property of the frozen grid, not of its consumer: at the off-grid extreme the steelman grid pays $+0.019$--$0.020$ identically under the monotone-MLP head, the calibrated lattice, and constrained LightGBM, so upgrading the head class buys back none of the price of hand-picking the decays. The practice grid, by contrast, pays essentially nothing at the same point ($+0.0004$): its last-value and three-step-window features absorb the fast-decay extreme that the steelman's decay set misses. Its uncovered region is the opposite end of the sweep: the windowed means lose ground monotonically as $\alpha^*$ grows, reaching $0.006$ at $\alpha^* = 0.99$, where no fixed window matches a near-flat discount.

The headline $0.019$ is therefore the price of committing to the \emph{wrong} grid, not of grid engineering as such: against the best of the seven grid and aggregate consumers at each swept point, the learned channel's paired advantage runs $+0.0001$ to $+0.0037$, growing monotonically toward slow decay. The standard engineered response to decay uncertainty is to decline to commit, and that response realizes the per-point envelope with no oracle: a constrained GBM over the deduplicated union of the two grids (nineteen columns---the pooled library), run through the same sweep on the same splits, holds the learned channel's paired advantage to $+0.0006 \pm 0.0005$ at $\alpha^* = 0.1$, rising to $+0.0037 \pm 0.0004$ at $\alpha^* = 0.99$, positive in every seed from $\alpha^* = 0.8$ upward. Pooling thus removes the committed-grid cost, and the learned channel's residual advantage over the strongest committed engineered consumer is the ${\le}0.004$ slow-decay band, not the $0.019$ headline. One comparator escapes the decay axis without any oracle: the raw-lags GBM, which commits to no aggregation at all, tracks the learned channel within $0.002$ across the entire sweep ($+0.0006$ at $\alpha^* = 0.1$, never beyond $+0.0018$). The escape is priced in the threshold and sample-size sweeps below, and it never wins a parity regime, so the trade the sweeps expose is symmetric: fixed aggregation pays where its coverage runs out, and refusing aggregation pays in efficiency wherever structure exists.

Two qualifications bound the finding: gaps of ${\sim}0.005$ against the committed grids persist at the slow-decay \emph{on-grid} points ($\alpha^* \in \{0.85, 0.95\}$), so the cost tracks the decay regime and not grid distance alone; and the threshold sweep is a null for the grid and aggregate consumers and the learned channel---the eight of them stay within ${\sim}\pm 0.002$ of one another across the entire $\tau^*$ range, including beyond every grid threshold, because the grids' windowed means, maxima, and last-value features carry the threshold signal without a matching breach count. The raw-lags GBM is the exception in the opposite direction: it trails the learned channel by $+0.0068$ at $\tau^* = 1.0$ ($+0.0044$ to $+0.0068$ for $\tau^* \le 7.5$, positive in every seed)---the estimation price of refusing aggregation, whose reverse side the boundary paragraph above already prices. The misspecification clause of C8 is therefore supported in the decay dimension---as the cost of a committed grid---and bounded in the threshold dimension for the grid consumers.

The sample-size sweep (Figure~\ref{fig:e10-auc-n}) compares all models at convergence: every torch cell trains under a raised epoch ceiling with early stopping deciding termination, and per-cell training-adequacy fields are recorded in the results (11 of 300 torch cells remain cap-binding, all at $n = 1000$ and none the exposure RNN). The protocol detail is load-bearing: under a budget-equalized 100-epoch cap every torch cell was cap-binding---the constrained models' slow convergence interacting with truncated optimization, the same confound C3 controls for---and the small-$n$ verdict inverted. At convergence the learned channel is the best model at every training size on regime A, including $n = 1000$ ($0.8310$ against the practice GBM's $0.8289$), with a paired per-seed delta of $+0.0064 \pm 0.0012$ against the practice GBM at $n = 2000$; on regime C it is best at every size except $n = 1000$, where the steelman grid with monotone head edges it by a within-noise $0.0012 \pm 0.0028$. The high-dimensional raw-lags comparator degrades exactly as predicted, its paired deficit growing from $+0.0041 \pm 0.0008$ at $n = 50{,}000$ to $+0.0184 \pm 0.0042$ at $n = 1000$ on regime C: twenty-four constrained columns cannot be estimated from a thousand sequences, while the compact learned channel and the low-dimensional grids both can. The nineteen-column union grid sits with the low-dimensional grids, not with raw lags: at $n = 1000$ its paired deficit to the learned channel is $+0.0019 \pm 0.0080$ on regime A and $+0.0026 \pm 0.0026$ on regime C---the same statistical tie as the practice grid---so pooling the two libraries does not, at this scale, trigger the feature-proliferation penalty the raw-lags comparator pays, and the small-sample axis remains an advantage over the high-dimensional raw-lags comparator specifically. The low-prevalence sensitivity confirms the contrast measures more than label scarcity: with regime C recalibrated to $5\%$ prevalence (Section~\ref{sec:experiments}), the orderings replicate---at full $n$ the learned channel keeps its lean over the practice GBM ($+0.0009 \pm 0.0004$, positive in every seed) and its margin over raw lags ($+0.0050$), and at $n = 1000$, now roughly fifty positive examples, the raw-lags deficit reappears at $+0.0187 \pm 0.0084$ (positive in every seed, beside the $+0.0184$ at the default prevalence) while the learned channel and the practice grid are a within-noise, sign-mixed tie ($-0.0026 \pm 0.0083$). Class rarity neither sharpens nor reverses the verdict: at both prevalences the small-sample advantage remains specific to the high-dimensional comparator.

Stress extrapolation (Figure~\ref{fig:e10-auc-shift}; Table~\ref{tab:e10-saturation}) exhibits the structural failure C8 predicts for tree ensembles, and one finding the design did not anticipate. Beyond the training support the grid GBMs' scores go exactly flat: at the $8\times$ scale shift their tie rates reach $0.59$ and $0.68$ with $24\%$ and $29\%$ of test scores saturated at the ensemble maximum, and their AUC collapses to $0.790$ and $0.784$; the learned channel's strictly increasing head never ties more than $0.1\%$ of scores and holds $0.815$. The raw-lags GBM escapes measurable degeneracy (its $24$ columns keep leaves distinct) and edges the learned channel at the most extreme severities ($0.821$ at $8\times$), at the price of the small-$n$ and parity deficits above. The unanticipated finding is that the best extrapolators are the fixed grids feeding \emph{neural monotone heads}: the monotone MLPs reach $0.848$ at $8\times$ with the calibrated lattices close behind ($0.845$ steelman, $0.832$ practice); the lattices stay degeneracy-free (tie rates ${\le}0.001$), while the grid-MLP heads tie up to a quarter of scores at $\times 8$ as their tanh heads saturate---without losing rank quality. The grids' aggregates are linear in the input scale, whereas the learned transforms' bounded activations compress inputs beyond the training range; that the advantage survives a change of head class localizes it in the linear-scale featurization, not in any one monotone head. These comparators are themselves instances of this paper's composition framework rather than deployed practice, so the result argues for the framework while marking a real boundary of the learned instantiation under extreme covariate shift.

C8 is supported as a division of labor. At full sample size and near grid points the learned channel is non-inferior on every core regime, strictly better on regime A against the grid GBMs (a lean against raw lags), and ahead with sub-$0.002$ leans on the peak and mixed regimes against the grid GBMs (clear wins there only against the raw-lags comparator). Its advantage appears where hand-crafted aggregation cannot adapt (off-grid decay, small samples against high-dimensional comparators) and where tree ensembles structurally fail (score degeneracy beyond support). Its two boundaries---threshold misspecification that the grids' non-breach features absorb, and extreme-severity extrapolation where fixed linear-scale features feeding a monotone head win---are properties of the evaluation worth stating, not deficits of the guarantee. One discipline governs the three advantage axes: each holds against a different incumbent. Raw lags and the pooled union grid escape the decay axis (within $0.002$ and $0.004$ of the learned channel, respectively); the small-sample gap is large only against the high-dimensional raw-lags comparator; the stress axis is lost to the grid-fed neural heads. No single engineered incumbent is dominated on all three axes, and no axis is won against every comparator: the case for the learned channel is adaptivity across failure modes an analyst cannot predict ex ante, not uniform dominance.

\begin{table}[t]
\centering
\caption{E10 parity ladder (AUC, mean $\pm$ std over five seeds) on regimes A--E with the parsimony account: trainable parameter count (for the GBMs, total leaf values of the fitted ensemble), engineered-feature count, and pooled monotonicity violations. Bold marks the best mean AUC per regime, with ties at the printed precision bolded jointly. The learned channel posts the best mean AUC on regimes A and C, where the recency channel carries signal, and on the peak regime B, with no engineered features and the smallest parameter-plus-feature pipeline of any sequence-aware comparator (the grid-MLP heads themselves are smaller, at $337$--$353$ parameters, but consume $11$--$12$ engineered columns); D and E are ties near the grid point, the expected outcome.}
\label{tab:e10-parity}
\scriptsize
\setlength{\tabcolsep}{2pt}
\begin{tabular}{lcccccrrc}
\toprule
Model & A & B & C & D & E & params & feat.\ & viol.\ \\
\midrule
Mono.\ exp.\ RNN & $\mathbf{0.8397 \pm 0.0043}$ & $\mathbf{0.7540 \pm 0.0087}$ & $\mathbf{0.8535 \pm 0.0045}$ & $\mathbf{0.7142 \pm 0.0109}$ & $0.8757 \pm 0.0033$ & 548 & -- & 0 \\
Mono.\ agg. & $0.8120 \pm 0.0056$ & $0.7538 \pm 0.0089$ & $0.8432 \pm 0.0050$ & $0.7129 \pm 0.0089$ & $0.8709 \pm 0.0045$ & 241 & 5 & 0 \\
Practice grid & $0.8345 \pm 0.0040$ & $0.7538 \pm 0.0088$ & $0.8510 \pm 0.0045$ & $0.7141 \pm 0.0109$ & $0.8758 \pm 0.0037$ & 353 & 12 & 0 \\
Steelman grid & $0.8339 \pm 0.0041$ & $0.7537 \pm 0.0088$ & $0.8515 \pm 0.0044$ & $0.7140 \pm 0.0109$ & $0.8760 \pm 0.0035$ & 337 & 11 & 0 \\
Practice lattice & $0.8360 \pm 0.0040$ & $0.7537 \pm 0.0086$ & $0.8519 \pm 0.0046$ & $0.7140 \pm 0.0110$ & $0.8752 \pm 0.0036$ & 4{,}384 & 12 & 0 \\
Steelman lattice & $0.8352 \pm 0.0046$ & $0.7536 \pm 0.0090$ & $0.8513 \pm 0.0046$ & $\mathbf{0.7142 \pm 0.0112}$ & $\mathbf{0.8761 \pm 0.0037}$ & 2{,}312 & 11 & 0 \\
Practice GBM & $0.8369 \pm 0.0037$ & $0.7528 \pm 0.0088$ & $0.8518 \pm 0.0046$ & $\mathbf{0.7142 \pm 0.0100}$ & $0.8754 \pm 0.0035$ & 2{,}299 & 12 & 0 \\
Steelman GBM & $0.8361 \pm 0.0044$ & $0.7530 \pm 0.0085$ & $0.8517 \pm 0.0045$ & $0.7141 \pm 0.0103$ & $0.8760 \pm 0.0037$ & 2{,}159 & 11 & 0 \\
Raw-lags GBM & $0.8384 \pm 0.0047$ & $0.7474 \pm 0.0081$ & $0.8495 \pm 0.0038$ & $0.7121 \pm 0.0088$ & $0.8721 \pm 0.0037$ & 4{,}659 & 24 & 0 \\
\bottomrule
\end{tabular}

\end{table}

\begin{table}[t]
\centering
\caption{E10 paired AUC deltas (exposure RNN minus comparator) against the three constrained-LightGBM incumbents on the shared per-seed test splits, with two-level bootstrap $95\%$ confidence intervals. The learned channel is ahead on regimes A, B, and C against all three comparators (intervals exclude zero and the per-seed delta is positive for every seed; the regime-B margins, the regime-C grid-GBM margins, and the regime-A margin against the raw-lags GBM are small enough to read as leans) and statistically tied on D and E against the grid GBMs (the regime-E raw-lags delta clears zero on its own but fails the joint-direction rule of Section~\ref{sec:eval-protocol}); it is nowhere statistically behind.}
\label{tab:e10-delta}
\footnotesize
\setlength{\tabcolsep}{4pt}
\begin{tabular}{lcccccc}
\toprule
 & \multicolumn{2}{c}{vs.\ Steelman GBM} & \multicolumn{2}{c}{vs.\ Raw-lags GBM} & \multicolumn{2}{c}{vs.\ Practice GBM} \\
\cmidrule(lr){2-3}\cmidrule(lr){4-5}\cmidrule(lr){6-7}
Regime & $\Delta$AUC & 95\% CI & $\Delta$AUC & 95\% CI & $\Delta$AUC & 95\% CI \\
\midrule
A & $+0.0037$ & $[+0.0028, +0.0045]$ & $+0.0013$ & $[+0.0002, +0.0025]$ & $+0.0029$ & $[+0.0019, +0.0038]$ \\
B & $+0.0011$ & $[+0.0005, +0.0017]$ & $+0.0065$ & $[+0.0040, +0.0092]$ & $+0.0010$ & $[+0.0004, +0.0017]$ \\
C & $+0.0018$ & $[+0.0011, +0.0023]$ & $+0.0040$ & $[+0.0029, +0.0051]$ & $+0.0017$ & $[+0.0010, +0.0025]$ \\
D & $-0.0001$ & $[-0.0022, +0.0020]$ & $+0.0019$ & $[-0.0019, +0.0059]$ & $+0.0003$ & $[-0.0021, +0.0026]$ \\
E & $-0.0003$ & $[-0.0009, +0.0003]$ & $+0.0036$ & $[+0.0022, +0.0050]$ & $+0.0002$ & $[-0.0004, +0.0009]$ \\
\bottomrule
\end{tabular}

\end{table}

\begin{figure}[t]
\centering
\includegraphics[width=\linewidth]{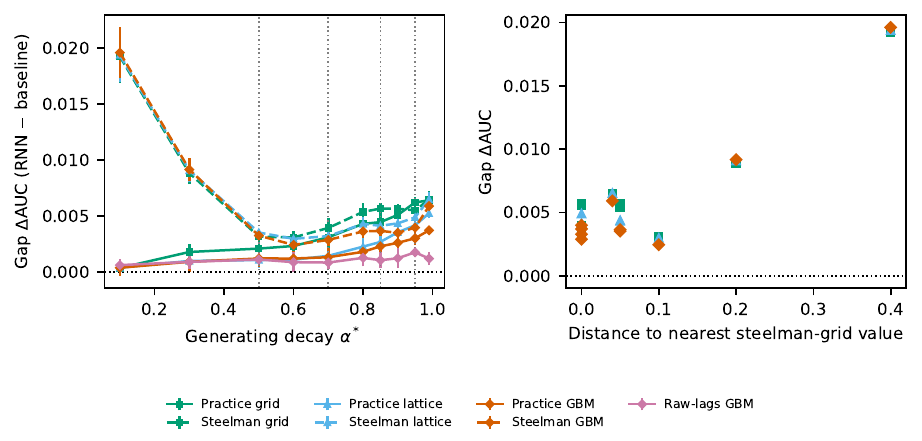}\\[2pt]
\includegraphics[width=\linewidth]{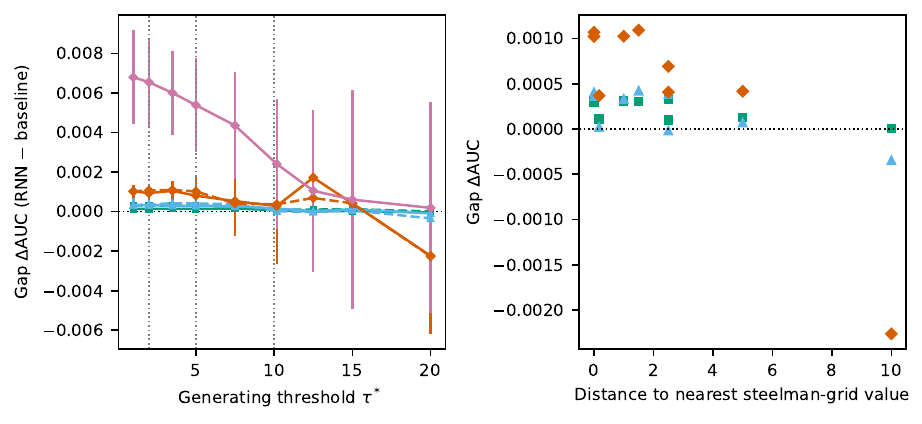}
\caption{E10 misspecification-cost curves: paired gap $\Delta$AUC (exposure RNN minus baseline, mean $\pm$ std over five seeds) under the recalibrated decay sweep on regime A (top) and threshold sweep on regime D (bottom); left panels plot the gap against the swept generating parameter (dotted verticals mark the grids' fixed values); right panels plot the steelman-grid consumers only---the models whose grid defines the distance---against the distance from the swept parameter to that grid; one legend serves both rows. The static monotone aggregate is omitted from the panels: its decay-sweep gap duplicates Figure~\ref{fig:e2-gap} at an order of magnitude above the grid consumers' typical gaps (peak $+0.044$) and would compress their curves, and on the threshold sweep it sits within the same ${\sim}\pm 0.002$ null band as the grid consumers. Reading key: each grid pays where its own coverage runs out, and the threshold sweep is a null for the grid consumers (see text).}
\label{fig:e10-misspec}
\end{figure}

\begin{figure}[t]
\centering
\includegraphics[width=\linewidth]{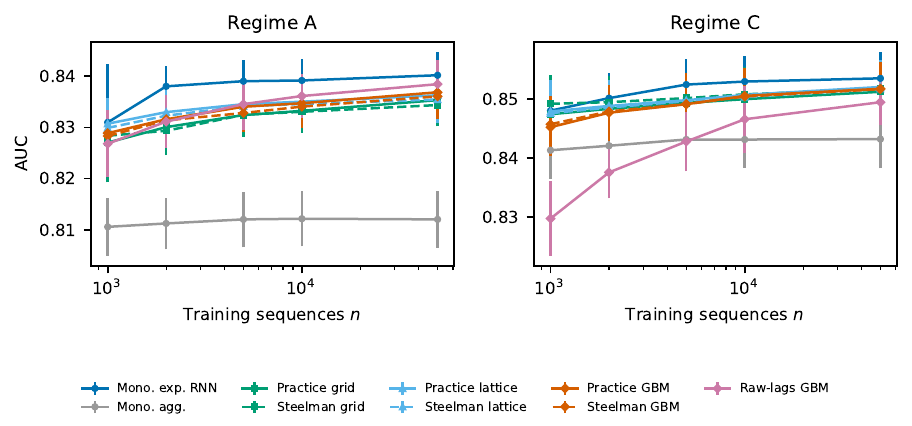}
\caption{E10 sample-size sweep on regimes A and C (AUC, mean $\pm$ std over five seeds; log-scale $n$), with all torch models trained to early-stopping convergence and validation/test splits fixed at full size across $n$. The learned channel is best at every $n$ on regime A and at every $n$ except $1000$ on regime C; the raw-lags GBM, the only comparator whose input dimension is not compressed by feature engineering or learned aggregation, degrades fastest as $n$ shrinks.}
\label{fig:e10-auc-n}
\end{figure}

\begin{figure}[t]
\centering
\includegraphics[width=\linewidth]{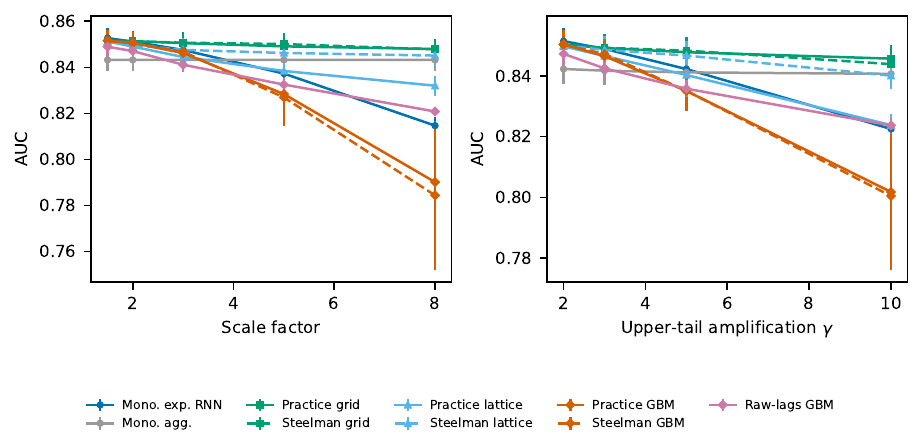}
\caption{E10 stress extrapolation on regime C: AUC versus severity under the scale (left) and upper-tail amplification (right) ladders for clean-trained models (mean $\pm$ std over five seeds). The grid GBMs collapse beyond the training support; the learned channel degrades gracefully; the fixed grids feeding neural monotone heads (monotone MLP or calibrated lattice), whose features are linear in the input scale, extrapolate best.}
\label{fig:e10-auc-shift}
\end{figure}

\begin{table}[tbp]
\centering
\caption{E10 score-degeneracy diagnostics per model and severity (mean over five seeds; $\times k$ denotes the scale multiplier and $\gamma k$ the upper-tail amplification at coverage $q = 0.5$): fraction of test scores saturated at the model's maximum attained score, and tie rate ($1 -$ fraction of distinct scores). The grid GBMs saturate and tie increasingly with severity---piecewise-constant ensembles over few engineered features go exactly flat beyond the training support---while the calibrated lattices and the learned channel never measurably tie; the grid-MLP heads tie moderately at extreme severity (up to $0.245$ at $\times 8$ and $0.254$ at $\gamma 10$, from tanh-head saturation) without the GBMs' rank collapse; the raw-lags GBM avoids degeneracy because its $24$ input columns keep leaf assignments distinct.}
\label{tab:e10-saturation}
\scriptsize
\setlength{\tabcolsep}{2pt}
\begin{tabular}{lcccccccccc}
\toprule
Model & base & $\times$1.5 & $\times$2.0 & $\times$3.0 & $\times$5.0 & $\times$8.0 & $\gamma$2 & $\gamma$3 & $\gamma$5 & $\gamma$10 \\
\midrule
\multicolumn{11}{l}{\emph{Saturation rate (fraction of test scores at the maximum)}} \\
Mono.\ exp.\ RNN & 0.000 & 0.000 & 0.000 & 0.000 & 0.000 & 0.000 & 0.000 & 0.000 & 0.000 & 0.000 \\
Mono.\ agg. & 0.000 & 0.000 & 0.000 & 0.000 & 0.000 & 0.000 & 0.000 & 0.000 & 0.000 & 0.000 \\
Practice grid & 0.000 & 0.001 & 0.001 & 0.003 & 0.014 & 0.049 & 0.001 & 0.003 & 0.012 & 0.067 \\
Steelman grid & 0.000 & 0.001 & 0.002 & 0.008 & 0.033 & 0.110 & 0.002 & 0.007 & 0.028 & 0.133 \\
Practice lattice & 0.000 & 0.000 & 0.000 & 0.000 & 0.000 & 0.000 & 0.000 & 0.000 & 0.000 & 0.000 \\
Steelman lattice & 0.000 & 0.000 & 0.000 & 0.000 & 0.000 & 0.000 & 0.000 & 0.000 & 0.000 & 0.000 \\
Practice GBM & 0.001 & 0.005 & 0.011 & 0.034 & 0.109 & 0.241 & 0.009 & 0.025 & 0.063 & 0.134 \\
Steelman GBM & 0.003 & 0.008 & 0.017 & 0.050 & 0.143 & 0.290 & 0.015 & 0.039 & 0.100 & 0.210 \\
Raw-lags GBM & 0.000 & 0.000 & 0.000 & 0.000 & 0.000 & 0.000 & 0.000 & 0.000 & 0.000 & 0.000 \\
\midrule
\multicolumn{11}{l}{\emph{Tie rate ($1 -$ fraction of distinct scores)}} \\
Mono.\ exp.\ RNN & 0.000 & 0.000 & 0.000 & 0.000 & 0.001 & 0.001 & 0.000 & 0.000 & 0.000 & 0.000 \\
Mono.\ agg. & 0.034 & 0.013 & 0.006 & 0.002 & 0.001 & 0.000 & 0.022 & 0.018 & 0.015 & 0.012 \\
Practice grid & 0.000 & 0.001 & 0.002 & 0.006 & 0.035 & 0.132 & 0.001 & 0.006 & 0.027 & 0.155 \\
Steelman grid & 0.001 & 0.001 & 0.004 & 0.016 & 0.084 & 0.245 & 0.004 & 0.014 & 0.062 & 0.254 \\
Practice lattice & 0.000 & 0.000 & 0.000 & 0.000 & 0.000 & 0.001 & 0.000 & 0.000 & 0.000 & 0.001 \\
Steelman lattice & 0.000 & 0.000 & 0.000 & 0.000 & 0.001 & 0.001 & 0.000 & 0.000 & 0.000 & 0.001 \\
Practice GBM & 0.076 & 0.045 & 0.059 & 0.141 & 0.351 & 0.585 & 0.062 & 0.111 & 0.242 & 0.486 \\
Steelman GBM & 0.270 & 0.181 & 0.167 & 0.244 & 0.461 & 0.676 & 0.190 & 0.234 & 0.377 & 0.633 \\
Raw-lags GBM & 0.000 & 0.000 & 0.000 & 0.000 & 0.000 & 0.001 & 0.000 & 0.000 & 0.000 & 0.000 \\
\bottomrule
\end{tabular}

\end{table}

\FloatBarrier
\subsection{E11: Functional emulation of hand-crafted statistics (C7/C8 corollary)}
\label{sec:e14}

The preceding experiments measure the learned channel's relation to hand-crafted statistics through prediction: E8 shows accuracy gaps closing without the target basis, and E10 shows parity at grid points with advantages off-grid. E11 is the study that converts those observations into the paper's headline claim. If the matched-admissibility question is what learning the aggregation is worth, the sharpest possible answer says which engineered statistics a learned channel \emph{contains} and which it does not---by regressing the channel directly onto each statistic, so that subsumption is demonstrated constructively rather than inferred from accuracy ties.

Table~\ref{tab:e14-spanning} reports the spanning table: held-out $R^2$ and Spearman $\rho$ for each learned channel regressed directly onto each target statistic (five seeds); Figure~\ref{fig:e14-boundary} plots the same boundary on one axis. The result makes the subsumption arguments of E8 and E10 constructive, and its three tiers match the placements of Proposition~\ref{prop:rank-closure}.

\begin{figure}[t]
\centering
\includegraphics[width=0.92\linewidth]{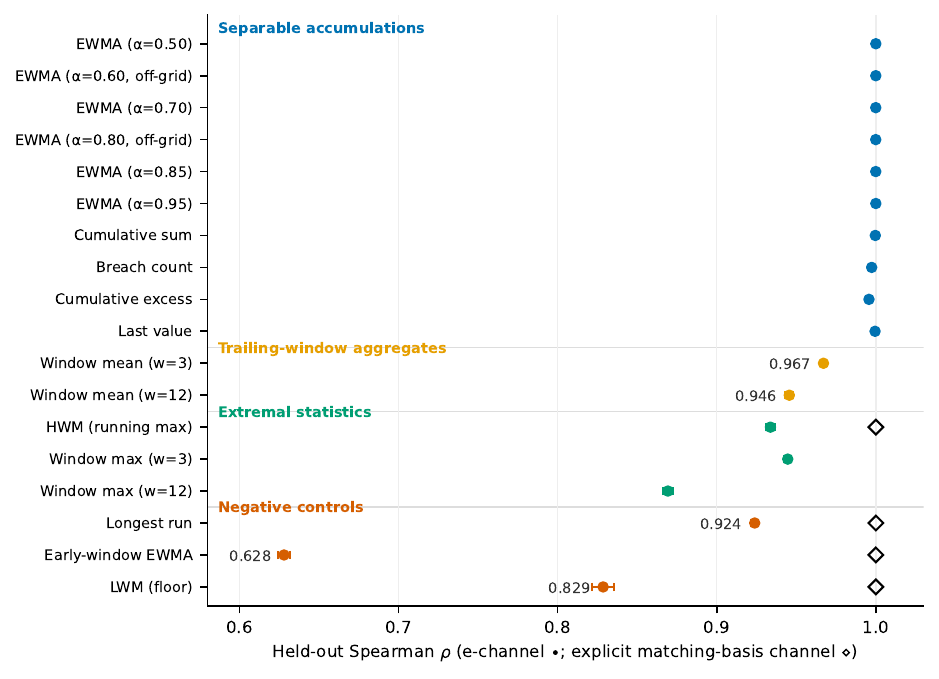}
\caption{The subsumption boundary in one panel: held-out Spearman $\rho$ of the learned e-channel regressed onto each of the eighteen target statistics (dots; whiskers are $\pm 1$ std over five seeds), ordered from separable accumulations to negative controls. The separable family sits at rank equivalence; trailing windows and extremes carry a measurable approximation ceiling; the negative controls fail as designed. Open diamonds mark the explicit matching-basis channels---the $\max$ channel for the HWM, the LWM channel for the floor, the early-window channel for the time-localization control, and the run-length channel for consecutivity---each of which restores its statistic exactly.}
\label{fig:e14-boundary}
\end{figure}

\emph{The separable family is reproduced, not approximated.} On every separable accumulation---EWMA at all six decays, including the two placed off every grid, the cumulative sum, breach count, cumulative excess, and last value---the e-channel reaches Spearman $\rho \ge 0.996$ (at $1.000 \pm 0.000$ for every EWMA decay and the cumulative sum) with $R^2 \ge 0.986$. Where the table reports $\rho = 1.000$, equality is to the three displayed decimals, not exact: residual rank discrepancies below that resolution remain possible. This is the constructive content of E10's misspecification result: a hand-crafted grid must commit to fixed decays and thresholds, while one channel \emph{class} adapts to whichever member the data requires---each table row is a single channel trained to a single target, so the claim is per-member adaptation, not simultaneous representation of the family by one fitted channel. Every statistic in the practice and steelman grids other than the windowed aggregates is functionally inside the learned channel pair---the EWMAs, hinges, counts, and last value in the e-channel, the HWM exactly in the m-channel.

\emph{Windows and extremes are approximated with a measurable ceiling.} Trailing-window means emulate at $\rho = 0.946$--$0.967$: no member of the geometric family realizes a step-shaped weight profile (Appendix~\ref{app:rank-closure}, which also notes that this is the one tier whose rank-closure status remains open), and the best trained geometric approximation is good but not exact---the functional counterpart of E10's finding that window grids and the learned channel trade narrow leads. Because this is the one tier with a measured ceiling and no designed adversary among regimes A--O, the regime-P diagnostic (Section~\ref{sec:experiments}) prices the ceiling predictively: the two-state channel lands $-0.0081 \pm 0.0021$ below the Bayes ceiling of $0.747$ (negative in all five seeds, and $-0.0065/{-}0.0061$ against the practice-grid and raw-lags GBMs, which hold the mechanism feature or its constituent lags and sit within $0.002$ of the ceiling)---a real but modest price, an order of magnitude below the structural failures' E8 gaps---and adding the recent-sum basis restores the ceiling exactly ($-0.0000 \pm 0.0001$), extending the one-basis repair pattern to the window tier. The HWM emulates at $\rho = 0.934$ from the e-channel (the rank-closure approximation) and exactly ($1.000$) from the m-channel, quantifying the learnability dividend of keeping the max explicit; windowed maxima are imperfect for both channels---$\rho = 0.869$--$0.945$ for the e-channel and far lower, $0.398$--$0.737$, for the m-channel---combining both restrictions.\footnote{The m-channel's profile has a one-line explanation: for non-decreasing $g_2$, $\max_t g_2(x_t) = g_2(\max_t x_t)$ (up to the initial state), so every m-channel instance is a monotone reparameterization of the global maximum---which is why it reproduces the HWM exactly and cannot represent windowed maxima or the floor.}

\emph{The negative controls fail as designed, and grade the E8 boundary.} The early-window EWMA---regime K's mechanism---is the hardest target at $\rho = 0.628$, confirming that time localization is the sharpest structural boundary; the explicit early-window channel (the same recurrence with increments accepted only for $t \le t_0$) reproduces it exactly ($\rho = 1.000$, $R^2 = 0.987$), the constructive counterpart of the M9f rung that returns regime K to its Bayes ceiling in E8. The longest run emulates at $\rho = 0.924$ under the e-channel, an imperfect proxy consistent with the half-open predictive gap on regime G; the explicit run-length channel (the current-run/max-run state above the same threshold) reproduces it exactly ($\rho = 1.000$, $R^2 = 1.000$), the constructive counterpart of E8's run rung. The LWM resolves the regime-O reading of Table~\ref{tab:e8-necessity}: the e-channel reaches only $\rho = 0.829$ (and the m-channel $0.263$), so the floor lies functionally beyond the reach of both \emph{trained} channels. The floor does sit in the separable family's idealized rank closure (Proposition~\ref{prop:rank-closure}), but that point is attainable only as steepness grows without bound (Section~\ref{sec:e8-results}), so the near-closed predictive residual on regime O reflects a distribution-specific proxy, not spanning---while the explicit LWM channel reproduces the floor exactly ($\rho = 1.000$, $R^2 = 0.998$). One channel---a parameter-free $\min$ recurrence under the same learned transform class---closes what the largest basis-level gap in the study leaves open. One concession frames all four explicit-channel rows: they are by-construction checks rather than discoveries---the run-length channel's state contains its target, and the LWM channel's rank exactness follows from $\min_t g(x_t) = g(\min_t x_t)$ for any monotone $g$, the mirror of the m-channel identity above---so their evidentiary role is confirming that each repair is admissible and that training attains it, not that the emulation is difficult.

One reading note on the tiers: the boundary is structural, drawn by basis family (inside the separable family, window-shaped, extremal, or outside), not by a threshold on $\rho$---and measured $\rho$ varies non-monotonically across the tiers (the longest run, a control, emulates at $0.924$, above the e-channel's $0.869$ on the 12-step window max, a windowed-extremal target). The families say \emph{why} a target is or is not reachable; the correlations say how well the best proxy does, and the two orderings need not coincide. The two readings are also differently portable: the structural placements are distribution-free---they are proved (Proposition~\ref{prop:rank-closure})---whereas each measured $\rho$ quantifies the best trained proxy on this exposure process and would shift under a differently persistent or heavier-tailed one.

The spanning table also bounds the claim it supports. Functional emulation shows the learned channel can \emph{represent} what hand-crafted grids compute; whether that representation wins predictively remains conditional, as E10 establishes---parity near grid points, advantages off-grid and at small samples, and boundaries under extreme extrapolation. What E11 rules out is the converse position: that hand-crafted monotone aggregates carry temporal content a learned monotone channel cannot express. Outside consecutivity, time localization, and the floor---each repaired by one explicit admissible basis, demonstrated in this table for all three ($\rho = 1.000$ for the run-length, early-window, and LWM channels) and predictively in E8 (Table~\ref{tab:e8-necessity})---engineered aggregates carry none.

\begin{table}[t]
\centering
\caption{E11 functional-emulation spanning table: held-out $R^2$ and Spearman $\rho$ (mean $\pm$ std over five seeds) for single-basis learned channels trained by MSE regression to reproduce each target statistic. The separable family is reproduced to rank equivalence, on- and off-grid; windows and extremes are approximated with a measurable ceiling; the negative controls fail as designed, and the explicit run-length, early-window, and LWM channels restore all three exactly.}
\label{tab:e14-spanning}
\footnotesize
\setlength{\tabcolsep}{4pt}
\begin{tabular}{llcc}
\toprule
Target statistic & Channel & $R^2$ & Spearman $\rho$ \\
\midrule
\multicolumn{4}{l}{\emph{Separable accumulations}} \\
\quad EWMA ($\alpha{=}0.50$) & e-channel & $0.992 \pm 0.004$ & $1.000 \pm 0.000$ \\
\quad EWMA ($\alpha{=}0.60$, off-grid) & e-channel & $0.993 \pm 0.003$ & $1.000 \pm 0.000$ \\
\quad EWMA ($\alpha{=}0.70$) & e-channel & $0.993 \pm 0.004$ & $1.000 \pm 0.000$ \\
\quad EWMA ($\alpha{=}0.80$, off-grid) & e-channel & $0.995 \pm 0.005$ & $1.000 \pm 0.000$ \\
\quad EWMA ($\alpha{=}0.85$) & e-channel & $0.995 \pm 0.006$ & $1.000 \pm 0.000$ \\
\quad EWMA ($\alpha{=}0.95$) & e-channel & $0.995 \pm 0.004$ & $1.000 \pm 0.000$ \\
\quad Cumulative sum & e-channel & $0.995 \pm 0.002$ & $1.000 \pm 0.000$ \\
\quad Breach count & e-channel & $1.000 \pm 0.000$ & $0.997 \pm 0.000$ \\
\quad Cumulative excess & e-channel & $0.993 \pm 0.004$ & $0.996 \pm 0.000$ \\
\quad Last value & e-channel & $0.986 \pm 0.019$ & $0.999 \pm 0.000$ \\
\midrule
\multicolumn{4}{l}{\emph{Trailing-window aggregates}} \\
\quad Window mean ($w{=}3$) & e-channel & $0.940 \pm 0.008$ & $0.967 \pm 0.001$ \\
\quad Window mean ($w{=}12$) & e-channel & $0.915 \pm 0.011$ & $0.946 \pm 0.002$ \\
\midrule
\multicolumn{4}{l}{\emph{Extremal statistics}} \\
\quad HWM (running max) & e-channel & $0.906 \pm 0.007$ & $0.934 \pm 0.003$ \\
\quad HWM (running max) & m-channel & $0.993 \pm 0.013$ & $1.000 \pm 0.000$ \\
\quad Window max ($w{=}3$) & e-channel & $0.880 \pm 0.003$ & $0.945 \pm 0.002$ \\
\quad Window max ($w{=}3$) & m-channel & $0.254 \pm 0.009$ & $0.398 \pm 0.012$ \\
\quad Window max ($w{=}12$) & e-channel & $0.787 \pm 0.013$ & $0.869 \pm 0.003$ \\
\quad Window max ($w{=}12$) & m-channel & $0.656 \pm 0.025$ & $0.737 \pm 0.010$ \\
\midrule
\multicolumn{4}{l}{\emph{Negative controls}} \\
\quad Longest run & e-channel & $0.811 \pm 0.004$ & $0.924 \pm 0.002$ \\
\quad Longest run & run-length channel & $1.000 \pm 0.000$ & $1.000 \pm 0.000$ \\
\quad Early-window EWMA & e-channel & $0.397 \pm 0.018$ & $0.628 \pm 0.004$ \\
\quad Early-window EWMA & early-window channel & $0.987 \pm 0.022$ & $1.000 \pm 0.000$ \\
\quad LWM (floor) & e-channel & $0.819 \pm 0.032$ & $0.829 \pm 0.007$ \\
\quad LWM (floor) & m-channel & $0.086 \pm 0.006$ & $0.263 \pm 0.013$ \\
\quad LWM (floor) & LWM channel & $0.998 \pm 0.001$ & $1.000 \pm 0.000$ \\
\bottomrule
\end{tabular}

\end{table}

\FloatBarrier
\section{Discussion and Limitations}
\label{sec:discussion}

\paragraph{Scope of the synthetic evidence.}
The evaluation is deliberately synthetic: it provides ground-truth monotone mechanisms, exact knowledge of which basis drives each label, the true distance between a generating parameter and every grid value, and the ability to construct adversarial regimes targeting individual bases---none of which is available with real data. Within that scope, the findings are conditional in a specific way. Learned monotone recurrence pays when the mechanism is order-dependent (E2) and when hand-crafted grids are misspecified (E10); the guarantee itself is never the differentiator. Two design constants also bound the scope: every experiment runs at horizon $T = 24$ (the cited bureau products' window), and the rank-closure limits of Proposition~\ref{prop:rank-closure} interact with horizon, so the sharpness of the boundary at substantially longer horizons is unmeasured; class prevalence is calibrated to ${\sim}25\%$ throughout, with the $5\%$-prevalence sensitivity of E10 as the one lower-base-rate check. What the synthetic design cannot establish is prevalence of the mechanisms themselves: whether real portfolios carry structure strong enough, and far enough off any sensible grid, for the misspecification cost to be material. A companion applied study on production credit data \citep{tan2026governed} takes up that question, running the same matched-admissibility protocol on deployed-scale portfolios.

\paragraph{Predictive cost, its confounds, and partial identifiability.}
C3 is phrased as \emph{little or no} predictive cost deliberately, because an apparent constraint cost is easily misattributed. Two confounds matter in practice. The first is representational: with a convex activation, the non-negative-weight function class is convex-only (Section~\ref{sec:architecture}), so a concave generating mechanism produces a persistent deficit that is the price of the activation, not of the constraint---E6 removes it with a non-convex monotone activation, and E9's four-parameter tier exhibits it in its purest form, surviving an unlimited training budget (Table~\ref{tab:e9-capacity}). The second confound is optimization: constrained models can converge more slowly than unconstrained baselines, so aggressive early-stopping rules truncate them disproportionately; stopping patience and training budget are part of the comparison, not fixed background. With both controlled, parity is what we observe: the paired per-regime $95\%$ confidence intervals of Table~\ref{tab:e3-delta} exclude any deficit beyond $0.0017$ AUC against either unconstrained sequence baseline. The same function-class analysis makes interpretability (C5) asymmetric across the channels: $g_2$ feeds a $\max$, is observed at its extremes, and is recovered under every activation, while $g_1$ enters through a discounted sum---distinct $(g_1,\alpha)$ pairs can induce similar distributions of $e_T$---so its recovery is activation-dependent and partial (Table~\ref{tab:e6-recovery}): the guarantee is parameterization-independent, but the interpretability of the learned shape is not.

\paragraph{What the two-state summary cannot express.}
The sharpened form of C7 (Section~\ref{sec:e8-results}; the boundary placed by proof in Proposition~\ref{prop:rank-closure} and quantified by regression in Table~\ref{tab:e14-spanning}) converts a vague expressiveness worry into a concrete audit question---does the risk mechanism plausibly involve consecutivity, time-localized windows, or a persistent floor? In consumer credit the first answer is routinely yes: delinquency states are defined by consecutive missed payments, and sustained-breach counts are shipped trended attributes, so the consecutivity boundary should be expected to bind in practice and the run-length channel is better read as a default member of the state than as a repair. The comparators expose a second route alongside the explicit repairs: a constrained tree over raw lags also crosses the boundary (Section~\ref{sec:e10-results}), at the sample-efficiency price measured there (a generic weight-constrained monotone recurrence does not cross it in any tested form), whereas the explicit channels restore the missing statistics inside the two-state architecture at no such price: the run-length, early-window, and LWM channels each reproduce their statistic exactly ($\rho = 1.000$) in E11 and sit at their regimes' Bayes ceilings in E8 (Table~\ref{tab:e8-necessity}); the trailing-window tier answers the same audit question at smaller stakes, with the recent-sum channel returning the window-price diagnostic (regime P) to its ceiling (Section~\ref{sec:e14}). Extending the state restores the guarantee by the same composition argument as Theorem~\ref{thm:score}.

\paragraph{Contiguous gaps stress the recency state.}
Missingness robustness (C4b) has one structural caveat. The EWMA state is a discounted sum, $e_T=\sum_k \alpha^k g_1(x_{T-k})$, concentrating weight on the most recent steps. A \emph{contiguous} block of zero-imputed steps near the readout therefore forces sustained decay with no interleaved observation to refresh the state, attenuating $e_T$ far more than the same number of missing entries scattered across the sequence---whereas order-invariant aggregates (sum, mean, max) respond only to how many entries are missing, not where. The effect is a structural property of recency weighting, not a monotonicity failure: under scattered (MCAR) missingness the recurrent state retains its predictive advantage over the static aggregate ($+0.010$ AUC at every rate under forward-fill), and lengthening zero-imputed contiguous blocks erodes that advantage to parity at the longest outage we test ($+0.004$ at block length 6, $-0.001$ at block length 8; Figure~\ref{fig:e5-missingness}). This is exactly where the two non-inferiority failures in Table~\ref{tab:e5-noninf} occur. Forward-fill imputation, which carries the last observation through the gap, preserves the advantage at every block length ($+0.010$ at block length 8).

\paragraph{Relation to deployed practice.}
The framing of C8 is a division of labor: the synthetic design measures exactly the quantities that real data cannot---the misspecification-cost curve (knowable only when the generating parameter is known), the sample-efficiency gap against a known Bayes ceiling, and the structural saturation of piecewise-constant ensembles beyond the training support. E10's stress phase also marks a boundary of the \emph{learned} instantiation within that division: at the most extreme severities the best extrapolators are fixed grids feeding neural monotone heads (monotone MLP or calibrated lattice alike), so a deployment expecting routine evaluation far beyond the training support may prefer hand-crafted features inside the same monotone composition. One caveat keeps the stress finding in its lane: in production, scoring far beyond the development support is itself a model-risk event---inputs are commonly capped at development-range percentiles, and population-stability monitoring flags such inputs for review before scores are consumed---so the finding speaks to stress-testing and scenario analysis, not to routine scoring under extreme drift.

\paragraph{Governance properties.}
Two governance properties are independent of that answer. First, parsimony has an operating-cost reading: under SR 11-7-style validation, every admitted feature carries its own conceptual-soundness documentation, ongoing monitoring, and change control, so replacing a multi-column engineered library with a handful of provable states---each a monotone function with plottable transforms---can reduce a recurring per-feature validation workload, not just a parameter count; the offset is that the learned channel adds its own model-level validation and re-fits on retraining, so the net effect is deployment-specific. Second, the upstream-score deployment shape mirrors the sub-score stacking already standard in credit pipelines (behavioral scores and trended-data attributes feeding downstream models), so adopting the learned channel changes what the upstream component is, not where it sits.

\paragraph{How large are these effects?}
The synthetic magnitudes are easiest to read against one another. The recurrent state's advantage over the static aggregate baseline peaks at $+0.044$ AUC (E2; $+0.034$ against the tanh-headed aggregate, Section~\ref{sec:results}), and grid misspecification costs the densified steelman grid up to $0.019$ AUC against its gradient-boosted consumer at off-grid decays (E10; the cost of committing to that grid)---both an order of magnitude above the ${\le}0.0017$ AUC constraint cost bounded by E3. The basis-level gaps of E8 run larger still, up to $+0.143$ AUC on the exposure-floor regime, and are the gaps one explicit admissible basis closes. For an external anchor: since Gini $= 2\,\mathrm{AUC} - 1$, an AUC delta of $0.01$ is two Gini points, a magnitude often treated as material in scorecard champion--challenger comparisons (conventions vary; large books act on less). The E2 peak gap and the E8 basis gaps clear it comfortably---with the caveat that the E2 gap is measured against the deliberately naive static control that carries C2's scientific claim, not against a deployable champion; the deployment-relevant analogue at the same sweep point is E10's sub-$0.004$ best-consumer margin. The worst-case misspecification cost clears the bar too, but only against the single committed grid; the best-consumer envelope falls short of it, and the parity-ladder leans do not reach it.

\paragraph{Uncertainty quality versus cost.}
Corollary~\ref{cor:mcdropout} guarantees that MC dropout preserves monotone semantics, but the guarantee says nothing about the \emph{quality} of the resulting uncertainty, so we measured it in a supplementary ensemble--MC-dropout comparison, labeled S1c after its configuration in the released code (Table~\ref{tab:s1c} in Appendix~\ref{app:results-tables}; protocol in Appendix~\ref{app:repro}). On the mean prediction the two mechanisms are indistinguishable---paired AUC, log loss, Brier, and ECE deltas all within noise on regimes A and C---but the uncertainty itself is not: a five-member deep ensemble's per-example predictive standard deviation correlates with absolute error at Spearman $0.75$--$0.77$ against MC dropout's $0.20$--$0.21$ (paired delta $+0.55 \pm 0.08$ pooled over the ten regime--seed cells, positive in every cell), and the ensemble cuts the area under the risk--coverage curve from $0.24$--$0.26$ to $0.14$--$0.15$, at five times the training cost---exactly the member count. Both mechanisms are monotone per sample by construction (dropout by Corollary~\ref{cor:mcdropout}; ensembles member-wise by Theorem~\ref{thm:score}), so the choice is purely an alignment-versus-compute trade: MC dropout is cheap and semantically safe, while these results favor ensembles of monotone models for practitioners who act on uncertainty---selective review, referral cutoffs.

\paragraph{Future work.}
Four extensions are open. First, mask-aware decay: suspending $\alpha$-decay over unobserved steps would address the contiguous-gap caveat without weakening the monotonicity argument. Second, partial monotonicity: extending the guarantee to settings where only a subset of inputs carries the constraint. Third, the multivariate generalization---vector-valued exposure paths whose increments are learned jointly across channels---and the conditioning of the decay on exogenous context, both of which preserve the guarantee by the same composition arguments. Fourth, a default-inclusion policy for the demonstrated repair channels: the run-length, early-window, LWM, and recent-sum channels are built and proven here as audit-triggered repairs, and what remains open is when a deployment should carry them by default---the selection problem, not the channels.

\section{Conclusion}
\label{sec:conclusion}

This paper argued that the standard evaluation of monotone sequence models measures the wrong quantity for the setting that motivates them: deployed monotone pipelines---hand-crafted monotone aggregates feeding sign-constrained boosting---are already monotone end-to-end by composition, so constrained-versus-unconstrained comparisons price a guarantee the incumbent has for free. The comparison that matters runs at matched admissibility and measures what learning the temporal aggregation is worth.

Built and run, that evaluation returns a boundary rather than a scoreboard. By direct functional regression, a learned monotone recurrent channel spans, member by member, the geometrically weighted separable family of hand-crafted statistics to rank equivalence---learned and engineered monotone features are substitutes on separable structure---and fails at consecutivity and time localization structurally and at the exposure floor as a learnability boundary, each repaired by one explicit admissible basis; trailing-window and windowed-extremal statistics in between are approximated at a measurable ceiling. Predictively, the learned channel is never statistically worse in-distribution at full sample size on the core regimes at its specified capacity---the scope of every parity statement here, which excludes E9's deliberately under-parameterized ablation tiers---and earns its advantage where a committed grid cannot adapt: uncovered decay regions, scarce training data against the highest-dimensional comparator, and stress beyond the training support where grid-fed tree-ensemble scores go flat. Two symmetric boundaries are stated: at the most extreme severities, fixed grids feeding neural monotone heads extrapolate best, and on the boundary regimes, refusing aggregation altogether---raw lags under a constrained tree---is the strongest incumbent, at the sample-efficiency price the evaluation quantifies. No single comparator is dominated on all three advantage axes, so the case for learning is adaptivity across failure modes no committed pipeline covers at once, not uniform dominance.

The instrument that makes these measurements possible is a recurrent architecture whose state consists of classical risk statistics, monotone by construction in every input, under every admissible configuration, and per MC-dropout sample---a governance constraint treated as a property of the hypothesis class rather than a training outcome. What this paper offers is the evaluation standard, the boundary, and the instrument that makes both measurable; whether real portfolios reward the learning is the prevalence question the same protocol can answer on production data.

\section*{Broader Impact Statement}
This work is motivated by credit-risk scoring, a domain with direct consumer impact. In that setting, the monotonicity constraint studied here is generally protective: it prevents a score from penalizing a borrower for reducing exposure, keeps adverse-action reasoning coherent (the per-input recourse argument of Section~\ref{sec:setup}), and makes model behavior auditable under retraining and stochastic inference. Two caveats follow from the same analysis. First, a monotonicity guarantee is not a fairness guarantee: it constrains the direction of feature effects, not calibration or error rates across subpopulations, and a monotone score can still encode disparate impacts that require their own assessment. Second, every result here is synthetic by design; deploying a learned monotone channel in production remains subject to the applicable model-risk-management validation, which this paper's evaluation design is meant to inform, not replace. The experiments involve no human data.

\FloatBarrier

\appendix

\section{Monotonicity of the additional admissible bases}
\label{app:bases}

This appendix proves Proposition~\ref{prop:bases}. Throughout, fix $\tau\ge 0$ and let $x\preceq x'$ denote the coordinatewise order of Section~\ref{sec:setup}.

\begin{lemma}[Last exposure]
\label{lem:last}
$L_T(x)=x_T$ is monotone non-decreasing in each $x_t$.
\end{lemma}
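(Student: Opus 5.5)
The plan is to prove the lemma directly from the coordinatewise order of Section~\ref{sec:setup}, with no induction; the statement is the degenerate base case of the library. Fix $x \preceq x'$, so that $x_t \le x'_t$ for every $t \in \{1,\dots,T\}$. The functional $L_T$ reads a single coordinate, so $L_T(x) = x_T \le x'_T = L_T(x')$ follows by specializing the hypothesis to $t = T$. That is the whole argument for the joint (order-preserving) form of the claim.

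For the per-input phrasing ``non-decreasing in each $x_t$'', I will check each coordinate separately. Fix $t$ and vary only $x_t$ upward, holding the other coordinates fixed.
\begin{itemize}
  \item If $t = T$, the map is the identity in $x_T$, which is non-decreasing.
  \item If $t < T$, the map is constant in $x_t$, and a constant map is trivially non-decreasing.
\end{itemize}
This matches the ``constant in the rest'' phrasing the paper uses for the early-window EWMA, and the later basis lemmas can follow the same pattern.

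I will also add a remark that $L_T$ is the $\alpha \to 0$ member of the EWMA recurrence with $g_1 = \mathrm{id}$. So monotonicity could alternatively be obtained as a limit of Lemma~\ref{lem:ewma}, since pointwise limits of non-decreasing functions are non-decreasing. The direct argument is cleaner, however, so that is the one I will present.

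There is no real obstacle here. The only care needed is to state which form of monotonicity is proved and to note that the coordinatewise form implies the joint form by chaining single-coordinate increases. That chaining remark is worth making once here so the subsequent basis lemmas can cite it.
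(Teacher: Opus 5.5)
Your proof is correct and matches the paper's, which likewise notes that $L_T$ is non-decreasing in $x_T$ and independent of $x_1,\dots,x_{T-1}$. The joint-order version and the $\alpha \to 0^+$ EWMA-limit remark are valid but optional additions.
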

\begin{proof}
$L_T$ is non-decreasing in $x_T$ and independent of $x_1,\dots,x_{T-1}$.
\end{proof}

\begin{lemma}[Cumulative sum]
\label{lem:cumsum}
$C_T(x)=\sum_{t=1}^T x_t$ is monotone non-decreasing in each $x_t$.
\end{lemma}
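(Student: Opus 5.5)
The plan is to argue directly from the coordinatewise order of Section~\ref{sec:setup}, in the same style as Lemma~\ref{lem:last}. Fix $x\preceq x'$, so that $x_t\le x'_t$ for every $t$. I then compare the two sums term by term.

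\emph{Termwise step.} Each summand of $C_T$ depends on a single coordinate. Since the summand at index $t$ is $x_t$ itself, we have $x_t\le x'_t$ for each $t$.

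\emph{Summation step.} Adding these $T$ inequalities gives $C_T(x)=\sum_{t=1}^T x_t\le\sum_{t=1}^T x'_t=C_T(x')$. This is exactly monotonicity with respect to $\preceq$, and it implies monotonicity in each individual coordinate.

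For the per-coordinate reading, I would also note an equivalent argument. When only $x_s$ is increased, the difference $C_T(x')-C_T(x)$ equals $x'_s-x_s\ge 0$, and all other terms cancel.

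As a cross-check, one can instead view $C_T$ as the EWMA recurrence with $\alpha\to1$ and $g_1$ the identity. The inductive argument of Lemma~\ref{lem:ewma} carries over verbatim with $\alpha=1$, since that proof uses only $\alpha\ge0$. I will not rely on this route, however, because Lemma~\ref{lem:ewma} is stated for $0<\alpha<1$.

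There is no genuine obstacle here. The only point to state carefully is that addition preserves the order of inequalities, which needs no sign or non-negativity assumption on the $x_t$. The non-negativity $x_t\ge0$ from the problem setup is therefore not used, and the lemma holds over all of $\mathbb{R}^T$.
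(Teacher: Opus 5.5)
Your proof is correct and takes the same approach as the paper's one-line argument (``increasing any coordinate increases the sum''); your termwise comparison under $x \preceq x'$ just spells it out. Your side remarks are also correct but not needed: non-negativity of the $x_t$ is never used, and the inductive step of Lemma~\ref{lem:ewma} relies only on $\alpha \ge 0$.
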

\begin{proof}
Increasing any coordinate increases the sum.
\end{proof}

\begin{lemma}[Breach count]
\label{lem:breach}
$B_T(x)=\sum_{t=1}^T \mathbf{1}[x_t>\tau]$ is monotone non-decreasing in each $x_t$.
\end{lemma}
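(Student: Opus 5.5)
The plan is to reduce the claim to a statement about a single summand and then sum. Write $B_T(x)=\sum_{t=1}^T \phi_\tau(x_t)$ with $\phi_\tau(u)=\mathbf{1}[u>\tau]$. It suffices to show that $\phi_\tau:\mathbb{R}_{\ge 0}\to\{0,1\}$ is non-decreasing. A sum of coordinatewise non-decreasing functions, each depending on one coordinate, is then non-decreasing under the coordinatewise order $\preceq$ of Section~\ref{sec:setup}.

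The key step is the scalar monotonicity of the indicator. Take $u\le u'$ and split into two cases on $\phi_\tau(u)$. If $\phi_\tau(u)=0$, then $\phi_\tau(u)\le\phi_\tau(u')$ trivially, since $\phi_\tau(u')\in\{0,1\}$. If $\phi_\tau(u)=1$, then $u>\tau$, so $u'\ge u>\tau$ and $\phi_\tau(u')=1$. In either case $\phi_\tau(u)\le\phi_\tau(u')$. This is exactly the informal remark in Section~\ref{sec:regimes}: increasing any $x_t$ can only switch an indicator from $0$ to $1$. It also shows that the strict inequality in $\mathbf{1}[x_t>\tau]$ causes no difficulty.

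Then conclude. For $x\preceq x'$ we have $x_t\le x'_t$ for every $t$, hence $\phi_\tau(x_t)\le\phi_\tau(x'_t)$ termwise, and summing gives $B_T(x)\le B_T(x')$. Equivalently, one can note that $B_T$ is the saturating sum $S_T$ of Proposition~\ref{prop:bases} with $\phi=\phi_\tau$, so the same termwise argument covers both statistics.

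There is no real obstacle here. The only point to flag is that $\tau$ must be a fixed constant, independent of $x$, as the paper stresses. If $\tau$ were a data-dependent quantile, raising one $x_t$ could raise $\tau$ and switch other indicators off, and the termwise argument would break.
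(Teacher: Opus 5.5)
Your proof is correct and takes the same approach as the paper: each indicator $\mathbf{1}[x_t>\tau]$ is non-decreasing in its own coordinate, since it can only switch from $0$ to $1$, and summing preserves this under the coordinatewise order. Your explicit two-case check for the strict inequality and your remark that $\tau$ must be a fixed constant only spell out that argument in more detail.
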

\begin{proof}
Each indicator $\mathbf{1}[x_t>\tau]$ is non-decreasing in $x_t$ (it can only switch from 0 to 1 as $x_t$ increases), and sums of non-decreasing functions are non-decreasing.
\end{proof}

\begin{lemma}[Cumulative excess]
\label{lem:excess}
$E_T(x)=\sum_{t=1}^T (x_t-\tau)^+$ is monotone non-decreasing in each $x_t$.
\end{lemma}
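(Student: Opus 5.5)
The plan is to reduce the statement to the same two facts used in Lemma~\ref{lem:breach}: each summand is a non-decreasing function of a single coordinate, and a sum of functions that are each non-decreasing in every coordinate is itself non-decreasing in every coordinate. The summand here is $h(x_t) := (x_t - \tau)^+ = \max(x_t - \tau, 0)$, with $\tau \ge 0$ fixed.

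The first step is to show that $h$ is non-decreasing on $\mathbb{R}_{\ge 0}$. Take $u \le u'$. Then $u - \tau \le u' - \tau$, since subtracting a constant preserves order. Applying $\max(\cdot, 0)$ also preserves order, because $\max$ is non-decreasing in each argument, the same property used in the proof of Lemma~\ref{lem:hwm}. Hence $h(u) \le h(u')$.

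The second step is to fix $x \preceq x'$. By the first step, $(x_t - \tau)^+ \le (x'_t - \tau)^+$ holds termwise for every $t$. Summing over $t = 1, \dots, T$ gives $E_T(x) \le E_T(x')$, which is the claim.

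There is no genuine obstacle. The only point needing care is that $\tau$ is a fixed constant rather than data-dependent. If $\tau$ were, say, a quantile of $x$, then raising one coordinate could raise $\tau$ and lower other summands; the paper makes the same remark in Section~\ref{sec:regimes}. Note also that $E_T$ is the saturating sum $S_T$ of Proposition~\ref{prop:bases} with $\phi = h$, so the same two-step argument covers that case.
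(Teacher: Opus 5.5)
Your proof is correct and is the paper's argument: the paper's one-line proof also notes that $(x-\tau)^+=\max(0,x-\tau)$ is non-decreasing and that sums preserve monotonicity. You spell out the same two steps explicitly, first order-preservation of the hinge and then termwise summation over $x \preceq x'$.
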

\begin{proof}
$(x-\tau)^+=\max(0,x-\tau)$ is non-decreasing, and sums preserve monotonicity.
\end{proof}

\begin{lemma}[Longest run above threshold]
\label{lem:run}
The longest-run statistic $R_T(x)$, defined as the maximum length of a run of consecutive indices with $x_t>\tau$, is monotone non-decreasing in each $x_t$.
\end{lemma}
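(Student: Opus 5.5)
The plan is a set-inclusion argument that reduces the longest-run statistic to the exceedance set, exactly as Lemma~\ref{lem:breach} reduces the breach count to its indicators. For a path $x$, define the exceedance set $S(x)=\{t : x_t>\tau\}\subseteq\{1,\dots,T\}$. Then $R_T(x)$ depends on $x$ only through $S(x)$:
\[
R_T(x)=\max\{\,b-a+1 : 1\le a\le b\le T,\ \{a,\dots,b\}\subseteq S(x)\,\},
\]
with the maximum of the empty set taken as $0$. The proof then has two steps.

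\textbf{Step 1: the exceedance set grows with $x$.} Suppose $x\preceq x'$. Each indicator $\mathbf{1}[x_t>\tau]$ is non-decreasing in $x_t$, as used in Lemma~\ref{lem:breach}. So $t\in S(x)$ gives $x'_t\ge x_t>\tau$, which means $t\in S(x')$. Hence $S(x)\subseteq S(x')$.

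\textbf{Step 2: the run length grows with the exceedance set.} Take any run $\{a,\dots,b\}\subseteq S(x)$. By Step 1 it is also contained in $S(x')$, so it is a run in $x'$ of the same length. The maximum over runs of $x'$ therefore ranges over a superset of the candidate lengths available to $x$. It follows that $R_T(x)\le R_T(x')$.

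Monotonicity in each single coordinate $x_t$ is the special case where $x'$ differs from $x$ only at index $t$.

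I expect no real obstacle. The one point needing care is that $R_T$ is not a sum or composition of per-coordinate monotone maps, so the composition arguments of Lemmas~\ref{lem:cumsum}--\ref{lem:excess} do not apply directly. This is why the proof goes through the set $S(x)$ rather than through separability.

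The one thing to check is the edge case where no index exceeds $\tau$, so $R_T(x)=0$. There the convention $\max\emptyset=0$ keeps the inequality valid.

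A strict threshold can also be handled with $\ge\tau$. The argument is identical, since the only property used is that $\{x_t:x_t>\tau\}$ is an up-set in each coordinate.
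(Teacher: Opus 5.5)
Your proposal is correct and follows the paper's own argument. The paper likewise observes that raising any coordinate can only enlarge the exceedance set $\{t : x_t>\tau\}$, and that enlarging this set cannot shorten any existing run, so the maximum run length cannot decrease; your version makes the two steps and the $\max\emptyset=0$ convention explicit.
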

\begin{proof}
Increasing an element can only add indices to the above-threshold set $\{t: x_t>\tau\}$. Enlarging this set cannot shorten any existing run, so the maximum run length cannot decrease.
\end{proof}

\begin{lemma}[Saturating transforms]
\label{lem:saturating}
For any non-decreasing $\phi:\mathbb{R}_{\ge 0}\to\mathbb{R}$ (e.g., $\phi(x)=\log(1+x)$ or $\phi(x)=x/(x+c)$ for $c>0$), the aggregate $S_T(x)=\sum_{t=1}^T \phi(x_t)$ is monotone non-decreasing in each $x_t$.
\end{lemma}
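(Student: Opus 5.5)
The plan is to argue coordinatewise, exactly as in Lemmas~\ref{lem:cumsum}--\ref{lem:excess}. Fix an index $s$ and two sequences $x \preceq x'$ that differ only in coordinate $s$, with $x_s \le x'_s$. Since the coordinatewise order is generated by such single-coordinate increases, this case suffices: any pair $x\preceq x'$ is joined by a chain of at most $T$ single-coordinate moves, each of which is upward. Chaining the resulting inequalities by transitivity then yields $S_T(x)\le S_T(x')$ for all $x\preceq x'$.

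For the single-coordinate step, I split the sum into the term at $s$ and the rest, writing $S_T(x)=\phi(x_s)+\sum_{t\ne s}\phi(x_t)$. The remaining sum is identical for $x$ and $x'$. The term at $s$ satisfies $\phi(x_s)\le\phi(x'_s)$ because $\phi$ is non-decreasing, so $S_T(x)\le S_T(x')$.

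A slicker alternative, which I would mention, avoids the chain altogether: for general $x\preceq x'$ every summand already satisfies $\phi(x_t)\le\phi(x'_t)$, and summing these $T$ inequalities gives the claim directly.

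The examples in the statement only need $\phi$ to be non-decreasing on $\mathbb{R}_{\ge0}$. For $\log(1+x)$ this is immediate. For $x/(x+c)=1-c/(x+c)$ with $c>0$ it also holds, since $c/(x+c)$ is decreasing. Boundedness or saturation of $\phi$ plays no role in the argument.

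There is no real obstacle here. The only point needing care is that $\phi$ need not be continuous or strictly increasing, and nothing in the argument uses either property. The lemma also subsumes Lemmas~\ref{lem:cumsum}, \ref{lem:breach} and \ref{lem:excess} as the special cases $\phi(x)=x$, $\phi(x)=\mathbf{1}[x>\tau]$ and $\phi(x)=(x-\tau)^+$.
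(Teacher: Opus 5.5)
Your proof is correct and essentially the paper's: the paper's proof is the one-line observation that each summand $\phi(x_t)$ is non-decreasing and sums preserve monotonicity, which is your ``slicker alternative.'' The single-coordinate chain reduction is valid but unnecessary, since the lemma as stated only asks for monotonicity in each $x_t$ separately.
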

\begin{proof}
$\phi$ is non-decreasing, and sums preserve monotonicity.
\end{proof}

\begin{lemma}[Low-water mark]
\label{lem:lwm}
For monotone non-decreasing $g$, the low-water mark $\ell_t = \min\big(\ell_{t-1},\, g(x_t)\big)$ with fixed $\ell_0$ is monotone non-decreasing in each $x_t$.
\end{lemma}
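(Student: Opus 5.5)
The plan mirrors the induction of Lemma~\ref{lem:hwm}, with $\min$ in place of $\max$. I would prove by induction on $t$ the stronger statement that $x_{1:t}\preceq x'_{1:t}$ implies $\ell_t(x_{1:t})\le \ell_t(x'_{1:t})$. The non-decreasing claim in each individual $x_t$ is then the special case where $x$ and $x'$ differ in one coordinate.

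\emph{Base case.} Since $\ell_0$ is a fixed constant independent of the input, $\ell_0(x)=\ell_0(x')$.

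\emph{Inductive step.} Assume $\ell_{t-1}(x)\le \ell_{t-1}(x')$ and $x_t\le x'_t$. Because $g$ is non-decreasing, $g(x_t)\le g(x'_t)$. The binary $\min(\cdot,\cdot)$ is non-decreasing in each argument: if $a\le a'$ and $b\le b'$, then $\min(a,b)\le a\le a'$ and $\min(a,b)\le b\le b'$, so $\min(a,b)\le\min(a',b')$. Applying this with $a=\ell_{t-1}(x)$, $a'=\ell_{t-1}(x')$, $b=g(x_t)$ and $b'=g(x'_t)$ gives $\ell_t(x)\le \ell_t(x')$.

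Taking $t=T$ gives the statement for $\ell_T$. For the closed form of Proposition~\ref{prop:bases}, I would unroll the recursion to $\ell_T=\min(\ell_0,\min_t g(x_t))$. This equals $\min_t g(x_t)$ whenever $\ell_0$ is chosen at least as large as every attainable $g(x_t)$, for example $\ell_0=+\infty$, or $\ell_0=\sup g$ when $g$ is bounded.

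I expect no real obstacle; the proof is one line once the order-preservation of $\min$ is stated. The only point needing care is this initialization remark. It matters because with a finite $\ell_0$ the recursion clips the closed form at $\ell_0$, not because monotonicity could fail for any fixed $\ell_0$.
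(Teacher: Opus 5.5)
Your proof is correct and follows the paper's argument: the same induction as Lemma~\ref{lem:hwm} with $\min$ in place of $\max$, resting on $\min(\cdot,\cdot)$ being non-decreasing in each argument. Your initialization remark also matches the paper's. An $\ell_0$ above the range of $g$ on the inputs (a large finite sentinel in the implementation) gives exactly $\min_t g(x_t)$, and a smaller $\ell_0$ gives the capped variant $\min(\ell_0, \min_t g(x_t))$, which is monotone by the same argument.
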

\begin{proof}
Identical to Lemma~\ref{lem:hwm} with $\min$ in place of $\max$: $\min(\cdot,\cdot)$ is non-decreasing in each argument. With $\ell_0$ chosen above the range of $g$ on the inputs (the implementation uses a large finite sentinel), the recurrence computes exactly $\ell_T = \min_t g(x_t)$, the statistic of Proposition~\ref{prop:bases}; for smaller $\ell_0$ it computes the capped variant $\min(\ell_0, \min_t g(x_t))$, which is monotone by the same argument.
\end{proof}

\begin{lemma}[Early-window EWMA]
\label{lem:earlyewma}
For monotone non-decreasing $g$, $\alpha \in (0,1)$, and fixed $t_0 < T$, the early-window EWMA $e_{t_0}(x) = \sum_{t \le t_0} \alpha^{t_0 - t} g(x_t)$ is monotone non-decreasing in each $x_t$.
\end{lemma}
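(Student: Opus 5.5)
The plan is to reduce the statement to the EWMA argument already given in Lemma~\ref{lem:ewma}, treating the indices $t \le t_0$ and $t > t_0$ separately. Concretely, we write $e_{t_0}(x)$ as the value at step $t_0$ of the recurrence $e_t = \alpha e_{t-1} + g(x_t)$ with $e_0 = 0$. This is exactly the EWMA state of Section~\ref{sec:architecture} with $g_1 = g$, stopped at time $t_0$ rather than $T$. Unrolling the recurrence gives the closed form $\sum_{t \le t_0} \alpha^{t_0 - t} g(x_t)$ stated in the lemma.

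\textbf{Dependence on $x_1,\dots,x_{t_0}$.} Lemma~\ref{lem:ewma} holds for all $t$, in particular $t = t_0$. Hence $x_{1:t_0} \preceq x'_{1:t_0}$ implies $e_{t_0}(x) \le e_{t_0}(x')$. As a direct check on the closed form: each summand $\alpha^{t_0-t} g(x_t)$ is a positive multiple of a non-decreasing function of a single coordinate, since $\alpha^{t_0-t} > 0$ for $\alpha \in (0,1)$. A sum of such terms is non-decreasing in each coordinate.

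\textbf{Dependence on $x_t$ with $t > t_0$.} No summand involves these inputs, so $e_{t_0}$ is constant in each of them. A constant function is in particular non-decreasing, exactly as in the proof of Lemma~\ref{lem:last} for the coordinates $x_1,\dots,x_{T-1}$.

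Combining the two cases, any $x \preceq x'$ on the full horizon gives $e_{t_0}(x) \le e_{t_0}(x')$. I do not expect any step to be a real obstacle. The only point needing care is to state explicitly that the positivity of the weights $\alpha^{t_0-t}$ is what preserves the order of each summand, and that the unobserved coordinates enter trivially. This matches the ``constant in the rest'' clause of Proposition~\ref{prop:bases}.
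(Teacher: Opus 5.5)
Your proof is correct and is the paper's argument: the paper's one-line proof is exactly your direct check, namely a non-negatively weighted sum of non-decreasing single-coordinate functions that is constant in $x_{t_0+1:T}$. Your reduction to Lemma~\ref{lem:ewma} through the recurrence stopped at $t_0$ is valid but redundant; it confirms the same fact rather than giving a different route.
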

\begin{proof}
It is a non-negatively weighted sum of non-decreasing functions of the inputs it observes, and constant in $x_{t_0+1:T}$.
\end{proof}

\section{The separable family and its rank closure}
\label{app:rank-closure}

This appendix formalizes the spanning claims that Sections~\ref{sec:models}, \ref{sec:results}, and \ref{sec:e14} rely on. Fix the horizon $T$ and let
\begin{equation*}
\mathcal{S} \;=\; \Big\{\, x_{1:T} \mapsto \textstyle\sum_{t=1}^T \alpha^{T-t}\, g(x_t) \;:\; \alpha \in (0,1),\ g:\mathbb{R}_{\ge 0}\to\mathbb{R} \text{ monotone non-decreasing} \,\Big\}
\end{equation*}
be the idealized family the e-channel parameterizes ($g$ ranges over all monotone non-decreasing transforms, not only those a bounded monotone MLP realizes). Because ranking metrics are invariant to strictly increasing transforms of the score, the operative notion is rank agreement: say a statistic $S$ lies in the \emph{rank closure} of $\mathcal{S}$ if for every finite set $X$ of sequences on which $S$ takes distinct values, some member of $\mathcal{S}$ orders $X$ exactly as $S$ does. Restricting to distinct-value sets leaves the emulating member free to break the target's ties; for the one heavily tied target below (the breach count) this is immaterial for evaluation, because when labels depend on the path only through the target, tied sequences are label-exchangeable and any tie-breaking by a function of the path leaves expected AUC unchanged---the same exchangeability argument the results sections use for regime L's cap.

\begin{proposition}[Membership and rank closure of the separable family]
\label{prop:rank-closure}
\begin{enumerate}
  \item[(i)] Every EWMA $\sum_t \alpha_0^{T-t} x_t$ with $\alpha_0 \in (0,1)$ is a member of $\mathcal{S}$, as is every transformed EWMA $\sum_t \alpha_0^{T-t} g(x_t)$.
  \item[(ii)] The cumulative sum, the cumulative excess, the last value, and---on sequences whose entries avoid the threshold $\tau$---the breach count lie in the rank closure of $\mathcal{S}$.
  \item[(iii)] The HWM $\max_t x_t$ lies in the rank closure of $\mathcal{S}$.
  \item[(iv)] The LWM $\min_t x_t$ lies in the rank closure of $\mathcal{S}$.
  \item[(v)] The early-window EWMA $e_{t_0}$ with $1 \le t_0 < T$ and non-constant $g$ lies outside the rank closure of $\mathcal{S}$; a fortiori, no member realizes it. (A constant $g$ makes $e_{t_0}$ constant, hence vacuously inside.)
  \item[(vi)] For $T \ge 3$, the longest run above a threshold lies outside the rank closure of $\mathcal{S}$.
\end{enumerate}
\end{proposition}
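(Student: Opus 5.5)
The plan is to handle the six parts separately. Parts (i)--(iv) are positive statements, each proved by exhibiting members of $\mathcal{S}$ (exact, or along a parameter limit). Parts (v)--(vi) are negative statements, each proved by a two-sequence witness set $X$ on which every member of $\mathcal{S}$ is forced into the wrong weak order.

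Throughout, I use one reduction. $X$ is finite and the target takes distinct values on it, so it suffices to find a family $\{S_\theta\}\subset\mathcal{S}$ and a strictly increasing normalization $\psi_\theta$ with $\psi_\theta\circ S_\theta \to$ target pointwise on $X$. Once the error is below half the minimal gap of the target on $X$, the orders agree.

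\textbf{Part (i).} This is immediate: take $\alpha=\alpha_0$ and $g$ the identity, or the given $g$.

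\textbf{Part (ii).} Apply the reduction with $\alpha\to 1$ and $\psi$ the identity. For the cumulative sum take $g(x)=x$, for the cumulative excess $g(x)=(x-\tau)^+$, and for the breach count $g(x)=\mathbf{1}[x>\tau]$. In each case $\sum_t\alpha^{T-t}g(x_t)\to\sum_t g(x_t)$ uniformly on the finite $X$. (The threshold-avoidance proviso is not even needed once $g$ may be any non-decreasing function; it only matters if one insists on continuous $g$, in which case a steep continuous ramp between $\tau$ and the nearest data value does the job.) For the last value take $g(x)=x$ and $\alpha\to 0$. The sum equals $x_T+O(\alpha)$, where the $O(\alpha)$ term is bounded by $\alpha\,T\max_X\max_t x_t$, so it eventually falls below the minimal gap in $x_T$ on $X$.

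\textbf{Part (iii).} Fix $\alpha=\tfrac12$ and take $g(x)=x^p$. With $\psi_p(s)=s^{1/p}$, the value $\psi_p(S_p(x))=\bigl(\sum_t w_t x_t^p\bigr)^{1/p}$ has fixed weights $w_t=\alpha^{T-t}\in(0,1]$. It lies between $w_{\min}^{1/p}\max_t x_t$ and $T^{1/p}\max_t x_t$. Both factors tend to $1$, so it converges to $\max_t x_t$ uniformly on $X$.

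\textbf{Part (iv).} I mirror (iii). Take $g(x)=c^{-p}-(x+c)^{-p}$, which is non-decreasing and admissible, with $c>0$. Then $S_p(x)=Wc^{-p}-\sum_t w_t(x_t+c)^{-p}$, where $W=\sum_t w_t$ is a constant. So ranking by $S_p$ is the same as ranking by
\[
\Bigl(\sum_t w_t(x_t+c)^{-p}\Bigr)^{-1/p},
\]
which is a strictly increasing function of $S_p$ on the range attained. This expression converges to $\min_t(x_t+c)$ by the same sandwich as in (iii), and $\min_t(x_t+c)$ is a shift of the LWM.

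I expect the only delicate point in (iii)--(iv) to be checking that the normalizations $\psi$ are strictly increasing on the attained range and that the convergence is uniform. Both follow from finiteness of $X$ and positivity of the data; for (iii), strict positivity can be arranged by a shift if zeros occur.

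\textbf{Part (v).} Since the target $g$ is non-constant, pick $a<b$ with $g(a)<g(b)$. Let $y^1$ equal $a$ everywhere except $b$ at $t_0$, and let $y^3$ equal $a$ everywhere except $b$ at $T$; this is possible since $t_0<T$. Then $e_{t_0}(y^1)-e_{t_0}(y^3)=g(b)-g(a)>0$, so the values on $X=\{y^1,y^3\}$ are distinct. For any member $(\alpha,h)$ of $\mathcal{S}$,
\[
S(y^3)-S(y^1)=\bigl(1-\alpha^{T-t_0}\bigr)\bigl(h(b)-h(a)\bigr)\ge 0 .
\]
So no member orders $X$ as $e_{t_0}$ does: a tie also fails strict agreement.

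\textbf{Part (vi).} Take levels $L\le\tau<H$. Let $y^1$ have $H$ at positions $T-2,T-1$ and $L$ elsewhere, so its longest run is $2$. Let $y^2$ have $H$ at positions $T-2,T$ and $L$ elsewhere, so its longest run is $1$; this needs $T\ge 3$. For any member,
\[
S(y^2)-S(y^1)=(1-\alpha)\bigl(h(H)-h(L)\bigr)\ge 0,
\]
which contradicts $R(y^1)>R(y^2)$.

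The negative parts are therefore short once the witnesses are found. The structural idea behind both witnesses is the same: moving a fixed amount of mass to a later index weakly increases every member of $\mathcal{S}$, because the weights $\alpha^{T-t}$ increase in $t$. The early window and the longest run can both be made to prefer the earlier placement.
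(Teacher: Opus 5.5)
Your proof is correct and follows the paper's decomposition part by part. Part (i) is exact membership. Parts (ii)--(iv) are parameter limits resting on the same two-sided bound $\alpha^{T-1}M^p \le \sum_t \alpha^{T-t}x_t^p \le T M^p$ and its reciprocal-power mirror. Parts (v)--(vi) are two-sequence witnesses built on the observation that every member's weights $\alpha^{T-t}$ increase in $t$, so moving a bump later can only weakly raise its score.

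The differences are local. In (iii)--(iv) you normalize by $p$-th roots (resp.\ $(-1/p)$-th powers), so the normalized members converge uniformly on $X$ to the target, and one half-the-minimal-gap lemma then covers (ii)--(iv) together. The paper instead compares pairs through the ratio bound $(T/\alpha^{T-1})(M/M')^p \to 0$ and takes one $p$ large enough for the finitely many pairs; the two are equivalent in substance. Your witnesses differ from the paper's only cosmetically: in (v) a bump at $t_0$ versus at $T$ over a background level $a$, where the paper uses a bump at $1$ versus $T$ over zeros; in (vi) background $L$, where the paper uses $(\dots,H,H,0)$ versus $(\dots,H,0,H)$ over zeros.

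The one genuine sharpening is the breach count. Since $\mathcal{S}$ admits every non-decreasing $g$, you can take $g=\mathbf{1}[x>\tau]$ itself. If you want continuity, an $X$-dependent ramp from $0$ at $\tau$ to $1$ at the smallest data value above $\tau$ does the same job. Either choice removes the threshold-avoidance proviso entirely. Indeed, because $B_T$ is integer-valued, any $\alpha$ with $T(1-\alpha^{T-1})<1/2$ works for every $X$ at once. The proviso in the statement is an artifact of the paper's sigmoid, which is centered at $\tau$ and takes the value $1/2$ there.
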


\begin{proof}
(i) Take $\alpha = \alpha_0$ and the target transform for $g$ (the identity for the plain EWMA); both are admissible.

(ii) \emph{Cumulative sum:} with $g$ the identity, $\sum_t \alpha^{T-t} x_t \to \sum_t x_t$ uniformly on any bounded set of sequences as $\alpha \to 1^-$. On a finite set $X$ where the cumulative sums are distinct, choose $\alpha$ close enough to $1$ that the uniform error is below half the minimum gap between distinct target values; ranks then agree exactly. \emph{Cumulative excess:} identical with $g$ the hinge $(x-\tau)^+$. \emph{Last value:} as $\alpha \to 0^+$ the statistic converges to $g(x_T) = x_T$ with $g$ the identity; the same finite-gap argument applies. \emph{Breach count:} take $g_k(x) = 1/(1 + e^{-k(x-\tau)})$, an admissible sigmoid. If every entry of every sequence in $X$ differs from $\tau$ by at least $\delta > 0$, then $\sum_t g_k(x_t) \to \sum_t \mathbf{1}[x_t > \tau]$ uniformly on $X$ as $k \to \infty$; combine with $\alpha \to 1^-$ as above.

(iii) Take $g(x) = x^p$ with $p \ge 1$ (admissible). For a sequence with maximum $M > 0$,
$\alpha^{T-1} M^p \le \sum_t \alpha^{T-t} x_t^p \le T M^p$.
Hence for two sequences with maxima $M < M'$, the ratio of their statistics is at most $\big(T/\alpha^{T-1}\big) (M/M')^p \to 0$ as $p \to \infty$, so for $p$ large enough the statistics order as the maxima (a pair containing the all-zero sequence, $M = 0$, is ordered trivially: its statistic is $0$ against a strictly positive one). A finite set $X$ with distinct maxima involves finitely many pairs, so one finite $p$ orders all of them.

(iv) Fix $c > 0$ and take $g_{p,c}(x) = c^{-p} - (x+c)^{-p}$, which is non-negative and monotone non-decreasing on $\mathbb{R}_{\ge 0}$, hence admissible. Then
$\sum_t \alpha^{T-t} g_{p,c}(x_t) = c^{-p} \sum_t \alpha^{T-t} - \sum_t \alpha^{T-t} (x_t + c)^{-p}$,
and the first term does not depend on the sequence, so the member ranks $X$ exactly as $-\sum_t \alpha^{T-t}(x_t+c)^{-p}$ does. Writing $m = \min_t x_t + c$, the same two-sided bound as in (iii) gives $\alpha^{T-1} m^{-p} \le \sum_t \alpha^{T-t} (x_t+c)^{-p} \le T\, m^{-p}$; for two sequences with floors $m < m'$, the sequence with the lower floor eventually has the strictly larger subtracted mass and hence the strictly lower score. For $p$ large enough the member therefore orders $X$ exactly as the floor does---the mirror of (iii).

(v) Since the target's $g$ is non-constant, pick $H > 0$ with $g(H) > g(0)$, and let $x^1 = (H, 0, \dots, 0)$ and $x^2 = (0, \dots, 0, H)$. The target takes distinct values on this pair: writing $\alpha_0$ for the target's decay, $e_{t_0}(x^1) - e_{t_0}(x^2) = \alpha_0^{t_0 - 1} \big(g(H) - g(0)\big) > 0$, using $t_0 \ge 1$ and $t_0 < T$ (so the last coordinate lies outside the window). For any member of $\mathcal{S}$---with its own transform $\tilde{g}$ and decay $\alpha$, not necessarily the target's---
$S(x^1) - S(x^2) = \big(\tilde{g}(H) - \tilde{g}(0)\big)\big(\alpha^{T-1} - 1\big) \le 0$,
since $\alpha \in (0,1)$ puts the largest geometric weight on the last coordinate and $\tilde{g}$ is non-decreasing. Every member therefore either ties the pair or orders it \emph{opposite} to the target, so no member orders $\{x^1, x^2\}$ as $e_{t_0}$ does---at any parameter values, hence also in the limit. The statistic is nonetheless available to the architecture constructively: the explicit early-window channel (the prefix-restricted recurrence, admissible by the composition argument) reproduces $e_{t_0}$ exactly. The repair \emph{must} extend the state, because no separable member or separable limit reaches the target.

(vi) Let $u = (0, \dots, 0, H, H, 0)$ and $v = (0, \dots, 0, H, 0, H)$ with $H > \tau$: the longest run above $\tau$ is $2$ for $u$ and $1$ for $v$. For any member of $\mathcal{S}$ with transform $\tilde{g}$ and decay $\alpha$,
$S(u) - S(v) = \big(\tilde{g}(H) - \tilde{g}(0)\big)(\alpha - 1) \le 0$,
so again no member orders the pair as the run does. Adjacency information is invisible to every geometrically weighted separable accumulation: the run's repair, like the early window's, must extend the state.
\end{proof}

A parallel non-membership remark places the window tier of E11. The trailing-window mean $W_w(x) = w^{-1} \sum_{t=T-w+1}^{T} x_t$ with $1 < w < T$ is not a member of $\mathcal{S}$: exact equality $\sum_t \alpha^{T-t} \tilde{g}(x_t) = W_w(x)$ as functions would, varying $x_T$ alone, force $\tilde{g}(x) = x/w + c_0$, and varying $x_{T-1}$ alone force $\alpha \tilde{g}(x) = x/w + c_1$---impossible for $\alpha \in (0,1)$ and non-constant $\tilde{g}$. Unlike the run and the early window, however, its \emph{rank-closure} status is open: every member weights later coordinates more heavily, as the window does, so no two-sequence witness of the (v)--(vi) kind can settle it. The window tier's measured ceiling ($\rho = 0.946$--$0.967$ in Table~\ref{tab:e14-spanning}) is therefore reported without a representability verdict: it bounds what training delivers, and family non-membership says the reproduction cannot be exact, but whether some member sequence attains rank agreement in the limit is not resolved here.

One exact-membership statement completes the boundary trichotomy: the floor is not a \emph{member} of $\mathcal{S}$ either, so (iv) is a genuine closure statement. Fix $0 < a \le M$ and consider the four sequences $(x_1, M, \dots, M, x_T)$ with $x_1, x_T \in \{0, a\}$. Every member of $\mathcal{S}$ is additively separable across time steps, so the alternating sum $S(a,a) - S(a,0) - S(0,a) + S(0,0)$ vanishes, while for $\min_t x_t$ the same alternating sum equals $a > 0$. Together, (iv)--(vi) prove the trichotomy the results sections use: the floor lies outside the family but inside its rank closure (a learnability boundary), while the early window and the run lie outside the rank closure itself (a representability boundary).

Two readings matter for the paper. First, (iii) and (iv) are why the architecture keeps both extremal channels explicit: each extreme is a limit point a bounded learned transform approximates only at finite steepness, which the explicit HWM and LWM recurrences supply exactly---the learnability dividend E8 and E11 measure. Second, the limits in (ii)--(iv) require parameters ($\alpha \to 1$, $k \to \infty$, $p \to \infty$) that grow with the evaluation set and its gaps, so rank-closure membership is a statement about the family's expressive reach, not a claim that a trained bounded network attains it; the E11 spanning table measures how much of that reach training actually delivers.

\section{Structured monotonicity probes (P1--P5)}
\label{app:probes}

We use structured probes to localize common failure modes in unconstrained sequence models:
\begin{itemize}
  \item \textbf{P1 (single-step increase):} increase a single time index $t$ by a fixed amount.
  \item \textbf{P2 (early/late block increase):} increase a contiguous block either early or late in the sequence.
  \item \textbf{P3 (peak-aligned increase):} increase the time index containing the current maximum (peak) exposure.
  \item \textbf{P4 (micro-increments):} apply a small increment to a single low-exposure index at a time.
  \item \textbf{P5 (stress transition):} increase exposures around large observed jumps, a detected proxy for latent stress-state transitions.
\end{itemize}

\paragraph{Paired construction for P4 (micro-increments).}
Given a base sequence $x_{1:T}$, the probe perturbs each member of the low-exposure index set
\begin{equation}
S \;=\; \{\, t : x_t \le Q_{0.5}(x_{1:T}) \,\}
\end{equation}
(the per-sequence median) one at a time: the comparison sequence $\tilde{x}^{(t)}$ sets $\tilde{x}^{(t)}_t = x_t + \delta$ with $\delta = 0.05$ and leaves every other entry unchanged, giving $|S|$ ordered pairs per base sequence. By construction $x \preceq \tilde{x}^{(t)}$, so any monotone model must satisfy $f(\tilde{x}^{(t)}) \ge f(x)$. The pairs verify \emph{fine-grained} monotonicity in the flat region of the input: the increment is more than an order of magnitude smaller than the block and peak increases of P1--P3, so a model can pass the large-perturbation probes yet fail under increments at the scale of its score-surface noise, the failure mode P4 targets.

\section{Protocol details}
\label{app:protocols}

\paragraph{Distribution-shift severity grids (E4).}
Four shift families are swept over severity; all are ordering-preserving pathwise ($x' \succeq x$ row by row), so the frozen pre-shift labels remain valid. \emph{Scale:} all exposures multiplied by a factor in $\{1.5, 2, 3, 5, 8\}$. \emph{Upper-tail amplification:} the excess above a single pooled median threshold $\tau$ (quantile $0.5$ of all test exposures) is amplified as $x' = x + (\gamma - 1)(x - \tau)^+$ with $\gamma \in \{2, 3, 5, 10\}$---one piecewise-linear monotone map (slope $1$ below $\tau$, $\gamma$ above) applied identically to every entry. \emph{Spike frequency:} extra log-normal shocks are added at rate $0.05\,(m - 1)$ per step for $m \in \{2, 5, 10\}$ (where $0.05$ is the top of the generative per-sequence spike-rate range), adding extreme events without displacing the bulk of the distribution---a designed null case for ranking robustness. \emph{Mixture contamination:} a seed-controlled random $(1-\pi)$ fraction of test rows is replaced by the $\times 3$-scaled version of \emph{those same rows}, the rest kept at base, for $\pi \in \{0.9, 0.7, 0.5\}$; because each row keeps its index and is displaced upward pathwise, the existing labels remain exactly valid---a fresh draw from a different generator would invalidate them.

\paragraph{Missingness grid (E5).}
Masks are applied after generation, and labels are always computed from the unmasked sequence. Mechanisms: MCAR at rates $\{5, 10, 15, 20, 30\}\%$, and block missingness with contiguous block lengths $\{2, 4, 6, 8\}$. Each masked dataset is completed with zero, forward-fill, or mean imputation before being passed to the models, yielding a full mechanism $\times$ imputation grid. Every (mechanism, level, imputation) cell is evaluated under three independent mask draws per seed, and reported statistics average over draws within seed.

\paragraph{Adversarial operator.}
The gradient-based adversarial operator searches for a monotone perturbation that decreases the score. For each base sequence $x$, a non-negative perturbation $\delta$ is initialized uniformly on $[0,1]^T$ and updated for $50$ projected-gradient steps of size $0.1$ to minimize $f(x + \delta)$, with $\delta$ clamped to $\delta \ge 0$ after every step so that $x \preceq x + \delta$ throughout. The search runs with $3$ random restarts on each of $50$ held-out base sequences per (model, regime, seed)---the $150$ adversarial pairs per seed of E1---and a violation is recorded when the final score falls below the base score by more than the float64 tolerance.

\paragraph{Bayes ceilings.}
The Bayes AUC quoted for a regime (regimes G, K, and O in E8) is the AUC of the regime's true label index---the generating logit mechanism evaluated on a fresh $20{,}000$-sequence reference pool from the same exposure process, with the index checked to reproduce the registry labels exactly---which is rank-equivalent to the Bayes-optimal score under the label model, so no model consuming the same sequences can exceed it except by chance. Because rung AUCs are computed on each seed's $10{,}000$-sequence test split while the fresh-pool ceiling is not, a rung can nominally exceed the quoted ceiling by split noise; the ``at the ceiling'' claims are therefore verified by the paired protocol reported with Table~\ref{tab:e8-necessity}: the Bayes index is evaluated on each seed's own test split and the per-seed paired delta (rung minus Bayes index) carries the verdict. The Phase II oracle of E8 is distinct: a model fit over the full statistic pool including the non-monotone diagnostics, as stated in Section~\ref{sec:experiments}.

\paragraph{E8 incremental-MI debiasing.}
The Phase I conditional mutual information $I(b; y \mid \{\text{EWMA, HWM}\})$ is estimated by a stratified histogram plug-in, which inflates the estimate in small conditioning strata---worst for discrete bases such as the breach count, whose mass concentrates in few bins. Uncorrected, the estimator assigns the breach count $0.04$--$0.16$ nats of incremental information even on regime A, whose label depends on the EWMA functional alone and where the true conditional information is ${\approx}0$. The headline values are therefore debiased against a permutation null ($20$ within-stratum permutations of the candidate basis, whose mean estimates the finite-sample inflation), with the raw estimate recorded alongside. The permutation null removes most but not all of the artifact (the regime-A breach-count value falls to $0.014$--$0.086$ across seeds; the residual is coarse-stratification leakage, and the AUC ablations confirm it carries no label information), so debiased incremental MI is used for \emph{ranking} candidate bases---genuine signals stand out clearly, e.g.\ the longest run retains ${\sim}0.12$ nats on regime G---and small absolute values are not interpreted as real conditional information.

\paragraph{E8 Phase II subset search.}
Greedy forward selection (one basis added per step, subsets up to size $8$) scores each candidate basis subset with an $\ell_2$-regularized logistic regression on the z-scored basis values, fitted on the training split and scored on the validation split; the ${\ge}95\%$-of-oracle decision is likewise taken on validation scores. Every reported metric for the decided subsets is then computed on the test split, which the selection never touches, so the winner's curse of subset search does not reach the reported numbers.

\paragraph{E2 decay recalibration.}
Regime A is swept over a ten-point grid of generating decays,
\begin{equation*}
\alpha^* \in \{0.10,\ 0.30,\ 0.50,\ 0.60,\ 0.70,\ 0.80,\ 0.85,\ 0.90,\ 0.95,\ 0.99\},
\end{equation*}
with the grid densest near the expected peak and right shoulder of the gap curve. For each $\alpha^*$, the label coefficients are recalibrated so that (i) class prevalence is held at approximately $25\%$ and (ii) the logit slope on the \emph{standardized} EWMA feature is held fixed; only the recency structure of the label therefore varies along the sweep. The recalibration reproduces regime A's default coefficients at $\alpha^* = 0.9$.

\paragraph{E10 threshold-sweep recalibration.}
Regime D's generating threshold is swept over
\begin{equation*}
\tau^* \in \{1.0,\ 2.0,\ 3.5,\ 5.0,\ 7.5,\ 10.17,\ 12.5,\ 15.0,\ 20.0\},
\end{equation*}
crossing and exceeding the fixed grid thresholds $\{2, 5, 10\}$ used by the practice and steelman featurizations; $10.17$ is the regime's published reference threshold. For each $\tau^*$, the stress feature is $(\mathrm{HWM}(x) - \tau^*)^+$ on a fixed reference pool; the slope is rescaled so the standardized-stress coefficient matches the reference regime's, and the intercept is bisection-solved for the target prevalence (approximately $25\%$), mirroring the decay recalibration above. A guard rejects thresholds so deep in the HWM tail that the stress feature degenerates on the reference pool.

\paragraph{E10 sample-size grid.}
The training split is truncated to $n \in \{1000,\ 2000,\ 5000,\ 10000,\ 50000\}$ sequences on regimes A and C, with validation and test fixed at full size.

\paragraph{E11 emulation protocol.}
Emulation uses sequences of length $24$ drawn from the same exposure process as the predictive experiments; labels play no role. Each seed generates $50{,}000$ training, $10{,}000$ validation, and $10{,}000$ test sequences. Every target statistic is standardized by its train-split mean and scale (a monotone affine map, so ranks are unaffected), each channel is trained by MSE regression onto the standardized target with the same optimizer settings as the predictive experiments, and held-out $R^2$ and Spearman $\rho$ are computed on the test split.

\paragraph{Data-generating-process constants.}
Tables~\ref{tab:dgp-exposure} and~\ref{tab:dgp-regimes} list every numeric constant of the synthetic suite, making the paper self-contained for replication. The exposure-process parameters are drawn \emph{per sequence} from the uniform ranges of Table~\ref{tab:dgp-exposure}, so the population is heterogeneous by construction. All experiments use $T = 24$ and, in full mode, $50{,}000$ / $10{,}000$ / $10{,}000$ train/validation/test sequences per seed with seeds $\{42, \dots, 46\}$. In Table~\ref{tab:dgp-regimes}, $E$ denotes $\mathrm{EWMA}_{0.9}$ of the raw exposures ($e_t = 0.9\, e_{t-1} + x_t$, unnormalized), $\tilde{E}$ the same recurrence on $\log(1+x_t)$, $M = \max_t x_t$, $L = \min_t x_t$, $\mathrm{br}_\tau = \sum_t \mathbf{1}[x_t > \tau]$, $\mathrm{ex}_\tau = \sum_t (x_t - \tau)^+$, $\mathrm{run}_\tau$ the longest run of $\mathbf{1}[x_t > \tau]$, and $\mathrm{RS}_k = \sum_{t=T-k+1}^{T} x_t$. Labels are drawn as $y \sim \mathrm{Bernoulli}(\sigma(\eta))$. Every threshold and cutoff is a fixed constant (never a per-sequence quantile), which is what preserves monotonicity; the one quantity calibrated on the generated batch is regime L's cap $C$, the $0.95$ quantile of $E$---an outer, input-independent constant by the time labels are drawn; regime N's $\tau_N$ is a population quantile frozen as the literal constant $18.34$ before generation.

\begin{table}[t]
\centering
\caption{Exposure-process constants: each parameter is drawn independently per sequence from the stated uniform range. The latent state $z_t \in \{0,1\}$ (normal/stress) follows a Markov chain; log-exposure follows the state-dependent AR(1) of Section~\ref{sec:data}; spikes add $\exp(\mathcal{N}(\mu_u, \sigma_u^2))$ with per-step probability $p_{\text{spike}}$.}
\label{tab:dgp-exposure}
\footnotesize
\begin{tabular}{llll}
\toprule
Parameter & Range & Parameter & Range \\
\midrule
$p_{01}$ (normal$\to$stress) & $U(0.05, 0.15)$ & $\sigma_0$ (normal log-std) & $U(0.15, 0.35)$ \\
$p_{10}$ (stress$\to$normal) & $U(0.10, 0.30)$ & $\sigma_1 - \sigma_0$ (std gap) & $U(0.15, 0.40)$ \\
$\Pr[z_1 = 1]$ (initial stress) & $U(0.10, 0.30)$ & $p_{\text{spike}}$ (per-step) & $U(0.01, 0.05)$ \\
$\rho_{\mathrm{AR}}$ (AR(1) persistence) & $U(0.50, 0.90)$ & $\mu_u$ (spike log-mean) & $U(1.0, 2.0)$ \\
$\mu_0$ (normal log-mean) & $U(0.20, 0.80)$ & $\sigma_u$ (spike log-std) & $U(0.30, 0.70)$ \\
$\mu_1 - \mu_0$ (mean gap) & $U(0.80, 1.60)$ & & \\
\bottomrule
\end{tabular}
\end{table}

\begin{table}[t]
\centering
\caption{Label-regime logits $\eta$ with their exact constants (notation defined in the accompanying paragraph). Intercepts are fixed constants pre-calibrated to a class prevalence of approximately $25\%$ at $T = 24$; regime I draws a per-sequence mixture indicator $z \sim \mathrm{Bernoulli}(0.5)$. Regime P is the window-price diagnostic of E11, not a member of the fifteen-regime suite: its slope follows the standardized-signal convention anchored to regime A (the E2/E10 recalibration recipe), its window $k = 6$ matches a practice-grid window, and its Bayes ceiling is $0.747$ on the shared test splits.}
\label{tab:dgp-regimes}
\footnotesize
\begin{tabular}{ll}
\toprule
Regime & Logit $\eta$ \\
\midrule
A (recency) & $0.376\, \tilde{E} - 4.82$ \\
B (peak) & $0.108\, M - 2.65$ \\
C (mixed) & $0.266\, \tilde{E} + 0.077\, M - 4.84$ \\
D (threshold stress) & $0.117\, (M - 10.17)^+ - 1.75$ \\
E (diminishing returns) & $1.438 \log(1{+}E) + 1.621 \log(1{+}M) - 10.08$ \\
F (cumulative burden) & $0.164 \sum_t \log(1{+}x_t) - 5.20$ \\
G (persistence) & $0.484\, \mathrm{run}_{3.0} - 3.24$ \\
H (severity$\times$duration) & $0.0097\, \mathrm{ex}_{3.0} \log(1{+}\mathrm{br}_{3.0}) - 1.93$ \\
I (substitution) & $z\,(0.040\, E - 3.38) + (1{-}z)(0.077\, M - 3.38)$ \\
J (breach count) & $0.353\, \mathrm{br}_{3.0} - 3.63$ \\
K (early warning) & $0.080\, \mathrm{EWMA}_{0.9}(x_{1:T/3}) - 2.37$ \\
L (saturation) & $0.085 \min(E, C) - 3.59$, \quad $C = Q_{0.95}(E)$ \\
M (multi-scale) & $0.033\, E + 0.063\, M + 0.052\, \mathrm{RS}_4 - 3.64$ \\
N (peak prominence) & $3.0\, (M - 18.34)^+ - 2.39$ \\
O (exposure floor) & $5.66\, L - 2.08$ \\
\midrule
P (window price; diagnostic) & $0.0587\, \mathrm{RS}_6 - 2.24$ \\
\bottomrule
\end{tabular}
\end{table}

\paragraph{PGL training details.}
The penalty-trained baselines minimize the objective of Equation~\eqref{eq:pgl} with monotone pairs sampled on the fly: for every training example $x$ in a minibatch, five comparison sequences $x' = x + \delta$ are drawn, where each $\delta$ applies a non-negative increment with magnitude drawn from $\mathrm{Uniform}(0.1, 1.0)$ to a random subset of roughly $20\%$ of the time indices, so that $x \preceq x'$ by construction. We use penalty weight $\lambda_{\mathrm{mono}} = 1$ and margin $\varepsilon = 10^{-6}$; in the implementation the hinge term is squared for smoother gradients, and all other optimization settings match the unconstrained baselines. The penalty reduces but does not eliminate violations, which is precisely the contrast measured by E1.

\section{Additional results tables}
\label{app:results-tables}

Table~\ref{tab:e3-full} complements the AUC comparison of Table~\ref{tab:e3-auc} with log loss, Brier score, and expected calibration error for the same models on regimes A--E. The orderings broadly mirror AUC---best-per-regime cells differ across metrics at within-noise margins---and the constrained model's calibration is indistinguishable from the unconstrained baselines.

Table~\ref{tab:s1c} reports the ensemble-operator comparison of Section~\ref{sec:operators}: a deep ensemble of five independently trained monotone exposure RNNs against the MC-dropout variant ($K = 20$ samples), on regimes A and C with paired seeds and shared test splits. Both mechanisms are monotone per sample by construction (dropout by Corollary~\ref{cor:mcdropout}; ensembles member-wise by Theorem~\ref{thm:score}), so the table isolates uncertainty \emph{quality}: predictive metrics of the mean prediction, the Spearman correlation between per-example predictive standard deviation and absolute error, and the area under the risk--coverage curve (AURC; most-uncertain examples abstained first). Members and samples are combined identically (probability-space mean and standard deviation), so the contrast is the uncertainty mechanism alone; the last row records the training-cost trade.

\begin{table}[H]
\centering
\caption{S1c: deep ensemble ($N = 5$) versus MC dropout ($K = 20$) for the monotone exposure RNN on regimes A and C (mean $\pm$ std over five seeds; $\Delta$ is the paired per-seed difference, ensemble minus MC dropout). Both arms are monotone per sample by construction; the comparison isolates uncertainty quality and its training cost.}
\label{tab:s1c}
\scriptsize
\setlength{\tabcolsep}{1.9pt}
\begin{tabular}{lcccccc}
\toprule
 & \multicolumn{3}{c}{Regime A} & \multicolumn{3}{c}{Regime C} \\
Metric & MC drop. & Ensemble & $\Delta$ (ens $-$ MC) & MC drop. & Ensemble & $\Delta$ (ens $-$ MC) \\
\midrule
AUC & $0.8396 \pm 0.0043$ & $0.8397 \pm 0.0043$ & $0.0001 \pm 0.0002$ & $0.8534 \pm 0.0045$ & $0.8535 \pm 0.0045$ & $0.0001 \pm 0.0002$ \\
Log loss (NLL) & $0.4057 \pm 0.0058$ & $0.4055 \pm 0.0060$ & $-0.0002 \pm 0.0002$ & $0.3845 \pm 0.0042$ & $0.3842 \pm 0.0043$ & $-0.0003 \pm 0.0002$ \\
Brier & $0.1280 \pm 0.0019$ & $0.1279 \pm 0.0019$ & $-0.0000 \pm 0.0001$ & $0.1199 \pm 0.0012$ & $0.1198 \pm 0.0012$ & $-0.0000 \pm 0.0000$ \\
ECE & $0.0106 \pm 0.0011$ & $0.0115 \pm 0.0016$ & $0.0009 \pm 0.0013$ & $0.0110 \pm 0.0021$ & $0.0114 \pm 0.0018$ & $0.0004 \pm 0.0012$ \\
Uncertainty--error Spearman & $0.201 \pm 0.015$ & $0.752 \pm 0.103$ & $0.551 \pm 0.102$ & $0.210 \pm 0.015$ & $0.766 \pm 0.074$ & $0.556 \pm 0.075$ \\
AURC (lower better) & $0.2556 \pm 0.0051$ & $0.1483 \pm 0.0178$ & $-0.1073 \pm 0.0169$ & $0.2389 \pm 0.0058$ & $0.1355 \pm 0.0147$ & $-0.1034 \pm 0.0144$ \\
Training cost (s) & $191 \pm 1$ & $947 \pm 75$ & $\times 5.0$ & $267 \pm 14$ & $1321 \pm 13$ & $\times 5.0$ \\
\bottomrule
\end{tabular}

\end{table}

\begin{table}[H]
\centering
\caption{E3 log loss, Brier score, and ECE (mean $\pm$ std over five seeds) for the core model set and the calibrated-lattice comparator on regimes A--E; best per regime and metric in bold (lower is better; ties at the printed precision bolded jointly). Column abbreviations as in Table~\ref{tab:e3-auc}.}
\label{tab:e3-full}
\scriptsize
\setlength{\tabcolsep}{1.2pt}
\begin{tabular}{lccccccc}
\toprule
Regime & Mono.\ exp.\ RNN & Gen.\ mono.\ RNN & Mono.\ agg. & Lattice agg. & Unc.\ agg. & Unc.\ RNN & Unc.\ LSTM \\
\midrule
\multicolumn{8}{l}{\emph{Log loss}} \\
A & $0.4055 \pm 0.0060$ & $0.4238 \pm 0.0053$ & $0.4445 \pm 0.0067$ & $0.4283 \pm 0.0062$ & $0.4264 \pm 0.0067$ & $0.4053 \pm 0.0056$ & $\mathbf{0.4048 \pm 0.0059}$ \\
B & $0.4605 \pm 0.0066$ & $0.4901 \pm 0.0061$ & $\mathbf{0.4603 \pm 0.0068}$ & $0.4605 \pm 0.0065$ & $0.4605 \pm 0.0067$ & $0.4647 \pm 0.0064$ & $0.4625 \pm 0.0069$ \\
C & $\mathbf{0.3843 \pm 0.0043}$ & $0.3971 \pm 0.0030$ & $0.3981 \pm 0.0039$ & $0.3944 \pm 0.0038$ & $0.3942 \pm 0.0046$ & $0.3871 \pm 0.0036$ & $0.3850 \pm 0.0047$ \\
D & $0.4743 \pm 0.0080$ & $0.4968 \pm 0.0073$ & $\mathbf{0.4733 \pm 0.0083}$ & $0.4738 \pm 0.0083$ & $\mathbf{0.4733 \pm 0.0082}$ & $0.4776 \pm 0.0086$ & $0.4748 \pm 0.0079$ \\
E & $0.3647 \pm 0.0053$ & $0.3926 \pm 0.0054$ & $0.3797 \pm 0.0071$ & $0.3706 \pm 0.0060$ & $0.3733 \pm 0.0065$ & $0.3661 \pm 0.0053$ & $\mathbf{0.3643 \pm 0.0054}$ \\
\midrule
\multicolumn{8}{l}{\emph{Brier}} \\
A & $0.1279 \pm 0.0019$ & $0.1335 \pm 0.0016$ & $0.1413 \pm 0.0022$ & $0.1366 \pm 0.0021$ & $0.1358 \pm 0.0021$ & $0.1279 \pm 0.0017$ & $\mathbf{0.1278 \pm 0.0018}$ \\
B & $\mathbf{0.1463 \pm 0.0025}$ & $0.1568 \pm 0.0030$ & $\mathbf{0.1463 \pm 0.0025}$ & $\mathbf{0.1463 \pm 0.0024}$ & $0.1464 \pm 0.0024$ & $0.1477 \pm 0.0025$ & $0.1470 \pm 0.0024$ \\
C & $\mathbf{0.1199 \pm 0.0012}$ & $0.1242 \pm 0.0009$ & $0.1249 \pm 0.0010$ & $0.1236 \pm 0.0011$ & $0.1236 \pm 0.0012$ & $0.1208 \pm 0.0009$ & $0.1201 \pm 0.0012$ \\
D & $0.1509 \pm 0.0032$ & $0.1591 \pm 0.0035$ & $\mathbf{0.1507 \pm 0.0033}$ & $0.1508 \pm 0.0033$ & $\mathbf{0.1507 \pm 0.0032}$ & $0.1521 \pm 0.0034$ & $0.1511 \pm 0.0031$ \\
E & $\mathbf{0.1152 \pm 0.0018}$ & $0.1220 \pm 0.0018$ & $0.1187 \pm 0.0022$ & $0.1173 \pm 0.0020$ & $0.1179 \pm 0.0020$ & $0.1157 \pm 0.0018$ & $\mathbf{0.1152 \pm 0.0018}$ \\
\midrule
\multicolumn{8}{l}{\emph{ECE}} \\
A & $0.0113 \pm 0.0021$ & $0.0298 \pm 0.0032$ & $0.0314 \pm 0.0042$ & $0.0120 \pm 0.0018$ & $0.0139 \pm 0.0070$ & $\mathbf{0.0101 \pm 0.0030}$ & $0.0115 \pm 0.0022$ \\
B & $0.0112 \pm 0.0026$ & $0.0416 \pm 0.0058$ & $0.0114 \pm 0.0027$ & $\mathbf{0.0106 \pm 0.0033}$ & $0.0119 \pm 0.0026$ & $0.0145 \pm 0.0061$ & $0.0113 \pm 0.0017$ \\
C & $0.0117 \pm 0.0015$ & $0.0212 \pm 0.0016$ & $0.0178 \pm 0.0021$ & $0.0088 \pm 0.0021$ & $0.0109 \pm 0.0041$ & $0.0118 \pm 0.0026$ & $\mathbf{0.0084 \pm 0.0021}$ \\
D & $0.0130 \pm 0.0054$ & $0.0420 \pm 0.0012$ & $0.0110 \pm 0.0036$ & $0.0120 \pm 0.0041$ & $\mathbf{0.0100 \pm 0.0036}$ & $0.0155 \pm 0.0060$ & $0.0104 \pm 0.0027$ \\
E & $0.0123 \pm 0.0025$ & $0.0386 \pm 0.0031$ & $0.0319 \pm 0.0018$ & $0.0105 \pm 0.0022$ & $0.0188 \pm 0.0077$ & $0.0099 \pm 0.0039$ & $\mathbf{0.0084 \pm 0.0034}$ \\
\bottomrule
\end{tabular}

\end{table}

\begin{table}[H]
\centering
\caption{Pooled monotonicity-violation totals for the unconstrained baselines quoted in the E5 and E9 prose: per capacity tier and pooled over the E9 sweep (both regimes, five seeds), and pooled over the E5 missingness grid (regime C). The static unconstrained aggregate has no capacity knob, so its identical per-tier counts restate one fitted model evaluated once per tier; it is not part of the E5 model set. Every monotone-by-construction model records zero violations in every corresponding cell (Table~\ref{tab:e9-capacity}; Section~\ref{sec:results}).}
\label{tab:baseline-violations}
\small
\begin{tabular}{lrrrrrr}
\toprule
 & \multicolumn{5}{c}{E9 capacity sweep (per tier and pooled)} & E5 grid \\
\cmidrule(lr){2-6}
Model & tiny & small & medium & large & total & total \\
\midrule
Unconstrained RNN & 324 & 926 & 1{,}713 & 4{,}172 & 7{,}135 & 2{,}373 \\
Unconstrained LSTM & 1{,}352 & 1{,}168 & 1{,}188 & 794 & 4{,}502 & 2{,}425 \\
Unconstrained aggregate & 1{,}958 & 1{,}958 & 1{,}958 & 1{,}958 & 7{,}832 & -- \\
\bottomrule
\end{tabular}

\end{table}

\section{Reproducibility details}
\label{app:repro}

\paragraph{Data and seeds.}
Every experiment uses $50{,}000$ training, $10{,}000$ validation, and $10{,}000$ test sequences per (regime, length) dataset, and five seeds ($42$--$46$) that govern both data generation and training; compared models always share identical splits within a (regime, seed) cell.

\paragraph{Optimization.}
All neural models train with Adam at learning rate $10^{-3}$ and batch size $256$ under a $100$-epoch ceiling, with early stopping on validation loss (patience $20$, minimum improvement $10^{-4}$; the small non-zero threshold lets slowly converging constrained models stop once improvement falls below noise level rather than running to the ceiling). Classification experiments use binary cross-entropy. The E10 sample-size sweep raises the epoch ceiling so that early stopping, not the budget, decides termination (Section~\ref{sec:experiments}); the E9 capacity study reports a training-adequacy audit and a cap-raising diagnostic for the one flagged cell. No per-model learning-rate or width search is performed: constrained and unconstrained models share the optimizer settings above, so every comparison runs at a matched training protocol rather than per-model tuning, and E9 sweeps the one remaining axis---capacity---to confirm that the parity verdicts are not an artifact of the shared defaults.

\paragraph{Gradient-boosting and lattice comparators.}
The constrained LightGBM comparators (E10) train with the binary objective, at most $500$ boosting rounds with early stopping on validation AUC (patience $50$ rounds), $31$ leaves per tree, learning rate $0.05$, an increasing monotone constraint on every feature with the \texttt{advanced} (penalization-free) constraint method, and deterministic single-threaded tree construction. The calibrated monotone lattices use one monotone calibrator per input feature---a convex combination of $8$ shifted sigmoids in log space, $c_i(x) = \sum_k w_{ik}\, \sigma(\beta_{ik}(\log(1+x_i) - t_{ik}))$ with softmax weights $w_{ik}$, non-negative slopes $\beta_{ik}$, and learnable knots $t_{ik}$ initialized across the input range, the smooth counterpart of a piecewise-linear calibrator---feeding a multilinear lattice on the unit hypercube whose M\"obius coefficients are constrained non-negative; lattice dimension equals the feature count of the grid it consumes. As with the neural models, these settings are shared library defaults across all regimes and both grids---no per-model or per-regime tuning is performed---so the sub-$0.01$ deltas of E10 are read against a fixed, disclosed configuration rather than a tuned incumbent; a per-seed validation-selected regularization grid over both grid GBMs (Section~\ref{sec:e10-results}) confirms the parity-ladder verdicts are insensitive to this choice. One stopping-criterion asymmetry is disclosed: the boosted comparators early-stop on validation AUC (the metric configured above) while the neural models early-stop on validation loss; the asymmetry mildly favors the GBMs on the headline metric, i.e., it is conservative for the learned channel's reported wins and anti-conservative only for the D/E tie verdicts.

\paragraph{Model tiers.}
The four capacity tiers of E9 are configuration presets: the tiny tier fixes identity transforms with a monotone linear head (four parameters); the small, medium, and large tiers use monotone MLPs for $g_1$, $g_2$, and $F$ with hidden layers $(8)$, $(16, 8)$, and $(64, 32)$ respectively (tanh activations, non-negative weights via softplus parameterization). The unconstrained RNN and LSTM baselines use hidden sizes $8$, $16$, $32$, and $128$ across the tiers ($1$ recurrent layer for tiny/small, $2$ for medium/large), and the generic monotone RNN the same hidden sizes; the medium tier ($2$-layer, hidden $32$ for the baselines; $(16, 8)$ monotone MLPs for the proposed model) is the default for E1--E8. One baseline asymmetry is disclosed: the static monotone aggregate's head and the generic monotone RNN use ReLU activations (the library defaults), which under non-negative weights are convex-only, whereas the proposed model's monotone components use tanh. The head-class share of the affected gaps is measured directly and carried in-line with the E2 and E3 results (at most $+0.0096$ on the E2 sweep; immaterial on regime C, the regime of E4 and E5, so the robustness verdicts are unaffected). The generic monotone RNN's sensitivities are likewise measured where they are load-bearing: the tanh-component arm and the weight-constrained Elman arm (hidden-to-hidden weights softplus-parameterized, tanh step activation, same cells and seeds) are reported per boundary regime in Table~\ref{tab:e10-boundary}; neither changes the boundary verdict. Per-tier parameter counts appear in Table~\ref{tab:e9-capacity}. The MC-dropout variant of E1 and the ensemble study S1c place dropout with rate $0.1$ inside the monotone MLP components. E1 verifies each of $K = 20$ pinned masks separately; S1c averages $K = 20$ stochastic passes.

\paragraph{Monotonicity verification.}
The random, adversarial, and coordinatewise checks run in float64 with a relative tolerance $\varepsilon = 10^{-6}$: a pair counts as a violation when $f(x') < f(x) - \varepsilon \max(|f(x)|, 1)$, so reported violations are genuine order inversions rather than floating-point noise. The structured probes P1--P5 run in the models' float32 inference precision with an absolute tolerance of $10^{-6}$, and the monotone models' zero-violation results are reported at those tolerances. The random operator draws perturbations $\delta \sim \mathrm{U}(0, 1)^T$ elementwise (the same scale as the adversarial initialization) and adds them to held-out base sequences under the per-experiment budgets. Tree-ensemble comparators are verified by the random, coordinatewise, and structured-probe operators; the gradient-based adversarial operator does not apply to them and is reported as not applicable.

\paragraph{Bootstrap and hardware.}
All two-level paired bootstrap intervals use $1{,}000$ resamples (seeds resampled at the outer level, test examples on shared indices at the inner level). All experiments run on CPU (a consumer Apple-silicon workstation; no GPU is used anywhere in the suite), one process per seed; the training-cost figures of Table~\ref{tab:s1c} are wall-clock seconds on this hardware.

\end{document}